\newtheorem{thrm}{Theorem}
\newtheorem{prop}{Proposition}
\newtheorem {corol}{Corollary}
\newtheorem{rmk}{Remark}
\newtheorem*{ex}{Example}
\newtheorem{lm}{Lemma}
\newcommand{\E}{\mathbb{E}}
\newcommand{\R}{\mathbb{R}}
\newcommand{\N}{\mathbb{N}}
\newcommand{\PP}{\mathbb{P}}
\newcommand{\V}{\mathrm{Var}}
\newcommand{\be}{\begin{eqnarray}}
\newcommand{\ee}{\end{eqnarray}}
\newcommand{\beq}{\begin{eqnarray*}}
\newcommand{\eeq}{\end{eqnarray*}}
\DeclareMathOperator{\Tr}{Tr}
\title{Gaussian field on the symmetric group: prediction and learning}
 \author[1]{F. Bachoc  \thanks{francois.bachoc@math.univ-toulouse.fr}}
\author[2]{B. Broto\thanks{baptiste.broto@cea.fr}}
\author[1]{F. Gamboa\thanks{fabrice.gamboa@math.univ-toulouse.fr}}
\author[1]{J-M. Loubes\thanks{jean-michel.loubes@math.univ-toulouse.fr}}
\affil[1]{Institut de Math\'ematiques de Toulouse, Universit\'e Paul Sabatier,
118 route de Narbonne, F-31062 Toulouse Cedex 9}
\affil[2]{CEA, LIST, Universit\'e Paris-Saclay, F-91120, Palaiseau, France}
\begin{document}
\maketitle

\begin{abstract}
In the framework of the supervised learning of a real function defined on an abstract space $\mathcal X$, Gaussian processes are widely used. The Euclidean case for $\mathcal{X}$ is well known and has been widely studied. In this paper, we explore the less classical case where  $\mathcal X$ is the non commutative finite group of permutations (namely the so-called symmetric group $S_N$). We provide an application to Gaussian process based optimization of Latin Hypercube Designs. We also extend our results to the case of partial rankings.
\end{abstract}

\section{Introduction} \label{s:intro}
The problem of ranking a set of items is a fundamental task in today's data driven world. Analysing observations which are not quantitative variables but rankings has been often studied in social sciences. It  has  also become a popular problem in statistical learning thanks to the generalization of the use of automatic recommendation systems. Rankings are labels that model an order over a finite set $E_N:=\{1,\dots,N\}$. Hence, an observation is a set of preferences between these $N$ points. It is thus a one to one relation $\sigma$ acting from $E_N$ onto $E_N$. In other words, $\sigma$ lies in the finite symmetric group $S_N$ of all permutations of $E_N$. More precisely, assume that we have a finite set $X=\{x_1,\cdots,x_N\}$ and we have to order the elements of $X$. A ranking on $X$ is a statement of the form
\begin{equation}\label{eq:ranking_def}
x_{i_1}\succ x_{i_2} \succ\cdots\succ x_{i_N},
\end{equation}
where all the $i_j, j=1\cdots,N$ are different. 
We can associate to this ranking the permutation $\sigma$ defined by $\sigma(i_k)=k$. Reversely, to a permutation $\sigma$, we can associate the following ranking
\begin{equation}
x_{\sigma^{-1}(1)}\succ x_{\sigma^{-1}(2)} \succ\cdots \succ x_{\sigma^{-1}(N)}.
\end{equation}
 We refer to the works of Douglas E. Critchlow (see for example \cite{critchlow_probability_1991,critchlow_rank_1992,critchlow_ranking_1993}) for an introduction to rankings, together with various results.

%since for all $i\in \{1,\dots,n\}$, $\sigma(i)$ describes the new order of $i$ given by the observations. \vskip .1in

Our aim is to predict outputs corresponding to permutations inputs. For instance, the permutation input can correspond to an ordering of tasks, in applications. In a workflow management system, there may be a large number of tasks that may be done in different orders but are all necessary to achieve the goal.  Workflow prediction or optimization problems currently occur in fields such as grid computing \cite{yu2005cost}, and logistics \cite{christopher2016logistics}.

Another example of application is given by the maintenance of machines in a  supply line. Machines in a supply line need to be tuned or monitored in order to optimize the production of a good. The machines can be tuned in different orders, each corresponding to a permutation and these choices have an impact on the quality of the production of the goods, measured by a quantitative variable $Y$, for instance the amount of defects in the produced goods. Hence, the objective of the model will thus be to forecast the outcome of a specific order for the maintenance  of the machines in order to optimize the production.

Another interesting case of output corresponding to a permutation input is of the form $\max_{x \in X} f(\sigma,x)$, where $f$ is a function both acting on the permutation  $\sigma$ and on some external variable $x$. This output corresponds to a worst case for the performance or the cost given by the permutation $\sigma$. Classical examples of this kind of output are the max distance criterion for Latin Hypercube Designs \cite{mckay_comparison_1979,santner_design_2003} and the robust deviation for a tour in the robust traveling salesman problem \cite{montemanni2007robust}. In Section \ref{sec_LHD}, we discuss and address the example of the max distance criterion.\vskip .1in

In this paper, we will be in the framework of Gaussian processes indexed by $S_N$.  Actually, Gaussian process models rely on the definition of a covariance function that characterizes the correlations between values of the process at different observation points.
As the notion of similarity between data points is crucial, \textit{i.e.} close location inputs are likely to have similar target values,  covariance functions  (symmetric positive definite kernels) are the key ingredient in using Gaussian processes for prediction. Indeed, the covariance operator contains nearness or similarity informations. In order to obtain a satisfying model one needs to choose a covariance function (\textit{i.e.} a symmetric positive definite kernel) that respects the structure of the index space of the dataset.\vskip .1in

A large number of applications gave rise to recent researches on ranking including {\it ranking aggregation} \cite{korba_learning_2017}, clustering rankings (see \cite{clemencon_clustering_2011}) or kernels on rankings for supervised learning. Constructing kernels over the set of permutations has been studied following several different ways. In \cite{kondor_group_2008}, Kondor provides results about kernels in non-commutative finite groups and constructs {\it diffusion kernels} (which are positive definite) on $S_N$.  These diffusion kernels are based on a discrete notion of neighbourhood. Notice that the kernels considered therein are quite different from those considered in this paper. Furthermore, the diffusion kernels are not in general covariance functions because of their tricky dependency on permutations. The recent reference \cite{jiao2017kendall} proves that the Kendall and Mallow's kernels are positive definite.  Further, \cite{mania_kernel_2016} extends this study characterizing both the feature spaces and the spectral properties associated with these two kernels. A real data set \cite{data_euro} on rankings is studied in \cite{mania_kernel_2016}. The authors used a kernel regression to predict the age of a participant with his/her order of preference of six sources of news regarding scientific developments: TV, radio, newspapers and magazines, scientific magazines, the internet, school/university.   \vskip .1in

There are applications where not all of the items in \eqref{eq:ranking_def} are ranked. Rather, a partial ranking is given (see for example the "sushi" dataset available at \linebreak {\verb http://www.kamishima.net } or movie datasets). The books \cite{critchlow2012metric} and \cite{marden_analyzing_2014} provide metrics on partial rankings and the papers \cite{kondor2010ranking} and \cite{jiao2017kendall} provide kernels on partial rankings and deal with the complexity reduction of their computation. \vskip .1in

The goal in this paper is threefold: first we define Gaussian processes indexed by $S_N$ by providing a wide class of covariance kernels. We generalize previous results on the Mallow's kernel (see \cite{jiao2017kendall}). Second, we consider the Kriging models (see for instance \cite{stein99interpolation}) that consist in inferring the values of a Gaussian random field given observations at a finite set of observation points. Here, the observations points are permutations. We study the asymptotic properties of the maximum likelihood estimator of the parameters of the covariance function.
We  also prove the asymptotic accuracy 
of the Kriging prediction under the estimated covariance parameters. Further, we provide simulations that illustrate the very good performances of the proposed kernels. 
Finally, we provide an application to Gaussian process based optimization of Latin Hypercube Designs.
Last, we show that the Gaussian process framework may be adapted to the cases of learning with partially observed rankings. We define a class of covariance kernels on partial rankings, for which we show how to reduce the computation complexity. In simulations, we show that our suggested kernels yield more efficient Gaussian process predictions than the kernels given in \cite{jiao2017kendall}.\vskip .1in

The paper falls into the following parts. In Section \ref{s:kernrank}, we recall some facts on  $S_N$  and  provide some covariance kernels on this set. Asymptotic results on the estimation of the covariance function are presented in Section \ref{s:GPrank}. Section \ref{s:GPrank} also contains an application to the optimization of Latin Hypercube Designs. Section \ref{s:kernpartial} provides new covariance kernels for partial rankings with a comparison with the ones given in \cite{jiao2017kendall} in a numerical experiment.  Section \ref{conclu} concludes the paper. The proofs are  all postponed to the appendix.

\section{Covariance model for  rankings} \label{s:kernrank} 

Recall that we define $S_N$ as the set of all permutations on $E_N:=\{1,\dots,N\}$. An element $\sigma$ of $S_N$ is a bijection from $E_N$ to $E_N$. We aim at constructing kernels, or covariance functions, on $S_N$. We will base these kernels on the three following distances on $S_N$ (see \cite{diaconis_group_1988}). For any permutations $\pi$ and $\sigma$ of $S_N$,
\begin{itemize}
\item The Kendall's tau distance is defined by
\begin{equation}
d_\tau(\pi,\sigma):=\sum_{\substack{i,j=1,...,N\\i<j}}\left(\mathds{1}_{\sigma(i)>\sigma(j),\;\pi(i)<\pi(j)}+\mathds{1}_{\sigma(i)<\sigma(j),\;\pi(i)>\pi(j)}\right).
\label{eq:ken}
\end{equation}
This distance counts the number of pairs on which the permutations disagree in ranking.
\item The Hamming distance is defined by
\begin{equation}
d_H(\pi,\sigma):=\sum_{i=1}^N\mathds{1}_{\pi(i)\neq \sigma(i)}.
\label{eq:ham}
\end{equation}
\item The Spearman's footrule distance is defined by
\begin{equation}
d_S(\pi,\sigma):=\sum_{i=1}^N|\pi(i)-\sigma(i)|.
\label{eq:spear}
\end{equation}
\end{itemize}
These three distances are right-invariant. That is, for all $\pi,\; \sigma,\;\tau \in S_N$, $d(\pi,\sigma)=d(\pi \tau,\sigma \tau)$. Other right-invariant distances are discussed in \cite{diaconis_group_1988}.

We aim at defining a Gaussian process indexed by permutations. Notice that, generally speaking, using the abstract Kolmogorov construction (see for example \cite{dacunha2012probability} Chapter 0), the law of a Gaussian random process $(Y_x)_{x\in E}$ indexed by an abstract set $E$ is entirely characterized by its mean and covariance functions
$$
 M : x \mapsto \E (Y_x)
 $$
and 
$$K : (x,y) \mapsto {\rm Cov}(Y_x, Y_y).
$$ 
Of course, here the framework is much simpler as $S_N$ is finite ($|S_N|=N!$),  and the Gaussian distribution is obviously  completely determined by its mean and covariance matrix.
Hence, if we assume that the process is centered, we only have to build a covariance function on $S_N$.
First, we recall the definition of a positive definite kernel on an abstract space $E$. A symmetric map $K:E\times E \rightarrow \R$ is called a {\it positive definite kernel} if for all $n\in \N$ and for all $(x_1,\cdots,x_n)\in E^n$, the matrix $(K(x_i,x_j))_{i,j}$ is positive semi-definite. In this paper, we say that $K$ is a {\it strictly positive definite kernel} if $K$ is symmetric and, for all $n \in \N$ and for all $(x_1,\cdots,x_n) \in E^n$ such that $x_i\neq x_j$ if $i\neq j$, the matrix $(K(x_i,x_j))_{i,j}$ is positive definite. \vskip .1in

These notions are particularly interesting for $S_N$ (and any finite set). Indeed, if $K$ is a strictly positive definite kernel, then for any function $f:S_N\rightarrow \R$, there exists $(a_\sigma)_{\sigma\in S_N}$ such that
\begin{equation}\label{universel}
f=\sum_{\sigma \in S_N}a_\sigma K(.,\sigma),
\end{equation}
and $K$ is of course an {\it universal kernel} (see \cite{micchelli2006universal}).

\begin{rmk}
Since $S_N$ is a finite discrete space, remark that the Reproducible Kernel Hilbert Space (RKHS) of a kernel $K$ is defined by the set of the functions of the form \eqref{universel}, and the universality of the kernel $K$ is equivalent to the equality of its RKHS with the set of the functions from $S_N$ to $\R$. This is, in turn, equivalent to the fact that $K$ is strictly positive definite.
\end{rmk}

We now provide two different parametric families  of  covariance kernels. The members of these families have the general form
\begin{equation}\label{K}
K_{\theta_1,\theta_2}(\sigma,\sigma'):=\theta_2\exp\left(-\theta_1d(\sigma,\sigma')\right),\;\; (\theta_1, \theta_2>0),
\end{equation}
and
\begin{equation}\label{eq:noyau}
K_{\theta_1,\theta_2,\theta_3}(\sigma,\sigma'):=\theta_2\exp\left(-\theta_1d(\sigma,\sigma')^{\theta_3}\right),\;\; (\theta_1, \;\theta_2>0,\;\theta_3\in [0,1]).
\end{equation}
Here, $d$ is one of the three distances defined in (\ref{eq:ken}),
(\ref{eq:ham}) and (\ref{eq:spear}). More precisely, for the Kendall's (resp. Hamming's and Spearman's footrule)  distance  let $K_{\theta_1,\theta_2(,\theta_3)}^\tau$  (resp. $K_{\theta_1,\theta_2(,\theta_3)}^H$, $K_{\theta_1,\theta_2(,\theta_3)}^S$) be the corresponding covariance function.  
For concision, sometimes we will write $K_{\theta_1,\theta_2(,\theta_3)}$ (resp. $d$) for one of these three kernels (resp. distances).

We show in the next proposition that $K_{\theta_1,\theta_2}$ is strictly positive definite.

\begin{prop}\label{defpos}
For all $\theta_1>0$ and $\theta_2>0$, $K_{\theta_1,\theta_2}^\tau$, $K_{\theta_1,\theta_2}^H$, $K_{\theta_1,\theta_2}^S$ are strictly positive definite kernels on $S_N$.
\end{prop}

\begin{rmk}
In \cite{mania_kernel_2016}, the strict positive definiteness of the Mallow's kernel, corresponding to $K_{\theta_1,\theta_2}^\tau$, is also shown. Our proof of Proposition \ref{defpos} seems more direct than the one given in \cite{mania_kernel_2016}.
\end{rmk}

We also have a similar result for $K_{\theta_1,\theta_2,\theta_3}$.
\begin{prop}\label{pos}
For all $\theta_1>0$, $\theta_2\geq 0$ and $\theta_3 \in [0,1]$, the maps $K_{\theta_1,\theta_2,\theta_3}^\tau$, $K_{\theta_1,\theta_2,\theta_3}^H$, $K_{\theta_1,\theta_2,\theta_3}^S$ are positive definite kernels on $S_N$.
\end{prop}

Propositions \ref{defpos} and \ref{pos} enable to define Gaussian processes indexed by permutations.

\begin{rmk}
The authors of \cite{anderes2017isotropic} define strictly positive definite kernels on graphs with Euclidean edges with two different metrics: the geodesic metric and the "resistance metric". The kernels are obtained by applying completely monotonous functions to these metrics (distances).
They provide different classes of such functions: the power exponential functions (which are considered in our work, see \eqref{eq:noyau}), the Mat\'{e}rn functions (with a smoothness parameter $0< \nu \leq 1 \slash 2$), the generalized Cauchy functions and the Dagum functions. One can show that Proposition \ref{pos} remains valid for all these kernels, by remarking as in \cite{anderes2017isotropic} that these kernels are based on completely monotonous functions. Some of the proofs of \cite{anderes2017isotropic} are based on techniques similar to the proof of Proposition \ref{pos}, using Schoenberg's theorems. 

We remark that the finite set of permutations $S_N$ is a graph, when two permutations $\sigma_1$ and $\sigma_2$ are connected if there exists a transposition $\pi$ such that  $\sigma_1=\sigma_2 \pi$. Hence, it is natural to ask if the results of \cite{anderes2017isotropic} can imply or extend some of the results in this paper. The answer however appears to be negative. Indeed, the distances considered in \cite{anderes2017isotropic} are the geodesic or the "resistance" distances, ans the distances in \eqref{eq:ken}, \eqref{eq:ham} and \eqref{eq:spear} do not fall into this category.

One could also consider the set of the permutations as a fully connected weighted graph, where the weight of the edge between $\sigma_1$ and $\sigma_2$ is $d(\sigma_1,\sigma_2)$, and where $d$ is $d_\tau$ or $d_H$ or $d_S$. Nevertheless, also with this graph, the results of \cite{anderes2017isotropic} do not apply, since the graphs addressed by this reference have a particular structure (finite sequential 1-sum of Euclidean cycles and trees).

We finally remark that \cite{anderes2017isotropic} constructs covariance functions not only on finite graphs, but between connected vertices. In contrast, the covariance functions constructed here are defined only on the finite set $S_N$.
\end{rmk}

\section{Gaussian fields on the symmetric group} \label{s:GPrank}
\subsection{Maximum likelihood} \label{s:asymtoresults}
Let us consider a Gaussian process $Y$ indexed by $\sigma \in S_N$, with zero mean and covariance function $K_*$. 
In a parametric setting, a classical assumption is that the covariance function $K_*$ belongs to some parametric set of the form
\begin{equation} \label{eq:parametric_model_modeling}
\{ K_{\theta} \; ;\; \theta \in \Theta \},
\end{equation}
where $\Theta \subset \mathbb{R}^p$ is given and for all $\theta \in \Theta$, $K_{\theta}$ is a covariance function. The parameter $\theta$ is generally called the covariance parameter.
In this framework, $K_*=K_{\theta^*}$ for some parameter $\theta^* \in \Theta$.

The parameter $\theta^*$ is estimated from noisy observations of the values of the Gaussian process at several inputs. Namely, to the observation point $\sigma_i$, we associate the observation $Y(\sigma_i)+\varepsilon_i$, for $i=1,\dots,n$, where $(\varepsilon_i)_i$ is an independent Gaussian white noise. Let us consider a sample of  random permutations $\Sigma=(\sigma_1,\sigma_2,\cdots,\sigma_n) \in S_N$. Assume that we observe $\Sigma$ and a random vector $y=(y_1,y_2,\cdots,y_n)^T$ defined by, for $i \leq N$,
\begin{equation} \label{eq:noisyGP}
y_i=Y(\sigma_i)+\varepsilon_i.
\end{equation}
Here, $Y$ is Gaussian process indexed by $S_N$ and independent of $\Sigma$. We assume that $Y$ is centered with covariance function $K_{\theta_1^*,\theta_2^*}$ (see \eqref{K} in Section~\ref{s:kernrank}) and that $(\varepsilon_i)_{i\leq n} \sim \mathcal{N}(0,\theta_3^* I_n)$. $Y$ is the unknown process to predict and $\varepsilon$ is an additive white noise. 
Notice that $\theta_3$ denotes here the variance of the nugget effect while it is a power in Section \ref{s:kernrank} (see \eqref{eq:noyau}). We keep the same name in order to use the compact notation $\theta$ for the parameter  of the model. 
The Gaussian process $Y$  is stationary in the sense that for all $\sigma_1,\cdots,\sigma_n\in S_N$ and for all $\tau \in S_N$, the finite-dimensional distribution of $Y$ at $\sigma_1,\cdots,\sigma_n$ is the same as the finite-dimensional distribution at $\sigma_1\tau,\cdots,\sigma_n\tau$.

Several techniques have been proposed for constructing an estimator\\ $\hat{\theta} = \hat{\theta}(\sigma_1,y_1,\cdots,\sigma_n,y_n)$ of $\theta^*:=(\theta_1^*,\theta_2^*,\theta_3^*)$: maximum likelihood estimation \cite{white1982maximum}, restricted maximum likelihood \cite{cressie93asymptotic}, leave-one-out estimation \cite{cressie1992statistics,bachoc2013cross}, leave-one-out log probability \cite{sundararajan_predictive_2001}... Here, we shall focus on the maximum likelihood method. It is widely used in practice and has received a lot of theoretical attention. Assume that $\Theta \subset \prod_{i=1}^3 [\theta_{i,min} , \theta_{i,max}]$ for some given $0 < \theta_{i,min} \leq \theta_{i,max} < \infty$ ($i=1,2,3$).  The maximum likelihood estimator is defined as  \begin{equation} \label{eq:MLdel} \widehat{\theta}_{ML}=\widehat{\theta}_n \in {\rm arg}\min_{\theta \in \Theta} L_\theta \end{equation} with
\begin{equation}
L_\theta:=\frac{1}{n}\ln(\det R_\theta)+\frac{1}{n}y^T R_{\theta}^{-1} y,
\end{equation}
where $R_\theta = [ K_{\theta_1,\theta_2}(\sigma_i , \sigma_j)+\theta_3\mathds{1}_{i=j} ]_{1 \leq i,j \leq n}$ is invertible for $\theta \in \Theta$ since $\theta_3>0$. 

\subsection{Asymptotic results} \label{s:asymtoresults2}

When considering the asymptotic behaviour of the maximum likelihood estimator, two different frameworks can be studied: fixed domain and increasing domain asymptotics \cite{stein99interpolation}. Under increasing-domain asymptotics, as $n \to \infty$, the observation points $\sigma_1,\cdots,\sigma_n $ are such that $\min_{i \neq j} d(\sigma_i ,\sigma_j) $ is lower bounded  and $d(\sigma_i , \sigma_j)$  becomes large with $|i-j|$,
(thus we can not keep $N$ fixed as $n\to +\infty$). Under fixed-domain asymptotics, the sequence (or triangular array) of observation points $(\sigma_1,\cdots,\sigma_n,\cdots)$ is dense in a fixed bounded subset. For a Gaussian field on $\R^d$, under increasing-domain asymptotics, the true covariance parameter $\theta^*$ can be estimated consistently by maximum likelihood. Furthermore, the maximum likelihood estimator is asymptotically normal \cite{MarMar1984,cressie93asymptotic,cressie96asymptotics,bachoc_asymptotic_2013}. Moreover, prediction performed using the estimated covariance parameter $\hat{\theta}_n$ is asymptotically as good as the one computed with $\theta^*$  as pointed out in \cite{bachoc_asymptotic_2013}. Finally, note that in the symmetric group, the fixed-domain framework can not be considered (contrary to the input space $\R^d$) since $S_N$ is a finite space.

We will consider hereafter the increasing-domain framework. We thus consider a number of observations $n$ that goes to infinity. Hence, the size $N$ of the permutations can not be fixed, as pointed out above. We thus let the size of the permutations be a function of $n$, that we write $N_n$, with $N_n \to \infty$ as $n\to \infty$. To summarize, we consider a sequence of Gaussian processes $Y_n$ on $S_{N_n}$, with $N_n\underset{n\rightarrow+\infty}{\longrightarrow}+\infty$ and where we consider a triangular array $(\sigma_i^{(n)})_{i\leq n} \subset S_{N_n}$ of observation points. However, for the sake of simplicity, we only write $Y$ and $(\sigma_i)_{i\leq n}$ and the dependency on $n$ is implicit. We observe values of the Gaussian process on the permutations $\Sigma = (\sigma_1,\cdots,\sigma_n)$, that are assumed to fulfill the following assumptions:
\bigskip

\underline{Condition 1:} For $d=d_\tau$ or $d=d_H$ or $d=d_S$, there exists $\beta>0$ such that $\forall i,j$, $d(\sigma_i,\sigma_j)\geq  |i-j|^{\beta}$.

\underline{Condition 2:} For $d=d_\tau$ or $d=d_H$ or $d=d_S$, there exists $c>0$ such that  $\forall i,\; d(\sigma_i,\sigma_{i+1})\leq c$.\\

Here, we recall that $d_\tau$, $d_H$ and $d_S$ are defined in Section \ref{s:kernrank}. Notice  that $\beta$ and $c$ are assumed to be independent on $n$.\bigskip

These conditions are natural under increasing-domain asymptotics. Indeed, Condition 1 provides asymptotic independence for pairs of observations with asymptotically distant indices. It allows to show   that the variance of $L_\theta$ and of its gradient converges to $0$. Condition 2 ensures the asymptotic discrimination of the covariance parameters (see Lemma \ref{lem:extra} in the appendix).
These conditions can be  ensured with particular choices of sampling schemes for $(\sigma_1,\cdots,\sigma_n)$ (using the distances previously discussed).

As an example consider the following setting.
We fix $k\in \N$. For $n\in \N, i\in [1:n]$, we choose $\sigma_i^{(n)}=\sigma_i=\tau_i c_i\in S_{k+n}$ (we have $N_n=k+n)$ with $\tau_i\in S_k\times id_{[k+1:n+k]}:=\{\sigma \in S_{n+k}|\;\sigma_{| [k+1:n+k]}=id\}$ a random permutation such that $(\tau_i)_i$ are independent (we do not make further assumptions on the law of $\tau_i$). Let $c_i=(i+k\;\;i+k-1\;\;\cdots\;\;1)$ the cycle defined by $c_i(1)=i+k$, $c_i(j)=j-1$ if $1<j\leq i+k$ and $c_i(j)=j$ if $j>i+k$. Then, $\sigma_i$ is a permutation such that $\sigma_i(1)=i+k$, $\sigma_i(j)$ is a random variable in $[2:k]$ if $1<j\leq k+1$, $\sigma_i(j)=j-1$ if $k+1<j \leq i+k$ and $\sigma_i(j)=j$ if $j>i+k$. A straightforward computation shows that the Conditions 1 and 2 are satisfied with $\beta=1$ and $c=1+k(k-1)\slash2$ for the Kendall's tau distance, $c=2+k$ for the Hamming distance, $c=2+k^2$ for the Spearman's footrule distance. Indeed, the three distances in $S_k$ are upper-bounded by $k(k-1)\slash2$, $k$ and $k^2$ respectively.

The following theorems give both the consistency and the asymptotic normality  of the estimator when the number of observations increases.
\begin{thrm}\label{consis} 
Let $\widehat{\theta}_{ML}$ be defined as in \eqref{eq:MLdel}, where the distance $d$ used to define the set $\{ K_\theta\; ; \; \theta \in \Theta\}$ is $d_\tau$, $d_H$ or $d_S$. Assume that Conditions 1 and 2 hold with the same choice of the distance $d$. Then,
\begin{equation}
\widehat{\theta}_{ML}\overset{\PP}{\underset{n\rightarrow+\infty}{\longrightarrow}}\theta^*.
\end{equation}
\end{thrm}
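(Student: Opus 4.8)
The plan is to establish consistency of the maximum likelihood estimator via the standard M-estimation route: show that the random criterion $L_\theta$ converges, uniformly over the compact parameter set $\Theta$, to a deterministic limiting contrast function $\mathcal{L}(\theta)$ whose unique minimizer is the true parameter $\theta^*$. Since $\Theta$ is compact (a product of closed bounded intervals), uniform convergence of the objective together with a well-separated unique minimizer of the limit delivers convergence in probability of the argmin. So the proof would decompose into three pieces: (i) identify and control the expected criterion, (ii) prove that the fluctuation $L_\theta - \E[L_\theta]$ vanishes uniformly, and (iii) prove the identifiability of $\theta^*$ as the unique minimizer of the limit.

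For step (ii), I would write $L_\theta = \frac1n \ln\det R_\theta + \frac1n y^t R_\theta^{-1} y$ and treat the two summands separately. The $\ln\det$ term is deterministic given $\Sigma$, so the randomness there enters only through the permutations; the quadratic form $\frac1n y^t R_\theta^{-1} y$ is the piece that must concentrate around its conditional mean $\frac1n \Tr(R_{\theta^*} R_\theta^{-1})$. The key structural fact I would exploit is \textbf{Condition 1}: because $d(\sigma_i,\sigma_j) \geq |i-j|^\beta$ and the kernels decay like $\exp(-\theta_1 d)$, the entries of $R_\theta$ off the diagonal decay rapidly in $|i-j|$, so $R_\theta$ and $R_\theta^{-1}$ are effectively band-limited with uniformly bounded row sums and uniformly bounded operator norms (bounded above and away from zero thanks to the nugget $\theta_3 \geq \theta_{3,\min} > 0$ in $K'_\theta$, which guarantees $R_\theta \succeq \theta_{3,\min} I$). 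I expect this to give a $O(1/n)$ variance bound for the quadratic form, uniformly in $\theta$, and — after the usual argument combining pointwise concentration with equicontinuity of $\theta \mapsto L_\theta$ on the compact $\Theta$ (via bounds on $\partial_\theta L_\theta$) — uniform convergence of $L_\theta - \E[L_\theta]$ to zero in probability.

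For step (iii), the identifiability, I would appeal to the fact cited in the text that \textbf{Condition 2} ensures asymptotic discrimination of the covariance parameters (Lemma~\ref{lem:extra} in the appendix). The standard inequality $\ln\det R_\theta + \Tr(R_{\theta^*} R_\theta^{-1}) \geq \ln\det R_{\theta^*} + n$, with equality if and only if $R_\theta = R_{\theta^*}$, shows that $\E[L_\theta]$ is minimized when the covariance matrices coincide; Condition 2 is what forces enough close-by pairs of observations (distance bounded by $c$) that $R_\theta = R_{\theta^*}$ asymptotically pins down all three coordinates of $\theta$, so that the limiting contrast has a strict well-separated minimum at $\theta^*$. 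One must verify that the gap $\mathcal{L}(\theta) - \mathcal{L}(\theta^*)$ stays bounded below by a positive constant outside any neighborhood of $\theta^*$, uniformly in $n$.

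\textbf{The main obstacle} I anticipate is step (ii), specifically obtaining the uniform (in both $n$ and $\theta$) control of $R_\theta^{-1}$ needed for the concentration of the quadratic form. The decay of the \emph{entries} of $R_\theta$ from Condition 1 does not by itself immediately transfer to decay of the entries of the \emph{inverse}; one needs a quantitative argument — either a Neumann-series/Gershgorin estimate showing $R_\theta$ is a small perturbation of a well-conditioned banded matrix, or a direct bound on the off-diagonal decay of inverses of matrices with decaying entries — to conclude that $\Tr(R_\theta^{-1} R_{\theta^*} R_\theta^{-1} R_{\theta^*})$, which governs the variance of the quadratic form, grows only linearly in $n$. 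Securing this operator-norm and decay control, uniformly over the compact $\Theta$, is where the real work lies; once it is in hand, the remaining M-estimation bookkeeping is routine.
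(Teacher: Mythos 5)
Your proposal is correct and follows essentially the same route as the paper: uniform control of the fluctuation $L_\theta-\E(L_\theta\,|\,\Sigma)$ via an $O(1/n)$ conditional variance bound for the quadratic form together with boundedness in probability of $\sup_\theta|\partial_\theta L_\theta|$, combined with the Kullback-type inequality, quantified as $\E(L_\theta|\Sigma)-\E(L_{\theta^*}|\Sigma)\geq a\,\frac1n\|R_\theta-R_{\theta^*}\|_F^2$ using $-\ln t+t-1\geq A(t-1)^2$ on the uniformly bounded spectrum of $R_{\theta^*}^{1/2}R_\theta^{-1}R_{\theta^*}^{1/2}$, and closed by the identifiability Lemma \ref{lem:extra} resting on Condition 2 — exactly the paper's Steps 2 and 3. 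Two remarks. First, your opening framing (convergence of $L_\theta$ to a \emph{deterministic} limiting contrast $\mathcal{L}(\theta)$) is not available in this setting: the design $\Sigma$ is random and constrained only by Conditions 1 and 2, so $\E(L_\theta|\Sigma)$ need not converge at all; the paper sidesteps this with a contradiction argument that works directly with the $n$-dependent gap, which is precisely the corrected formulation you give at the end of your step (iii) (gap bounded below by a positive constant outside any neighborhood of $\theta^*$, uniformly in $n$). Second, your anticipated main obstacle dissolves: no off-diagonal decay of the entries of $R_\theta^{-1}$ is needed, because $\Tr\bigl(R_\theta^{-1}R_{\theta^*}R_\theta^{-1}R_{\theta^*}\bigr)\leq n\,\|R_\theta^{-1}\|_2^2\,\|R_{\theta^*}\|_2^2$, and the two operator-norm bounds are immediate — the nugget gives $\lambda_{\min}(R_\theta)\geq\theta_{3,\min}$ (Lemma \ref{V.4}), while Gershgorin applied with the summable row bounds $\sum_j e^{-\theta_{1,\min}|i-j|^\beta}$ furnished by Condition 1 gives $\lambda_{\max}(R_\theta)\leq C$ uniformly in $n$, $\theta$ and $\Sigma$ (Lemma \ref{A.5}); this is exactly how the paper proceeds, so the remaining bookkeeping is, as you say, routine.
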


%The following theorem provides the asymptotic normality of the estimator.

\begin{thrm}\label{gaussien}
Under the assumptions of Theorem \ref{consis}, let $M_{ML}$ be the $3 \times 3$ matrix defined by
\begin{equation}\label{eqgaussien}
(M_{ML})_{i,j}=\frac{1}{2n}\Tr\left(R_{\theta^*}^{-1}\frac{\partial R_{\theta^*}}{\partial \theta_i}  R_{\theta^*}^{-1}\frac{\partial R_{\theta^*}}{\partial \theta_j} \right).
\end{equation}
Then
\begin{equation} \label{eq:TCL}
\sqrt{n} M_{ML}^\frac{1}{2}\left(\widehat{\theta}_{ML}-\theta^* \right)\;\overset{\mathcal{L}}{\underset{n\rightarrow +\infty}{\longrightarrow}}\mathcal{N}(0,I_3).
\end{equation}
Furthermore,
\begin{equation}\label{22}
0<\liminf_{n\rightarrow \infty}\lambda_{\min}(M_{ML})\leq \limsup_{n\rightarrow \infty} \lambda_{\max}(M_{ML})<+\infty,
\end{equation}
where $\lambda_{\min}(M_{ML})$ (resp. $\lambda_{\max}(M_{ML})$) is the smallest (resp. largest) eigenvalue of $M_{ML}$.
\end{thrm}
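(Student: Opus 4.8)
The plan is to follow the classical asymptotic-normality scheme for maximum likelihood in Gaussian models, adapted to the correlation decay induced by Conditions 1 and 2. Since $\widehat\theta_{ML}$ is consistent by Theorem \ref{consis}, with probability tending to one it is an interior minimizer of $L_\theta$, so that $\nabla L_{\widehat\theta_{ML}} = 0$. A coordinatewise mean-value expansion of the score around $\theta^*$ then yields
\begin{equation*}
\sqrt n\,(\widehat\theta_{ML} - \theta^*) = -\left[\nabla^2 L_{\bar\theta}\right]^{-1}\sqrt n\,\nabla L_{\theta^*},
\end{equation*}
where $\bar\theta$ lies on the segment joining $\widehat\theta_{ML}$ to $\theta^*$. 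It then suffices to prove a central limit theorem for the score $\sqrt n\,\nabla L_{\theta^*}$ and to show that the Hessian $\nabla^2 L_{\bar\theta}$ is, up to a vanishing error, the deterministic matrix $2M_{ML}$.

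For the score, I would write $y = R_{\theta^*}^{1/2}\xi$ with $\xi\sim\mathcal N(0,I_n)$, so that each component of $\sqrt n\,\nabla L_{\theta^*}$ equals, up to sign, a centered quadratic form $\frac{1}{\sqrt n}\bigl(\xi^T A_i^{(n)}\xi - \Tr A_i^{(n)}\bigr)$ with $A_i^{(n)} = R_{\theta^*}^{-1/2}\partial_{\theta_i}R_{\theta^*}\,R_{\theta^*}^{-1/2}$. A direct Gaussian computation gives the exact identity $\mathrm{Cov}\bigl(\sqrt n\,\partial_{\theta_i}L_{\theta^*},\sqrt n\,\partial_{\theta_j}L_{\theta^*}\bigr) = 4(M_{ML})_{ij}$. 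Asymptotic normality of these quadratic forms follows from a Lindeberg/Lyapunov-type argument once one controls the traces of powers of $A_i^{(n)}$; the essential input is that Condition 1 forces the off-diagonal entries of $R_{\theta^*}$ (and hence of $\partial_{\theta_i}R_{\theta^*}$ and of $R_{\theta^*}^{-1}$) to decay as $|i-j|$ grows, so that the $A_i^{(n)}$ are effectively banded and the quadratic forms behave as sums of weakly dependent terms. This delivers $\sqrt n\,\nabla L_{\theta^*}\overset{\mathcal L}{\longrightarrow}\mathcal N(0, 4M_{ML})$ in the triangular-array sense.

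For the Hessian, an analogous computation shows the exact value $\E[\nabla^2 L_{\theta^*}] = 2M_{ML}$ (the second-derivative terms $\partial^2_{\theta_i\theta_j}R_{\theta^*}$ cancelling in expectation), while the same correlation-decay argument bounds $\mathrm{Var}(\partial^2_{\theta_i\theta_j}L_{\theta^*})$ by $O(1/n)$, so $\nabla^2 L_{\theta^*} - 2M_{ML}\to 0$ in probability. Replacing $\bar\theta$ by $\theta^*$ requires a uniform-in-$\theta$ control of $\nabla^3 L_\theta$ over $\Theta$ together with the consistency of $\widehat\theta_{ML}$, which is again an equicontinuity estimate driven by correlation decay. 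Slutsky's theorem then combines the two limits: the sandwich $(2M_{ML})^{-1}(4M_{ML})(2M_{ML})^{-1} = M_{ML}^{-1}$ gives asymptotic covariance $M_{ML}^{-1}$, and left-multiplying by $\sqrt n\,M_{ML}^{1/2}$ yields \eqref{eq:TCL}. Since $M_{ML}$ need not converge, all these statements are read along the triangular array, which is why the normalization is written as $M_{ML}^{1/2}$ rather than a fixed limit.

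Finally, the spectral bounds \eqref{22} are established separately. The upper bound on $\lambda_{\max}(M_{ML})$ follows by bounding $\|R_{\theta^*}^{-1}\|$ from above — the nugget decomposition $R_{\theta^*} = [K_{\theta_1^*,\theta_2^*}(\sigma_i,\sigma_j)] + \theta_3^* I_n$ gives $\lambda_{\min}(R_{\theta^*})\geq\theta_3^*>0$ — together with a Gershgorin/row-sum bound on $\|\partial_{\theta_i}R_{\theta^*}\|$ coming from Condition 1. The lower bound $\liminf\lambda_{\min}(M_{ML})>0$ is the asymptotic identifiability of the three parameters: one must show that no nontrivial combination $\sum_k \alpha_k\,\partial_{\theta_k}R_{\theta^*}$ becomes negligible relative to $R_{\theta^*}$ in the trace inner product $\langle A,B\rangle := \frac{1}{2n}\Tr(R_{\theta^*}^{-1}AR_{\theta^*}^{-1}B)$. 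This is precisely where Condition 2 enters: the bound $d(\sigma_i,\sigma_{i+1})\leq c$ guarantees a positive asymptotic proportion of close pairs $(\sigma_i,\sigma_{i+1})$ on which the three derivatives leave distinguishable fingerprints (variance, range and nugget act differently on near-diagonal entries), which is the content of the auxiliary Lemma \ref{lem:extra}. I expect this lower bound — the nondegeneracy of the limiting Fisher information, rather than the CLT machinery — to be the main obstacle, since the quadratic-form CLT and the Hessian concentration are standard once correlation decay is in hand.
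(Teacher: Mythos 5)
Your proposal is, in outline, the paper's own proof: the exact identities $\mathrm{Cov}\bigl(\sqrt n\,\partial_{\theta_i}L_{\theta^*},\sqrt n\,\partial_{\theta_j}L_{\theta^*}\mid\Sigma\bigr)=4(M_{ML})_{ij}$ and $\E\bigl[\partial^2_{\theta_i\theta_j}L_{\theta^*}\mid\Sigma\bigr]=2(M_{ML})_{ij}$, a Lindeberg--Feller CLT for the diagonalized score, concentration of the Hessian, a uniform bound on third derivatives to control the mean-value remainder, the Slutsky sandwich $(2M)^{-1}(4M)(2M)^{-1}=M^{-1}$, and spectral bounds via the nugget plus Gershgorin (upper) and an identifiability-type lemma (lower).

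Three points deserve correction or completion. First, a citation slip: the lower bound $\liminf\lambda_{\min}(M_{ML})>0$ does \emph{not} rest on Lemma \ref{lem:extra}, which is the finite-distance identifiability bound $\inf_{\|\theta-\theta^*\|\geq\alpha}\frac1n\sum_{i,j}(K'_\theta-K'_{\theta^*})^2>0$ used in the consistency proof; what is needed is its infinitesimal analogue, Lemma \ref{V.8}, asserting $\liminf_n\frac1n\sum_{i,j}\bigl(\sum_k\lambda_k\,\partial_{\theta_k}K'_{\theta^*}(\sigma_i,\sigma_j)\bigr)^2>0$ for every $\lambda\neq0$ — a separate statement (proved via Condition 2 on the pairs $(i,i+1)$ and via the diagonal for the nugget direction) that does not follow from Lemma \ref{lem:extra}. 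Your conceptual description of the fingerprints of the three parameters is exactly the content of Lemma \ref{V.8}, but the lemma you name is the wrong one. Second, the design $\Sigma$ is random in this model: writing $y=R_{\theta^*}^{1/2}\xi$ silently treats $R_{\theta^*}$ as deterministic. The paper proves the score CLT \emph{conditionally} on $\Sigma=\sigma$ (where $z_\sigma=PR_{\theta^*}^{-1/2}y$ is i.i.d.\ standard normal and the summands $\phi_i(1-z_{\sigma,i}^2)/\sqrt n$ are genuinely independent), and then deconditions by dominated convergence applied to characteristic functions; your proposal omits this layer. Note also that, after diagonalization, no banding or correlation-decay control of traces of powers of $A_i^{(n)}$ is needed for Lindeberg: uniform boundedness of the eigenvalues (Lemmas \ref{V.4} and \ref{A.5}, i.e.\ nugget below and Gershgorin above) already suffices. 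Third, "read along the triangular array" is an assertion, not a mechanism: since $M_{ML}$ need not converge, \eqref{eq:TCL} is not a literal weak-convergence statement with a fixed limit law. The paper resolves this by contradiction — if \eqref{eq:TCL} fails, extract a subsequence along which $\bigl|\E\,g\bigl(\sqrt n\,M_{ML}^{1/2}(\widehat\theta_{ML}-\theta^*)\bigr)-\E\,g(U)\bigr|\geq\xi$, and then, precisely because \eqref{22} is established \emph{first}, a further subsequence along which $M_{ML}\to M_\infty$ with $M_\infty$ invertible; the CLT with the now-fixed limit $\mathcal N(0,4M_\infty)$ yields the contradiction. This explains the paper's ordering (spectral bounds before the CLT), which your plan inverts; as written, your Slutsky step combines two limits whose covariances vary with $n$, and some such subsequence device is required to make it rigorous.
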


Given the maximum likelihood estimator $\widehat{\theta}_n=\widehat{\theta}_{ML}$, the value $Y(\overline{\sigma}_n)$, for any input $\overline{\sigma}_n \in S_{N_n}$, can be forecasted  by plugging the estimated parameter in the conditional expectation expression for Gaussian processes. Hence  $Y(\overline{\sigma}_n)$ is predicted by  
\begin{equation} \label{eq:pred}
\hat{Y}_{\widehat{\theta}_n}(\overline{\sigma}_n)=r_{\widehat{\theta}_n}^T(\overline{\sigma}_n)R_{\widehat{\theta}_n}^{-1} y
\end{equation}
with
\[
r_{\widehat{\theta}_n}(\overline{\sigma}_n)=\left[\begin{array}{c} K_{\widehat{\theta}_n}(\overline{\sigma}_n,\sigma_1)\\ \vdots\\ K_{\widehat{\theta}_n}(\overline{\sigma}_n,\sigma_n) \end{array}\right].
\]
We point out that $\hat{Y}_{\widehat{\theta}_n}(\overline{\sigma}_n)$ is the conditional expectation of $Y(\overline{\sigma}_n)$ given $y_1,\cdots,y_n$, when assuming that $Y$ is a centered Gaussian process with covariance function $K_{\widehat{\theta}_n}$. 

The following theorem shows that the forecast with the estimated parameter behaves asymptotically as if the true covariance parameter were known.

\begin{thrm}\label{pred}
Under the assumptions of Theorem \ref{consis}, for any fixed sequence $(\overline{\sigma}_n)_{n\in \N}$, with $\overline{\sigma}_n\in S_{N_n}$ for $n\in \N$, we have
\begin{equation}
\left|\widehat{Y}_{\widehat{\theta}_{ML}}(\overline{\sigma}_n)-\widehat{Y}_{\theta^*}(\overline{\sigma}_n)\right|\overset{\PP}{\underset{n\rightarrow+\infty}{\longrightarrow}}0.
\end{equation}
\end{thrm}

\begin{rmk}
Theorem \ref{pred} does not imply that
\begin{equation}\label{eq_false}
    \max_{\sigma \in S_{N_n}}\left|\widehat{Y}_{\widehat{\theta}_{ML}}(\sigma)-\widehat{Y}_{\theta^*}(\sigma)\right|\overset{\PP}{\underset{n\rightarrow+\infty}{\longrightarrow}}0.
\end{equation}
Indeed, letting 
$
\overline{\sigma}_n \in \underset{\sigma \in S_{N_n}}{\rm argmax}\left|\widehat{Y}_{\widehat{\theta}_{ML}}(\sigma)-\widehat{Y}_{\theta^*}(\sigma)\right|,
$
\eqref{eq_false} is equivalent to 
$$
\left|\widehat{Y}_{\widehat{\theta}_{ML}}(\overline{\sigma}_n)-\widehat{Y}_{\theta^*}(\overline{\sigma}_n)\right|\overset{\PP}{\underset{n\rightarrow+\infty}{\longrightarrow}}0,
$$
but where $\overline{\sigma}_n$ is random. Here, Theorem \ref{pred} does not imply \eqref{eq_false} as it holds for deterministic sequences $(\overline{\sigma}_n)_{n\in \N}$. It would be interesting, in future work, to extend Theorem \ref{pred} to show \eqref{eq_false}.
\end{rmk}

The proofs of Theorems \ref{consis}, \ref{gaussien} and \ref{pred} are given in the appendix,  Sections \ref{section_proof_1}, \ref{section_proof_2} and \ref{section_proof_3} respectively. They are based on lemmas stated and proved in Section \ref{section_lemmas}. 
In \cite{bachoc_asymptotic_2013} and \cite{bachoc_gaussian_2017}, similar results for maximum likelihood are given for Gaussian fields indexed on $\R^d$ and on the set of all probability measures on $\R$ (see also \cite{bachoc2019gaussian}).
At the beginning of Appendix \ref{s:AB}, we also discuss the similarities and differences between the proofs of Theorems \ref{consis}, \ref{gaussien} and \ref{pred} and these given in \cite{bachoc_asymptotic_2013} and \cite{bachoc_gaussian_2017}.

\subsection{Numerical experiments}\label{s:appli}
As an illustration of Theorem \ref{consis}, we provide a numerical illustration showing that the maximum likelihood is consistent. 
%with the Kendall's tau distance, the Hamming distance and the Spearman's footrule distance. 
We generated the observations as discussed  in Section \ref{s:GPrank} with $k=3$. We recall that $N_n=k+n$ and $\sigma_i=\tau_i(i+k\;i+k-1\;\cdots1)\in S_{k+n}$ where $\tau_i\in S_k\times id_{[k+1:k+n]}$ is a random permutation.

For each value of $n$, we estimate the probability $\PP(\|\widehat{\theta}_{n}-\theta^*\|>\varepsilon)$ using a Monte-Carlo method and a sample of 1000 values of $\mathds{1}_{\|\widehat{\theta}_{n}-\theta^*\|>\varepsilon}$. Figure \ref{figconsis} depicts these estimates for $\varepsilon=0.5$, $\theta^*=(0.1,0.8,0.3)$ and $\Theta=[0.02,2] \times [0.3,2] \times [0.1,1]$.

\begin{figure}
\center
\includegraphics[height=4cm,width=4cm]{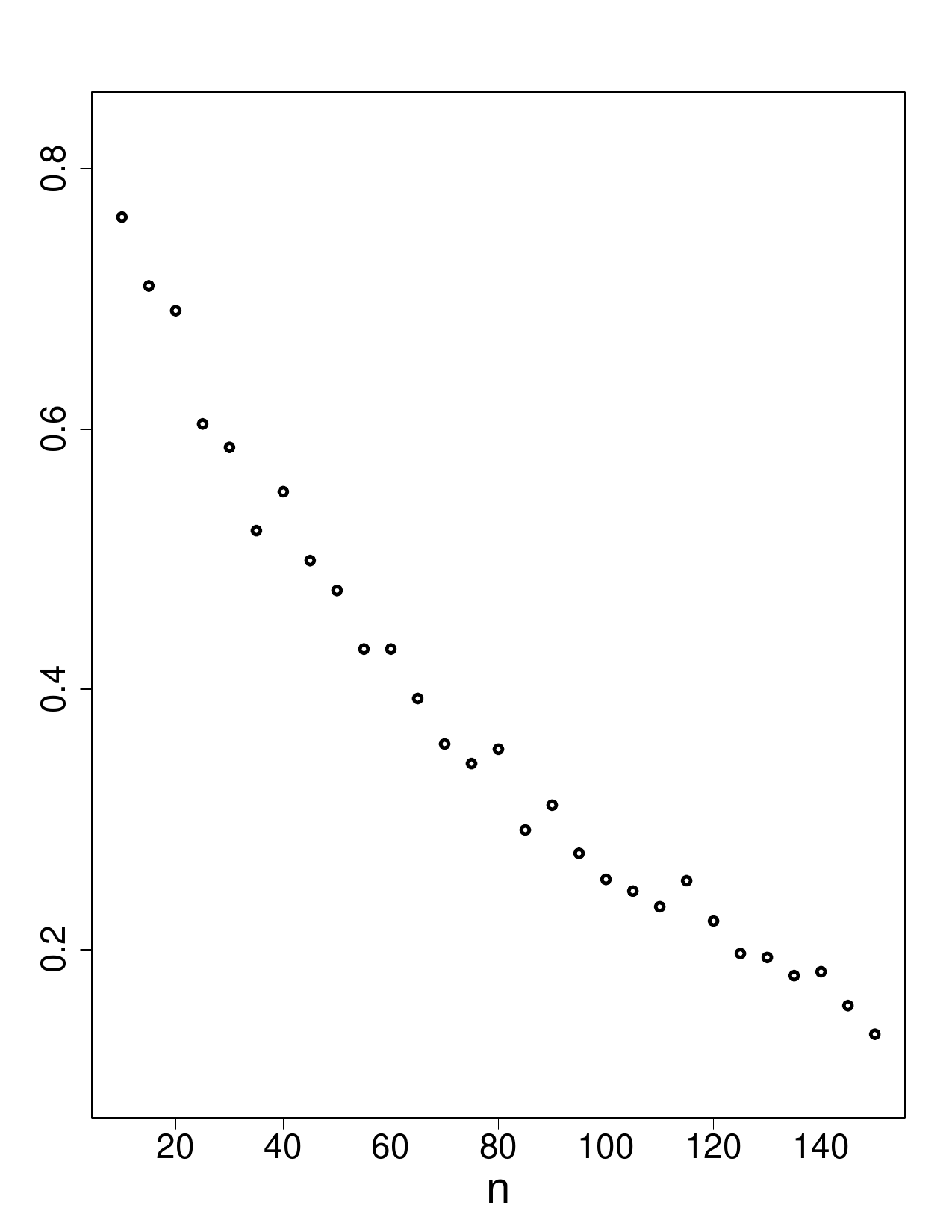}\includegraphics[height=4cm,width=4cm]{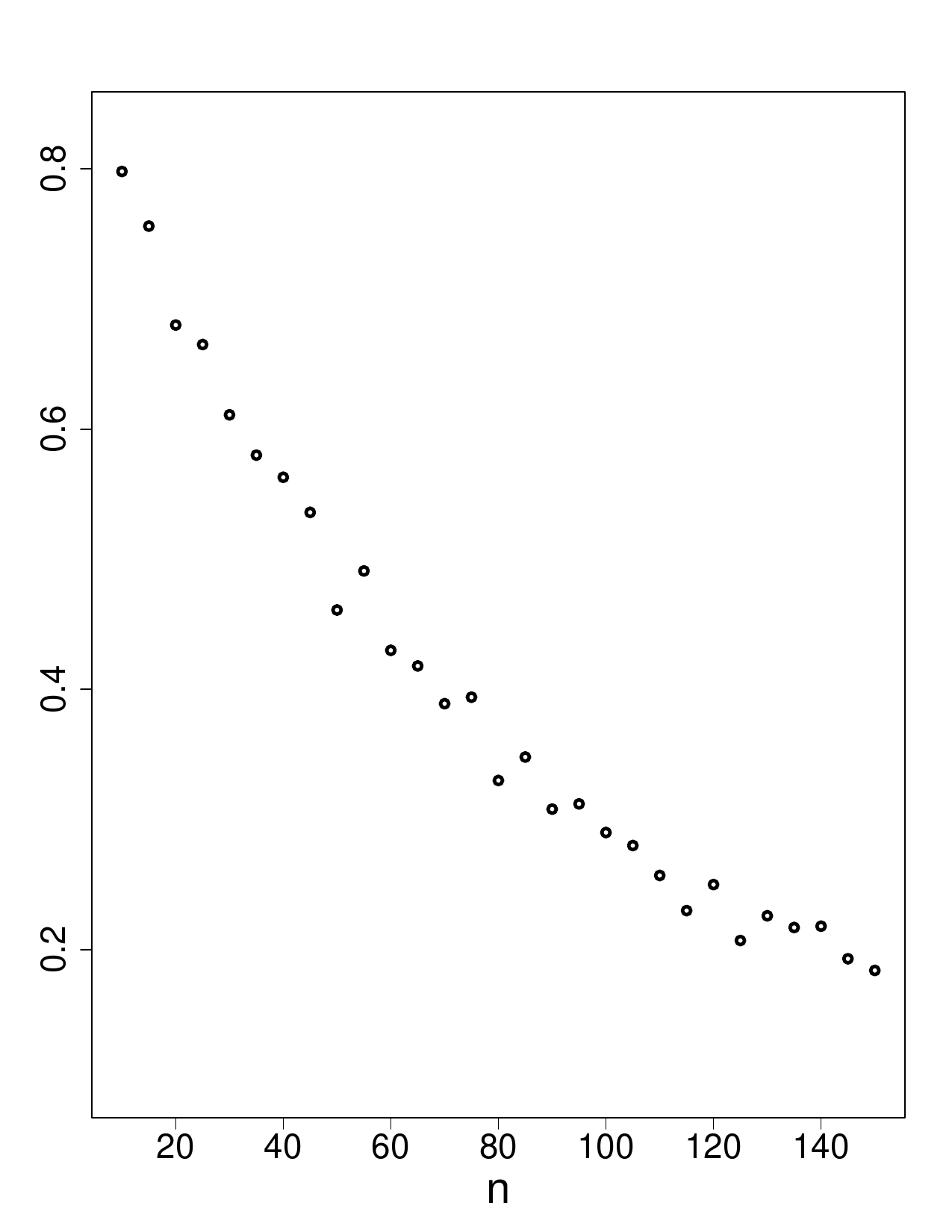}
\includegraphics[height=4cm,width=4cm]{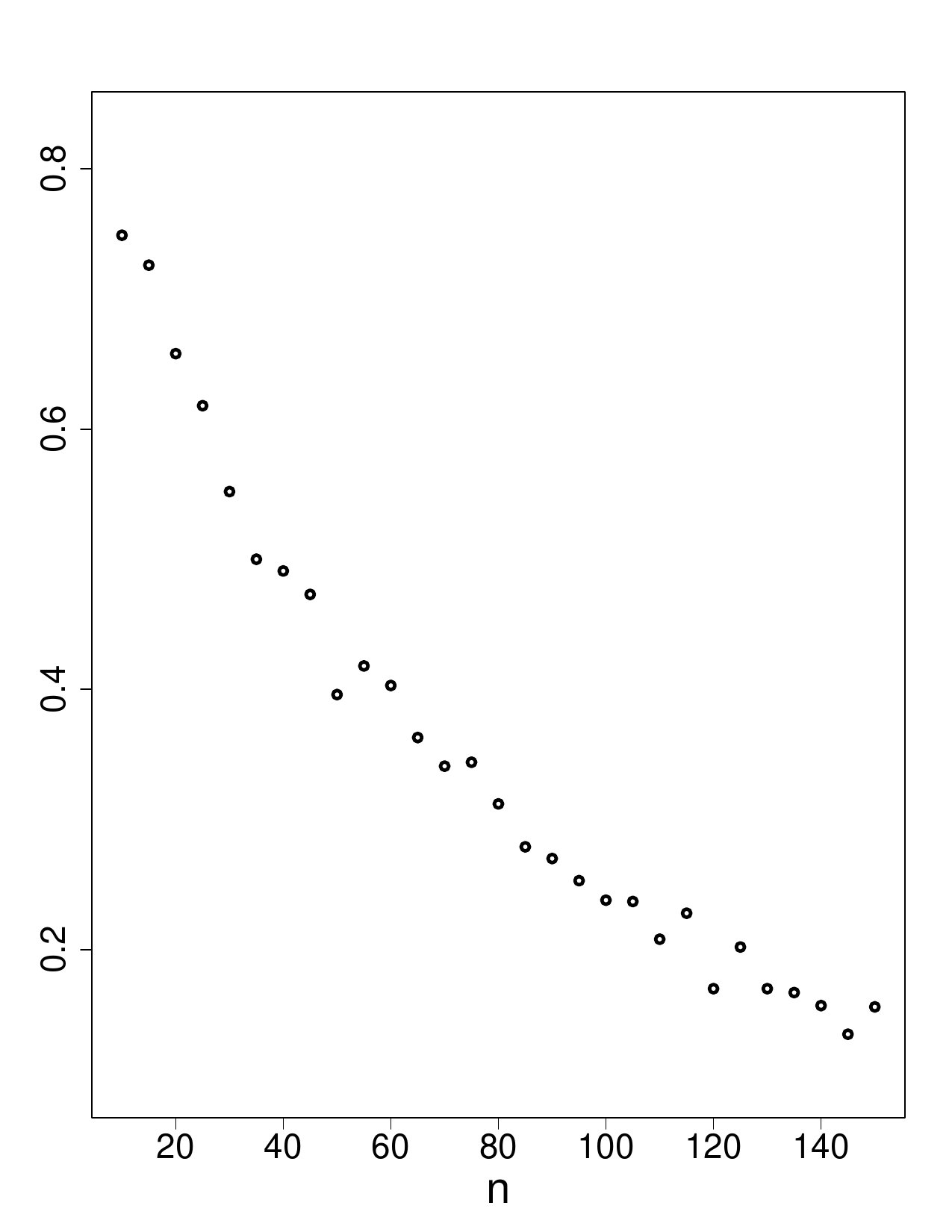}
\caption{Monte Carlo estimates of $\PP(\|\widehat{\theta}_{n}-\theta^*\|>0.5)$ for different values of $n$, the number of observations, with $\theta^*=(0.1,0.8,0.3)$ and Kendall's tau distance, the Hamming distance and the Spearman's footrule distance from left to right.}
\label{figconsis}
\end{figure}

\begin{figure}	
	\centering
	\begin{subfigure}
		\centering
		\includegraphics[height=3.6cm,width=3.6cm]{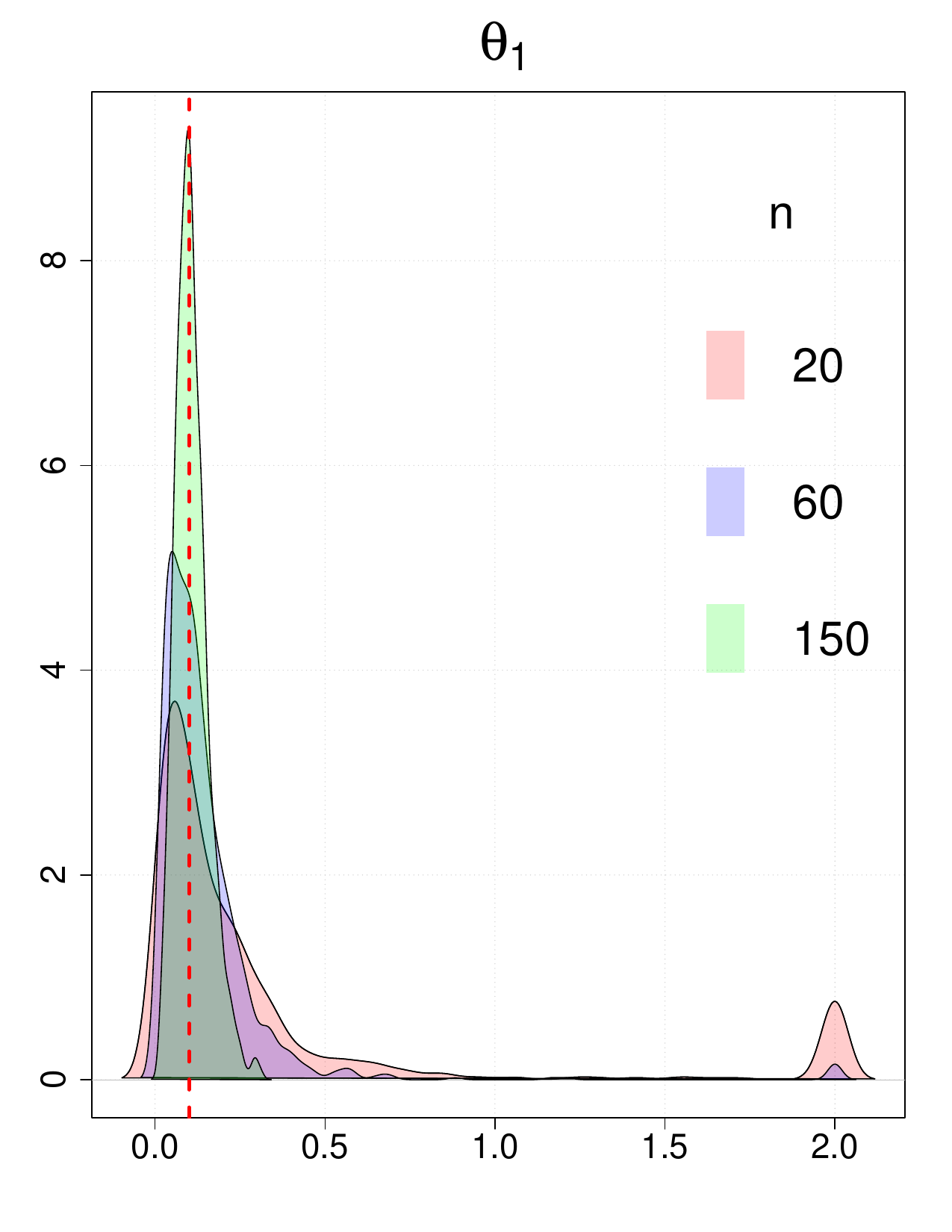}
		\includegraphics[height=3.6cm,width=3.6cm]{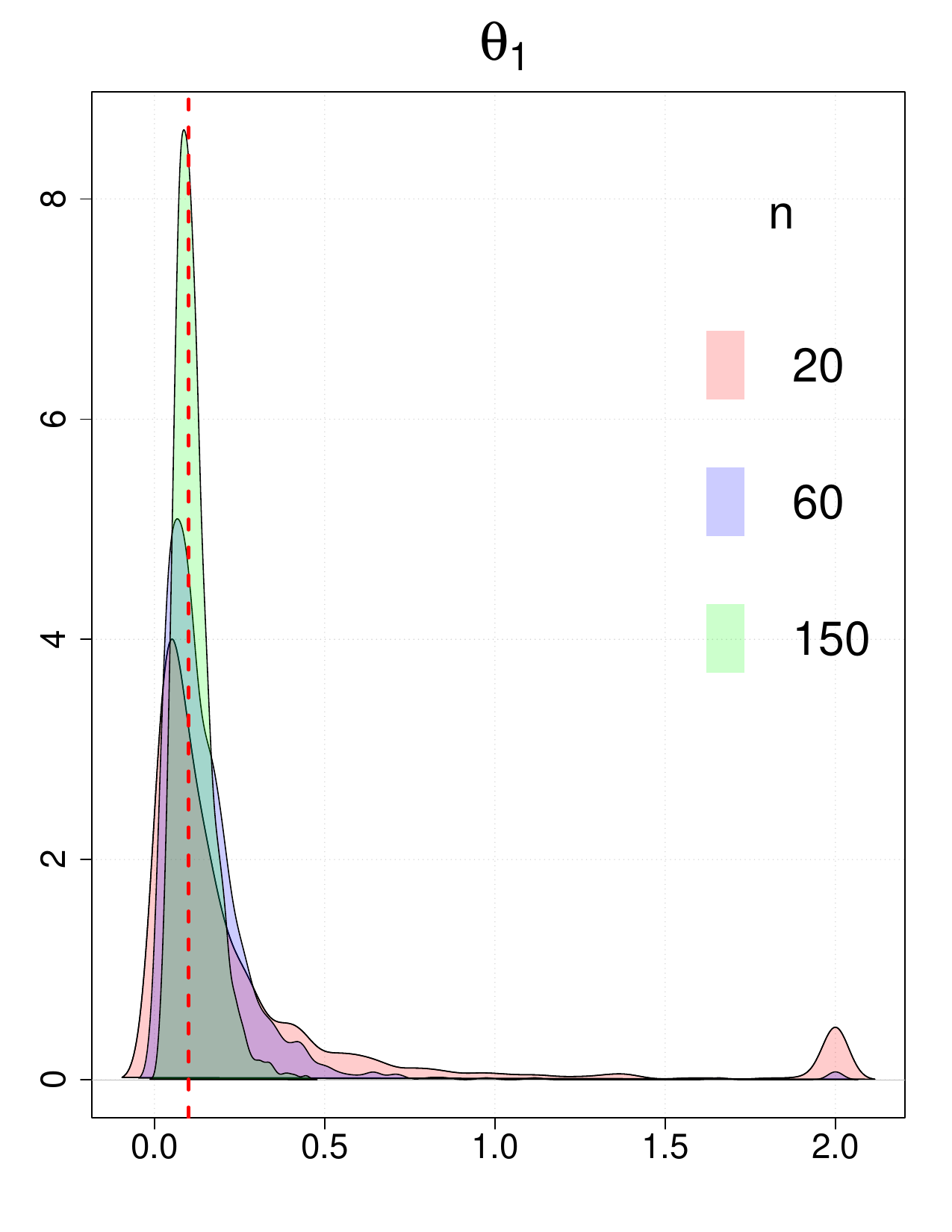}
		\includegraphics[height=3.6cm,width=3.6cm]{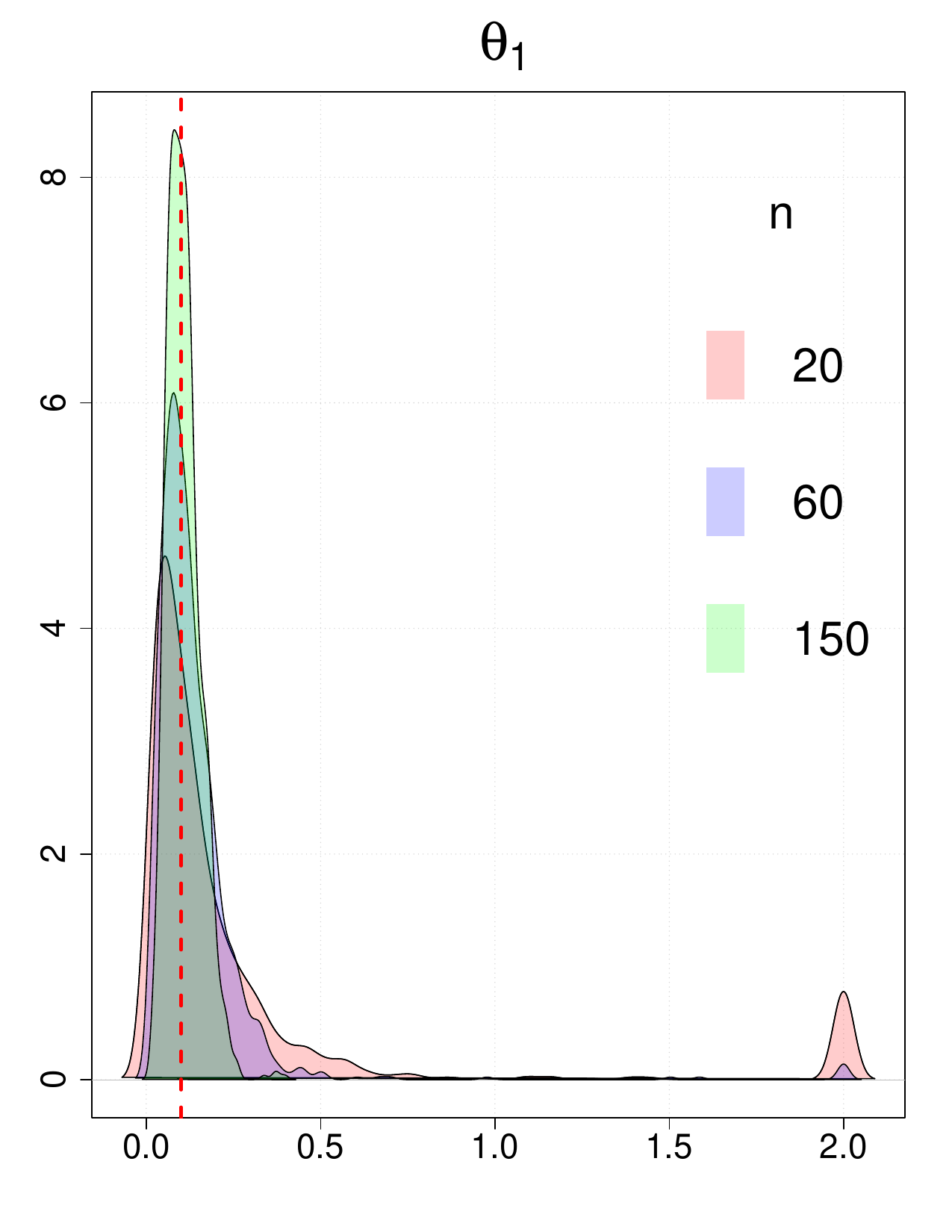}		
	\end{subfigure}
	\quad
	
	\begin{subfigure}
		\centering
		\includegraphics[height=3.6cm,width=3.6cm]{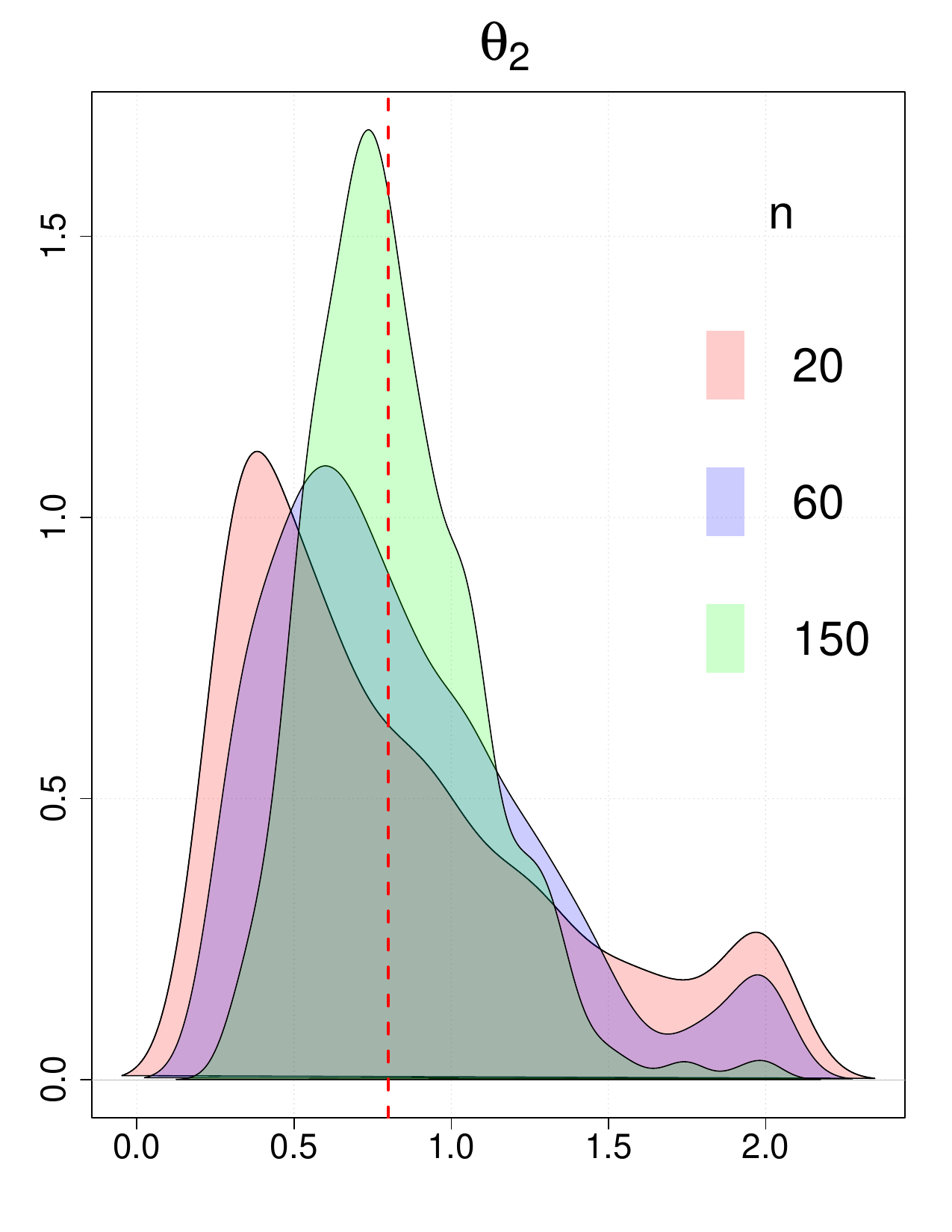}
		\includegraphics[height=3.6cm,width=3.6cm]{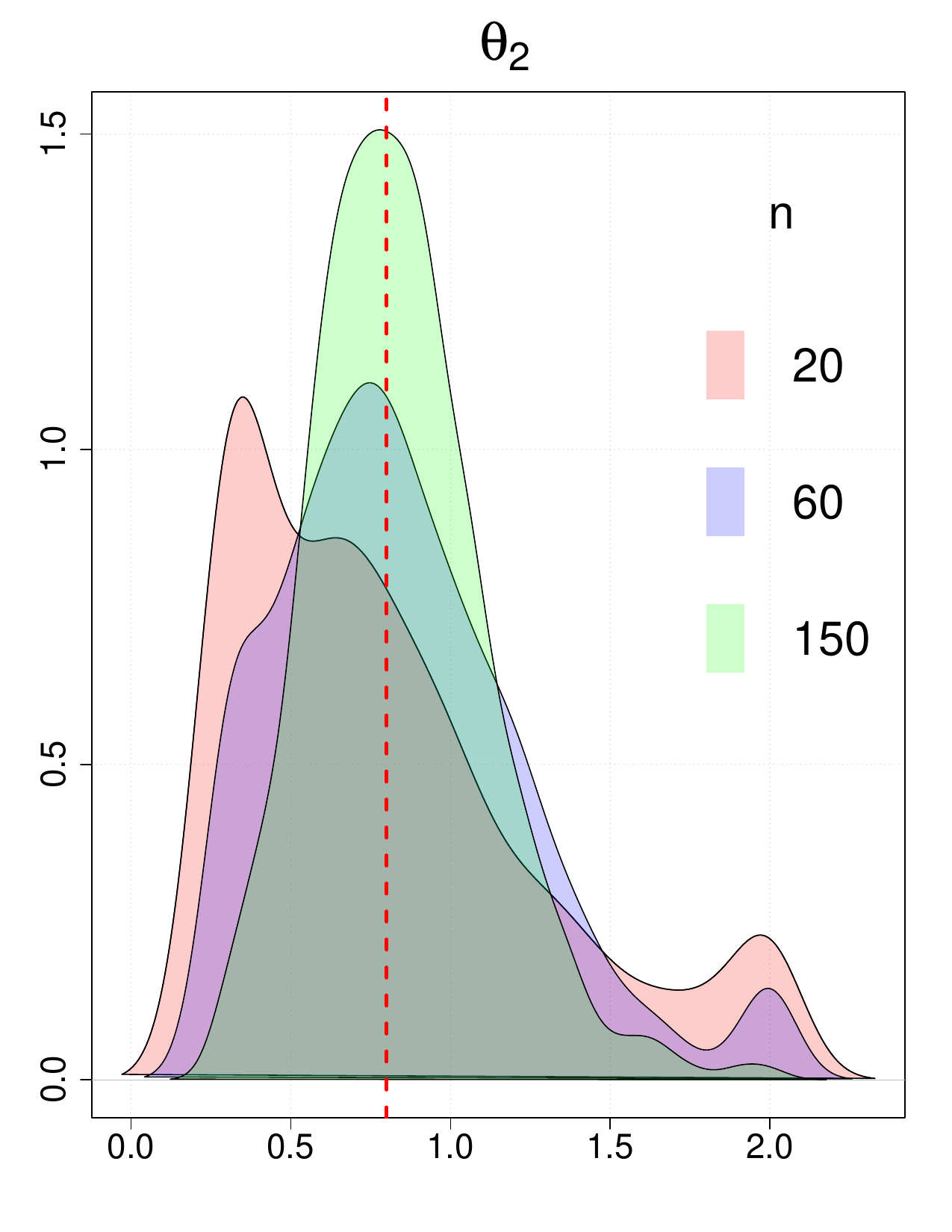}
		\includegraphics[height=3.6cm,width=3.6cm]{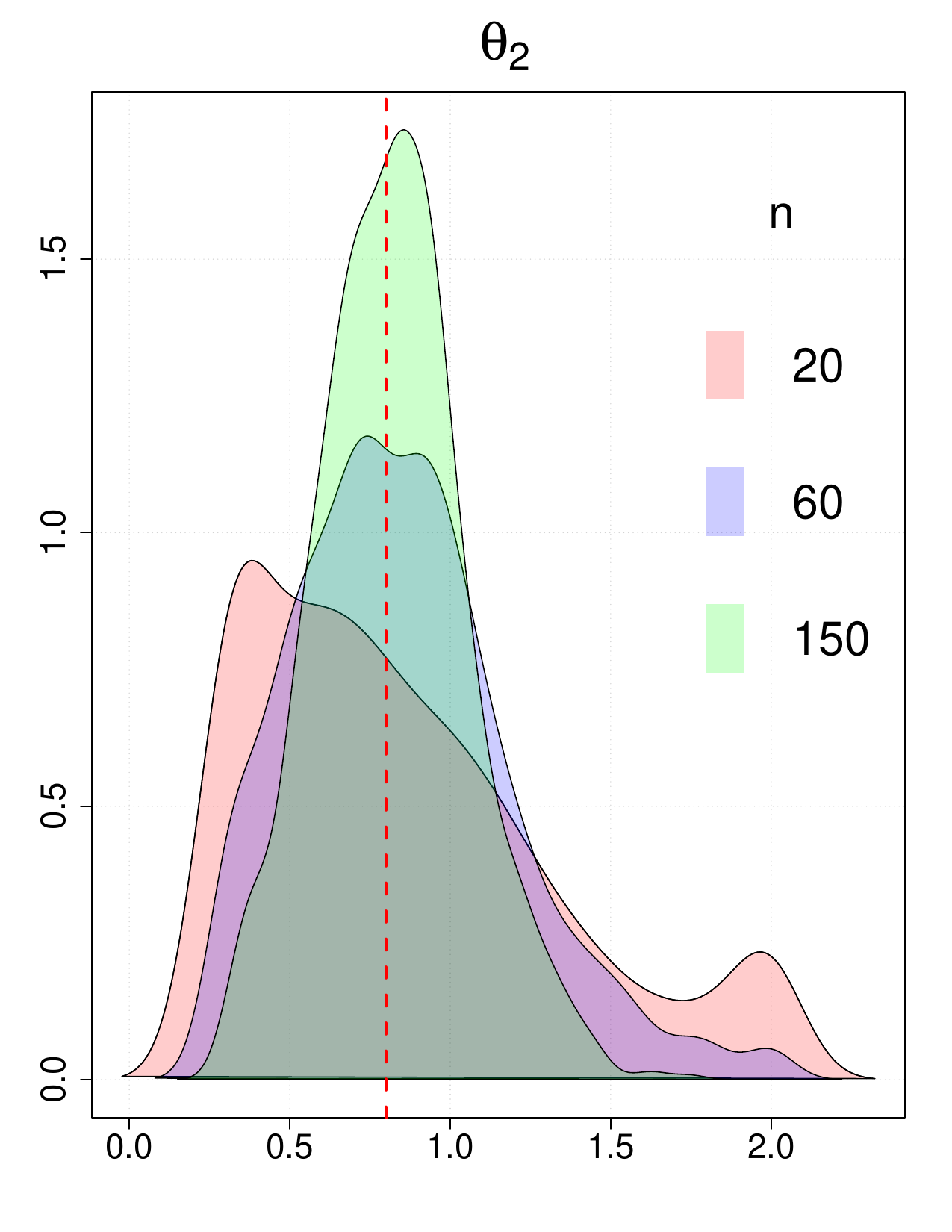}		
	\end{subfigure}
	\quad
	
	\begin{subfigure}
		\centering
		\includegraphics[height=3.6cm,width=3.6cm]{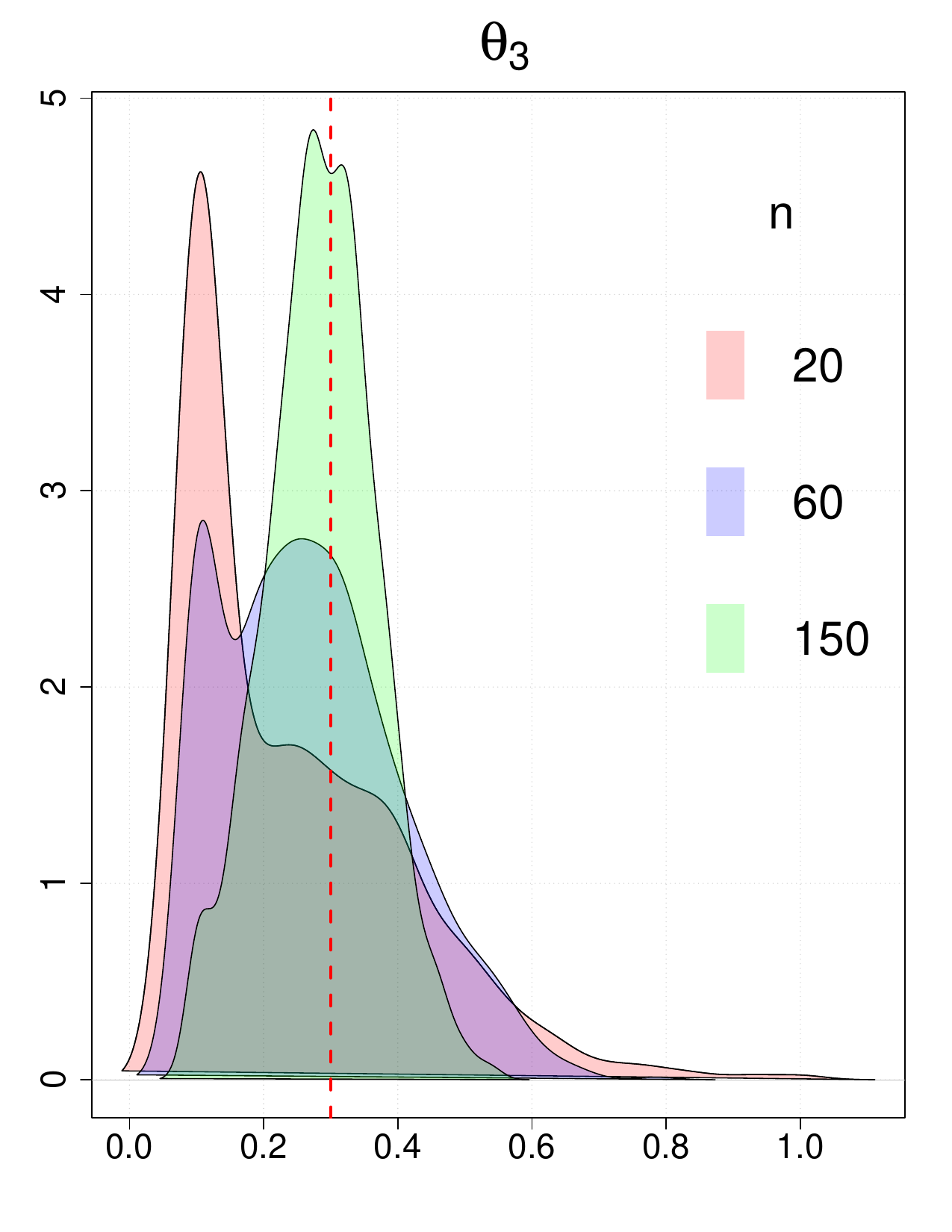}
		\includegraphics[height=3.6cm,width=3.6cm]{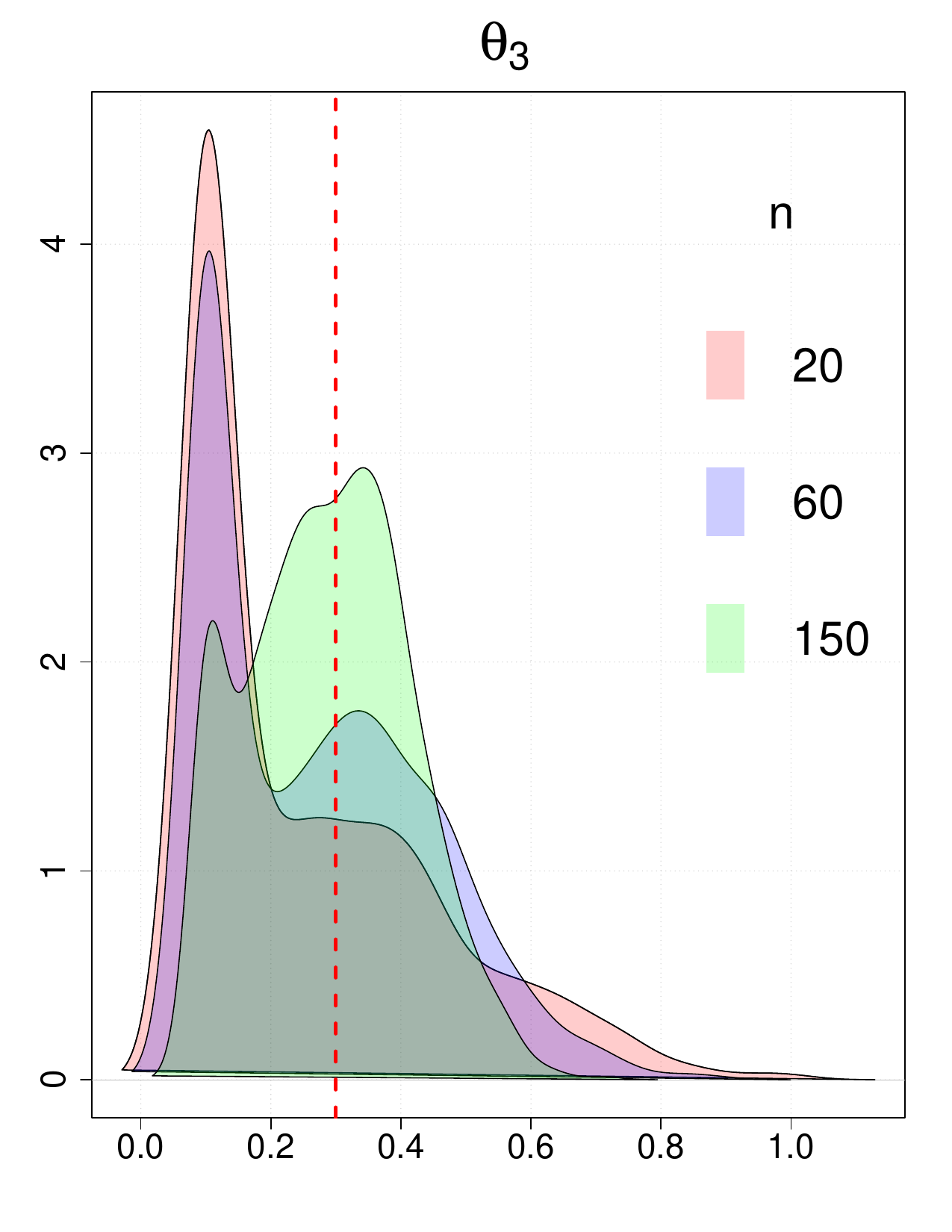}
		\includegraphics[height=3.6cm,width=3.6cm]{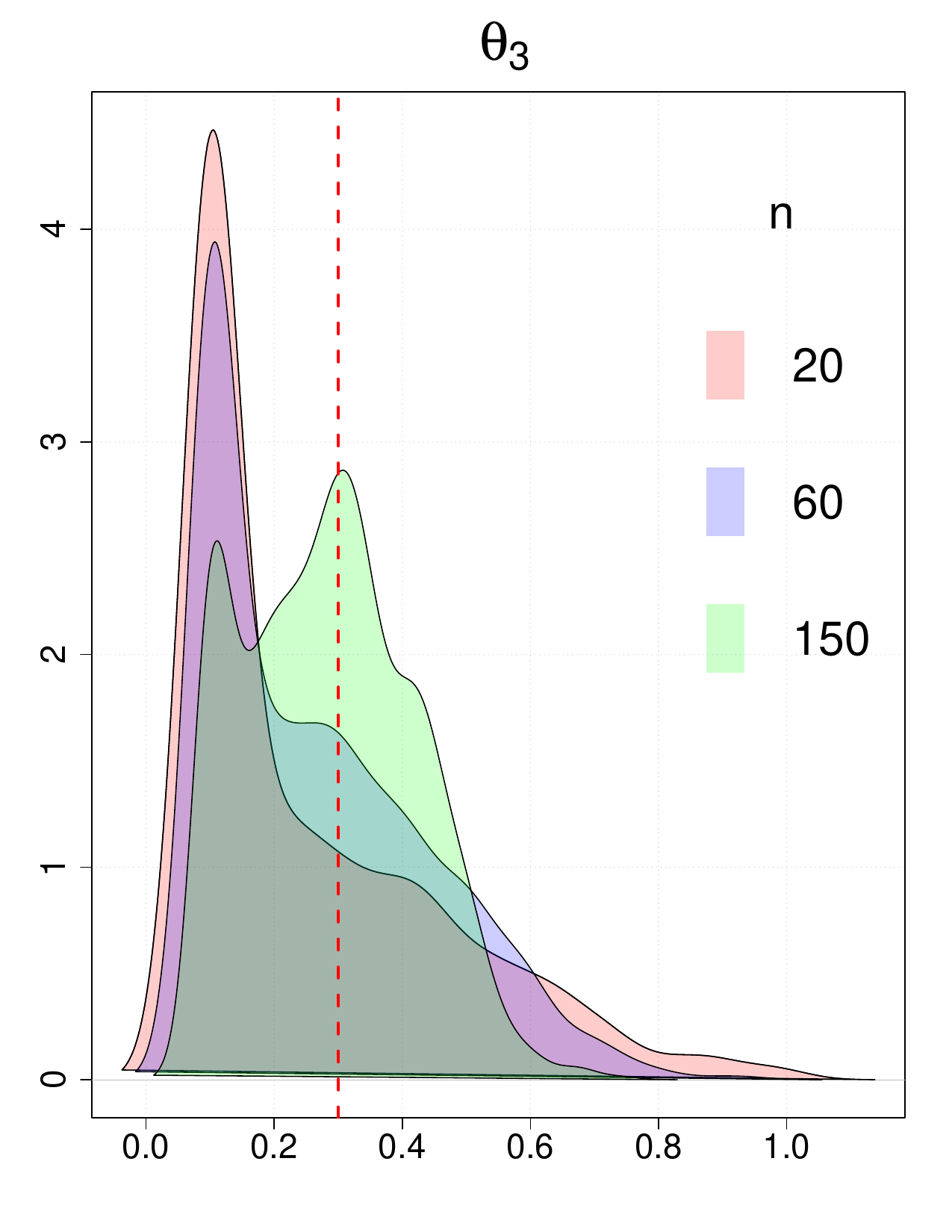}		
	\end{subfigure}
	\caption{Density of the coordinates of $\widehat{\theta}_n$ for the number of observations $n=20$ (in red), $n=60$ (in blue), $n=150$ (in green) with $\theta^*=(0.1,0.8,0.3)$ (represented by the red vertical line). We used the Kendall's tau distance, the Hamming distance and the Spearman's footrule distance from left to right.}
	\label{figdensi}
\end{figure}

\bigskip

\begin{figure}
\center
\includegraphics[height=4cm,width=4cm]{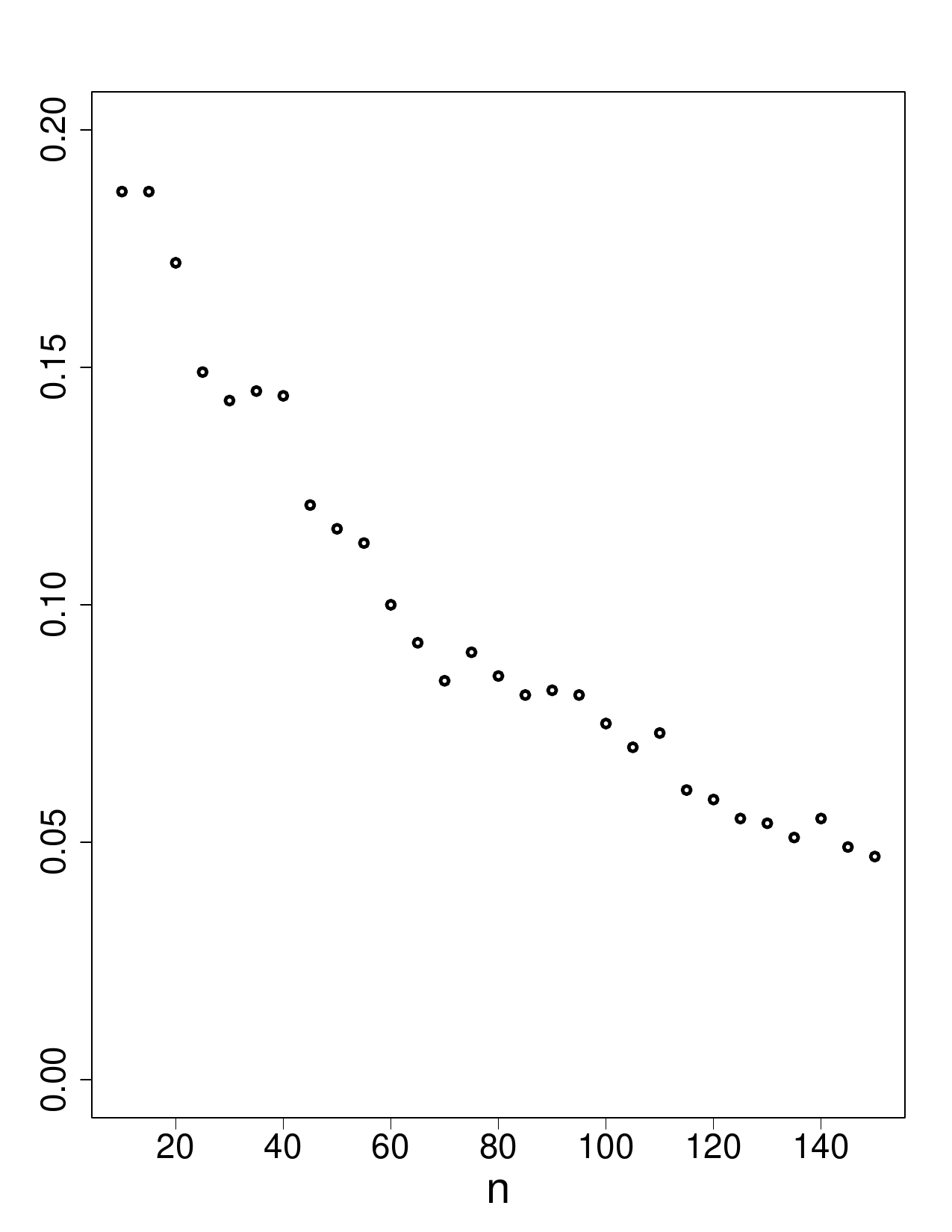}\includegraphics[height=4cm,width=4cm]{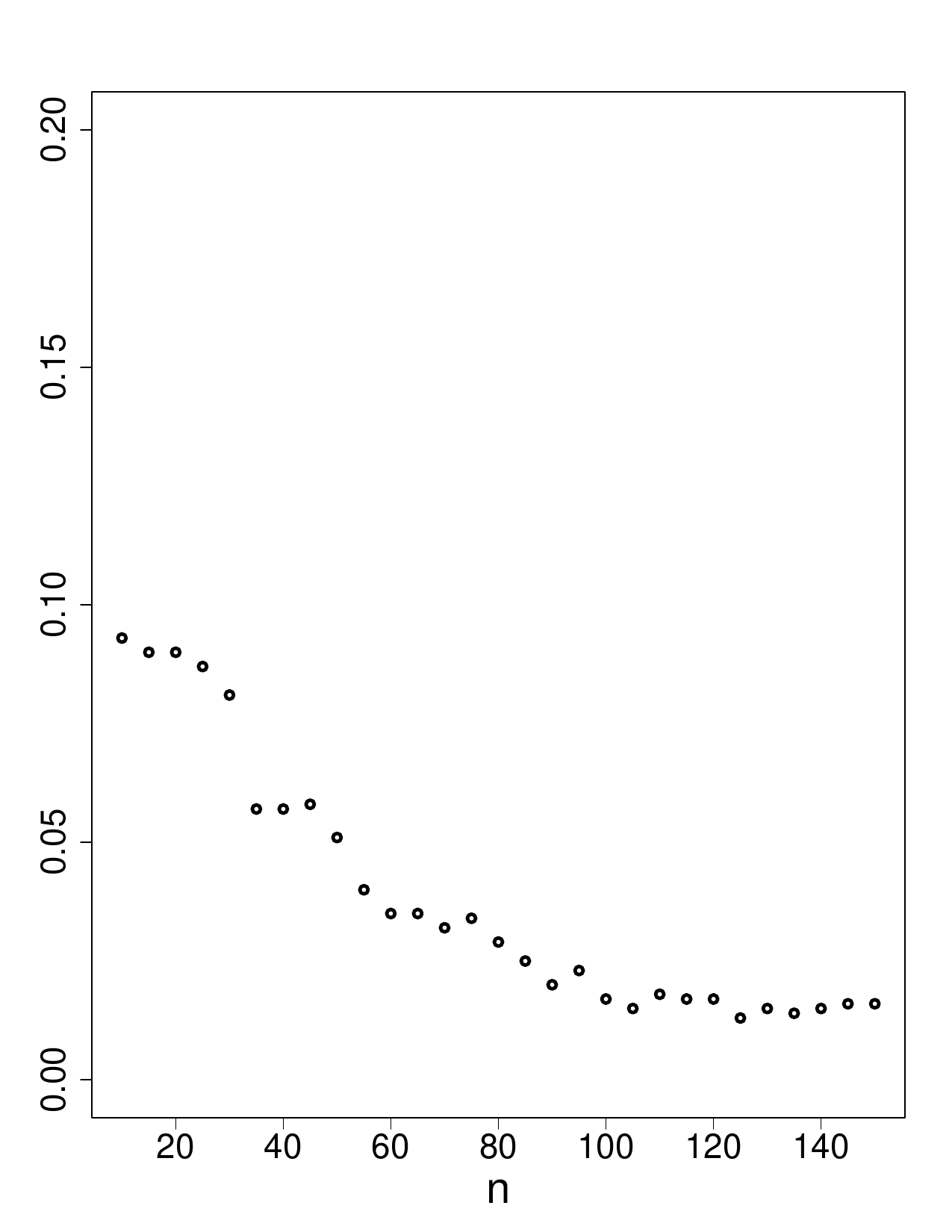}
\includegraphics[height=4cm,width=4cm]{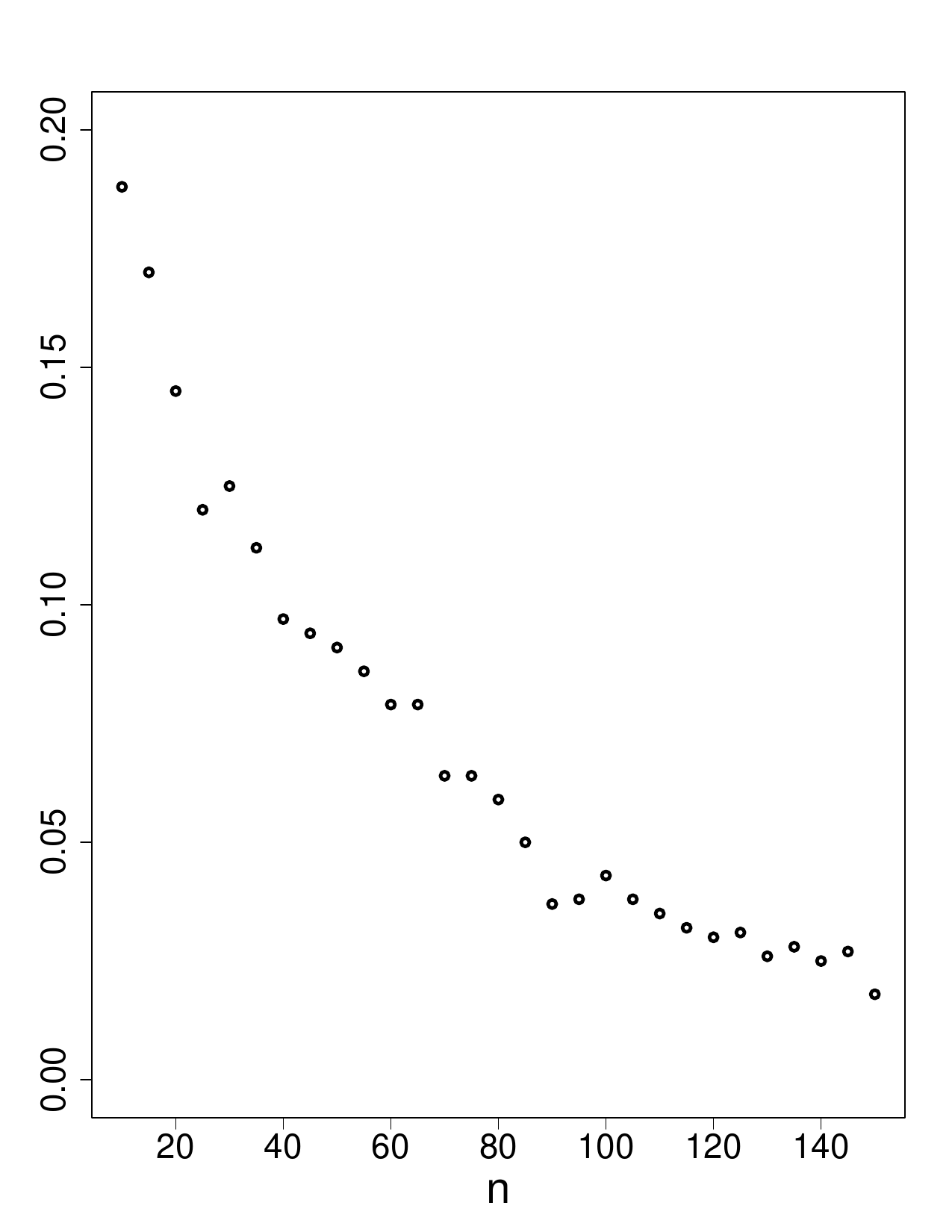}
\caption{Monte Carlo estimates of $\PP\left(\left|\widehat{Y}_{\widehat{\theta}_{n}}(\overline{\sigma}_n)-\widehat{Y}_{\theta^*}(\overline{\sigma}_n)\right|>0.3\right)$ for different values of $n$, the number of observations, with $\theta^*=(0.1,0.8,0.3)$, $\overline{\sigma}_n=(1\;4\;6)\in S_{n+3}$, and the Kendall's tau distance, the Hamming distance and the Spearman's footrule distance from left to right.}
\label{lastthrm}
\end{figure}

\bigskip

In Figure \ref{figdensi}, we display the density of the coordinates of the maximum likelihood estimator for different values of $n$ ranging from 20, 60 to 150. These densities have been estimated with a sample of 1000 values of the maximum likelihood estimator. We observe that the densities can be far from the true parameter for $n=20$ or $n=60$ but are quite close to it for $n=150$. Further, we see that for $n=150$, the Kendall's tau distance seems to give better estimates for $\theta_3^*$. However, the computation time of the distance matrix is much longer with the Kendall's tau distance than with the other distances.

\bigskip

In Figure \ref{lastthrm}, for a given $\overline{\sigma}_n$, we display estimates of the probability that the deviation between the prediction of $Y(\overline{\sigma}_n)$ given in \eqref{eq:pred} with the parameter $\hat{\theta}_{n}$ and the prediction of $Y(\overline{\sigma}_n)$ with the parameter $\theta^*$ exceeds $0.3$. Indeed, Theorem \ref{pred} ensures us that this probability converges to $0$ as $n\rightarrow +\infty$.

\subsection{Application to the optimization of Latin Hypercube Designs}\label{sec_LHD}
We consider here an application of Proposition \ref{pos} to find an optimal Latin Hypercube Design (LHD). A LHD is a design of experiments $(X_j)_{j\leq N}\in [0,1]^d$ where, for each component $i\in[1:d]$, the projections of $X_1,...,X_N$ on the component $i$ are equispaced in $[0,1]$ (see \cite{mckay_comparison_1979}). We will thus consider that each component of one $X_j$ is equal to $k/(N-1)$ for some $k\in [0:N-1]$. We also remark that we can always permute the variables so that the first component of $X_j$ is equal to $(j-1)\slash(N-1)$.  So, for each LHD $(X_j)_{j\leq N}$, there exist $\sigma_2,...,\sigma_d\in S_N$ such that for all $j\in [1:N]$, we have 
$$
X_j=\left(\frac{j-1}{N-1},\frac{\sigma_2(j)-1}{N-1},\cdots ,\frac{\sigma_d(j)-1}{N-1}\right).
$$
Hence, there is a bijection between the set of LHD with $N$ points and the set $S_N^{d-1}$.\\

Now, if $(X_j)_{j\leq N}$ is a LHD, we can define its measure of space filling quality as
$$
f((X_j)_{j\leq N})=\sup_{x\in [0,1]^d}\min_{j\in[1:N]}\|x-X_j\|,
$$
that is the largest distance of a point of $[0,1]^d$ to $(X_j)_{j\leq N}$. We remark that LHDs minimizing $f$ are called minimax \cite{santner_design_2003}. Our aim is to find a minimax LHD $(X_j^*)_{j\leq N}$ . However, given a LHD $(X_j)_{j\leq N}$, its quality $f((X_j)_{j\leq N})$ is  not an obvious quantity and its computation is expensive.

To estimate this quantity, we suggest to generate $N_{tot}$ random points $(x_l)_{l\leq N_{tot}}$ uniformly on $[0,1]^d$, to compute their distance to the LHD and to take the maximum value. This estimation is costly (because of the large number $N_{tot}$) and noisy (because of the randomness of the points $(x_l)_{l\leq N_{tot}}$). Thus, we suggest to use a Gaussian process model on $f$ and to apply the Expected Improvement (EI) strategy \cite{jones_efficient_1998}. Nevertheless, remark that $f$ is a positive function, whereas a Gaussian process realization can take negative values. In this case, different options are possible: firstly, we can ignore the information of the inequality constraint; secondly, we can use Gaussian process under inequality constraints (see \cite{bachoc2019maximum}); thirdly, we can use a transformation of the function to remove the inequality constraint. We choose here the third strategy and we model $\log(f)$ by a Gaussian process realization. We remark that $\log(f)$ can take positive and negative values.

We thus assume that the unknown function $\log(f)$ to minimize  is a realization of a Gaussian process. We have to find a positive definite kernel on $S_N^{d-1}$. Thanks to Proposition \ref{pos}, we have three positive definite kernels on $S_N$, thus on $S_N^{d-1}$ (taking the tensor product of these kernels). Thus, we apply the EI strategy with these three kernels to find the best LHD with $N_{\max}$ calls to the function $f$. The $N_{\max}\slash2$ first LHDs are generated uniformly on $S_{N}^{d-1}$ and the other ones are generated sequentially by following the EI strategy.

More precisely, for $ i \in [N_{\max}\slash 2 -1 : N_{\max}-1]$, let us explain how to choose the $i+1$-th observation, when we have observed the vectors $(\sigma_j^{(k)})_{j \in [ 2: d], k \in [1 : i ]}$ and the associated observations $\left[ \log\left( f\left((\sigma_{j}^{(k)})_{j \in [2:d]} \right) \right) \right]_{k \in [1:i]}$ (we remark that $f$ can be defined equivalently as a function $f(\sigma_2,\ldots,\sigma_d)$ of $d-2$ permutations or as a function $f( (X_j)_{j \leq N} )$ of a LHD). We model $\log(f)$ by a realization of a Gaussian process $Z$, with a conditional mean written $\widehat{Z}_i(\sigma_2,\cdots, \sigma_d)$ and a conditional variance written $\widehat{s}_i^2(\sigma_2,\cdots, \sigma_d)$, given
\begin{equation} \label{eq:observations:EI}
\{ Z( (\sigma_j^{(k)})_{j=2,\ldots,d} ) = \log(f( (\sigma_j^{(k)})_{j=2,\ldots,d} )) \}_{k = 1,\ldots,i}.
\end{equation}
Then, we let 
$$
(\sigma_2^{(i+1)},\cdots,\sigma_d^{(i+1)})\in \underset{\sigma_2,\cdots,\sigma_d \in S_N}{\mathrm{argmax}} \; EI(\sigma_2,\cdots,\sigma_d),
$$
where 
$$
EI(\sigma_2,\cdots,\sigma_d)= \E_i\left(\max \left(M_i - Z(\sigma_2,\cdots,\sigma_d) , 0\right) \right),
$$
where $M_i=\min_{k\in [1:i]} \log (f(\sigma_2^{(k)},\cdots,\sigma_d^{(k)}))$, and $\E_i$ is the expectation conditionally to the observations \eqref{eq:observations:EI}. We have an explicit expression of $EI$,
$$
EI=(M_i-\widehat{Z}_i) \Phi\left(\frac{M_i- \widehat{Z}_i}{\widehat{s}_i} \right)+\widehat{s}_i \phi \left(\frac{M_i- \widehat{Z}_i}{\widehat{s}_i} \right),
$$
where $\phi$ and $\Phi$ are the standard normal density and distribution functions. To choose $(\sigma_2^{(i+1)},\cdots,\sigma_d^{(i+1)})$, we thus solve an optimization problem for $EI$, which has a very small cost compared to evaluating $f$, since the computation of EI is instantaneous. We thus choose the set of permutations that maximizes $EI$ over 2000 sets of uniformly distributed permutations.

We refer to \cite{jones_efficient_1998} for more details on EI. The parameters of the covariance functions are estimated by maximum likelihood at each step.\\

We run an experiment where we compare the performances of the 5 following methods:
\begin{itemize}
\item Random sampling, to generate $N_{\max}$ LHDs of the form $\{(X_j^{(i)})_{j\leq N};\; i\leq N_{\max} \}$ by generating $\sigma_{2},...,\sigma_d$ uniformly and independently;
\item Simulated annealing, choosing that two LHDs $(\sigma_j)_{2\leq j\leq d}$ and  $(\sigma_j')_{2\leq j\leq d}$ are neighbours if there exist transpositions $\tau_2,...,\tau_d$ such that for all $j\in [2:d]$, we have $\sigma_j'=\sigma_j\tau_j$;
\item EI with Kendall distance;
\item EI with Hamming distance;
\item EI with Spearman distance.
\end{itemize}
For each method, the performance indicator is $\min_{i=1,...,N_{\max}}f((X_j^{(i)})_{j\leq N})$.
Here, we take $d=3$, $N=15$, $N_{\max}=200$ and $N_{tot}=27\times 10^6$.
\begin{figure}
\centering
\includegraphics[height=7cm,width=7cm]{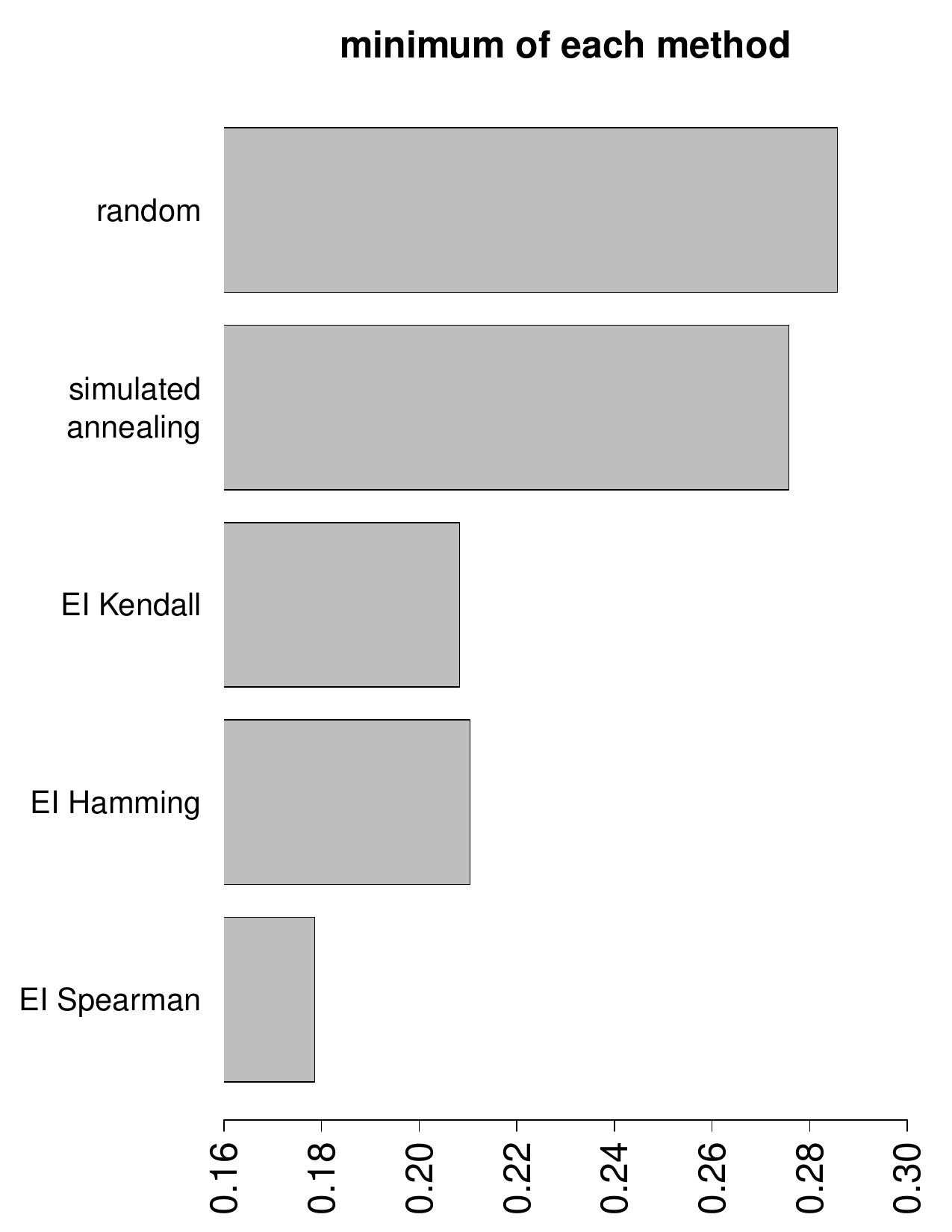}
\caption{Minimal quality of LHD found by the five methods.}
\label{min_EI}
\end{figure}

\begin{figure}
\centering
\includegraphics[height=8cm,width=8cm]{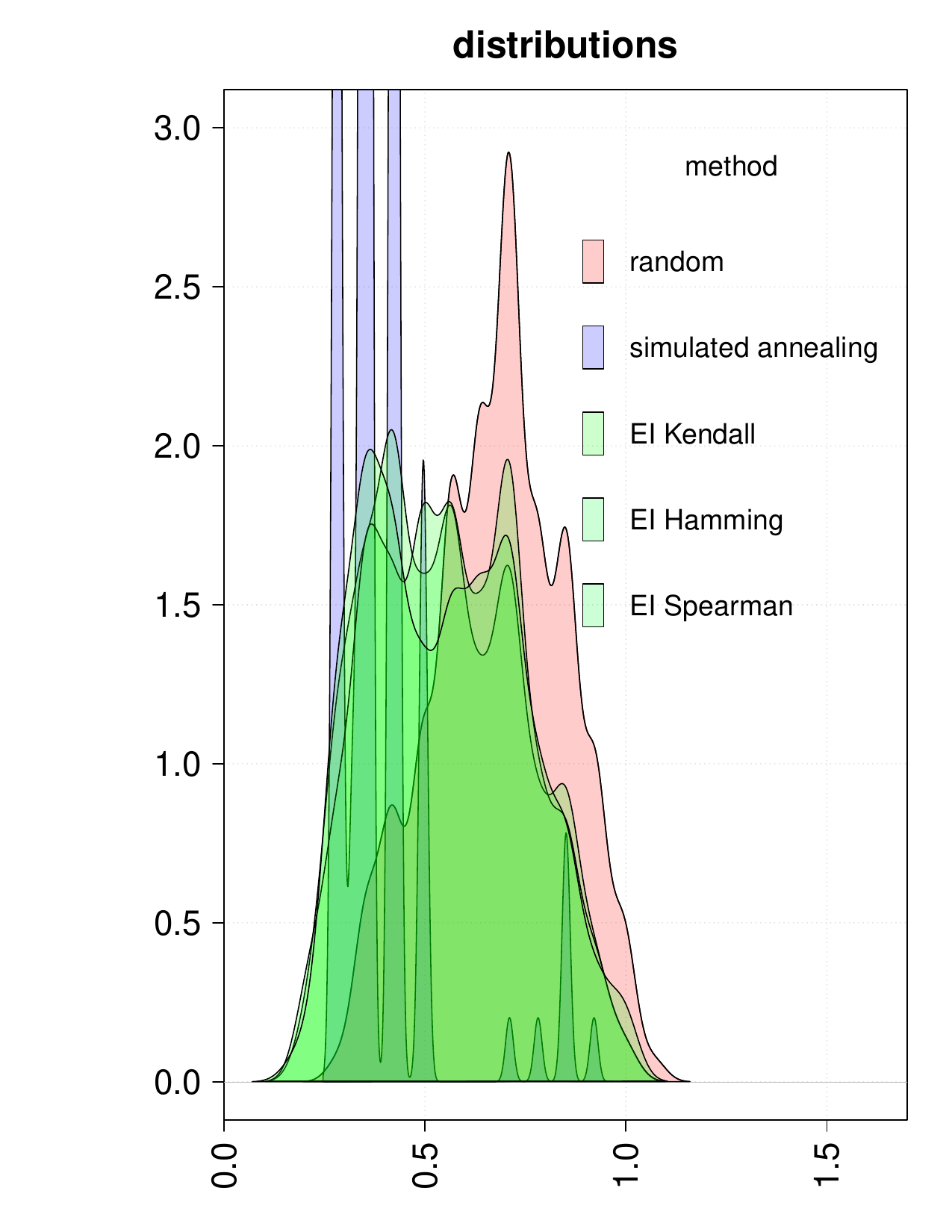}
\caption{Distributions of the quality of LHDs for the five methods.}
\label{distrib_EI}
\end{figure}

We can see in Figure \ref{min_EI} that the best LHDs are found by EI, particularly with the Spearman distance. The simulated annealing is slightly better than random sampling.

We display in Figure \ref{distrib_EI} the distributions of the qualities $\{f((X_j^{(i)})_{j\leq N});i\leq N_{\max} \}$ for the five methods. We can notice that the simulated annealing does not explore the set of all the LHDs and does not find the best minimum. EI performs minimisation and exploration to find better minima. We can then provide the best LHD of EI with the Spearman distance. This LHD is given by the permutations
\begin{eqnarray*}
\sigma_2&=& ( 5 ,   2 ,   1 ,   7 ,   6 ,   3  ,  4 ,   8,   11,    13,    12,     9 ,   10  ,  14 ,   15),\\
\sigma_3 &=&  (  3  ,  6,    1,    8 ,   4 ,   9,   15,    7,   12,     5 ,   13 ,   10  ,   2,    11,    14).
\end{eqnarray*}

To conclude, the kernels on permutations provided in Section \ref{s:kernrank} enable us to use EI that gives much better results than simulated annealing or random sampling to find the best LHD.

\section{Covariance model for partial ranking}\label{s:kernpartial}

\subsection{A new kernel on partial rankings}\label{s:newkernel}
In application, it can happen that partial rankings rather than complete rankings are observed. A partial ranking aims at giving an order of preference between different elements of $X$ without comparing all the pairs in $X$.  Hence, a partial ranking $R$ is a statement of the form 
\begin{equation}\label{partialrank}
X_1\succ X_2\succ\cdots\succ X_m,
\end{equation}
where $m<N$, and $X_1,\cdots,X_m$ are disjoint sets of $X=\{x_1,x_2,\cdots,x_N \}$. The partial ranking means that any element of $X_j$ is preferred to any element of $X_{j+1}$ but the elements of $X_j$ cannot be ordered. Given a partial ranking $R$, we consider the following subset of $S_N$ 
\begin{eqnarray}
E_R&:=&\left\{\sigma \in S_N:\; \sigma(i_1)<\sigma(i_2)<\cdots<\sigma(i_m)\right.\nonumber\\
& &
\left.\mbox{ for any choice of }(x_{i_1},\cdots,x_{i_m})\in X_1\times\cdots\times X_m\;
\right \}.
\label{ER}
\end{eqnarray}
In the statistical literature, there is  a  natural way to extend a positive definite kernel $K$ on $S_N$ to the set of partial rankings (see \cite{kondor2010ranking}, \cite{jiao2017kendall}). To do so, one considers for $R$ and $R'$ two partial rankings the following averaged kernel 
\begin{equation}\label{kernel1}
\mathcal{K}(R,R'):=\frac{1}{|E_R| |E_{R'}|}\sum_{\sigma \in E_R} \sum_{\sigma' \in E_{R'}} K(\sigma,\sigma').
\end{equation}
Here,  $|E_R|$ denotes the cardinal of the set $E_R$. 
Notice that, if  $K$ is a positive definite kernel on permutations, then $\mathcal{K}$  is also a positive definite kernel \cite{haussler_convolution_1999}. Indeed,  if $R_1,\cdots,R_n$ are partial rankings and if $(a_1,\cdots,a_n)\neq0$, then
\begin{equation}
\sum_{i,j=1}^n a_i a_j \mathcal{K}(R_i,R_j)=\sum_{\sigma,\sigma' \in S_N}b_{\sigma} b_{\sigma'} K(\sigma,\sigma'),
\end{equation}
where we set
\begin{equation}
b_\sigma:=\sum_{i,\;\sigma \in R_i}\frac{a_i}{|E_{R_i}|}.
\end{equation}
Observe that the computation of $\mathcal{K}$ is very costly. Indeed, we have to sum over $|E_R||E_{R'}|$ permutations. Several works aim to reduce the computation cost of this kernel (see \cite{kondor2010ranking, lebanon_nonparametric_2008,Lomeli2019}). However, its efficient computation remains an issue.

In the following, we provide another  way  to extend the kernels $K_{\theta_1,\theta_2,\theta_3}$ to partial rankings. We will provide computational simplifications for this extension. First, define the measure of dissimilarity $d_{\mbox{avg}}$ on partial rankings as the mean of distances $d(\sigma,\sigma')$  $(\sigma\in E_R,\sigma'\in E_{R'})$. That is 
\begin{equation}\label{def_davg}
d_{\mbox{avg}}(R,R'):=\frac{1}{|E_R||E_{R'}|}\sum_{\sigma \in E_R}\sum_{\sigma \in E_{R'}}d(\sigma,\sigma').
\end{equation}
Since $d_{\mbox{avg}}(R,R)\neq 0$ in general, we need to define $d_{\mbox{partial}}$ as follows 
\begin{equation}\label{dev_dpartial}
d_{\mbox{partial}}(R,R'):=d_{\mbox{avg}}(R,R')-\frac{1}{2}d_{\mbox{avg}}(R,R)-\frac{1}{2} d_{\mbox{avg}}(R',R').
\end{equation}
\begin{prop}\label{prop_distance_partial}
$d_{\mbox{partial}}^{\frac{1}{2}}$ is a pseudometric on partial rankings (i.e. it satisfies the positivity, the symmetry, the triangular inequality and is equal to 0 on the diagonal $\{ (R,R),\;R $ is a partial ranking$\}$).
\end{prop}

We remark that other metrics on partial rankings are defined in \cite{critchlow2012metric}, in particular the Hausdorff metrics and the fixed vector metrics (based on the group representation of $S_N$). These two metrics are different from the one defined in \eqref{dev_dpartial}. Our suggested metric $d_{\mbox{partial}}$ will enable us to define positive definite kernels in Proposition \ref{prop:positive_partial}. In future work, it would be interesting to study the construction of positive definite kernels based on the Hausdorff and fixed vector metrics.

We further define
\begin{equation}\label{eq:kernelpartial}
\mathcal{K}_{\theta_1,\theta_2,\theta_3}(R,R'):=\theta_2 \exp(-\theta_1 d_{\mbox{partial}}(R,R')^{\theta_3}).
\end{equation}
\noindent
The next proposition warrants that this last function is in fact a covariance kernel, which will later enable to define Gaussian processes on partial rankings.
\begin{prop}\label{prop:positive_partial}
$\mathcal{K}_{\theta_1,\theta_2,\theta_3}$ is a positive definite kernel for the Kendall's tau distance, the Hamming distance and the Spearman's footrule distance.
\end{prop}

\subsection{Kernel computation in partial ranking}\label{s:computationkernel}

At a first glance, the computation of the kernel $\mathcal{K}_{\theta_1,\theta_2,\theta_3}(R,R')$ on partial rankings may still appear very costly due to the evaluation of $d_{\mbox{partial}}$. Indeed, we have to sum $|E_R||E_{R'}|$ elements for $d_{\mbox{avg}}(R,R')$, $|E_R|^2$ elements for $d_{\mbox{avg}}(R,R)$ and $|E_{R'}|^2$ elements for $d_{\mbox{avg}}(R',R')$. However, this computation problem can be quite simplified. As we will show in this subsection,  the mean of the distances is much easier to compute than the mean of exponential of distances. We write $d_{\tau,\mbox{avg}}$ (resp. $d_{H,\mbox{avg}}$ and $d_{S,\mbox{avg}}$) for the average distance in \eqref{def_davg} when the distance on the permutations is $d_\tau$
(resp. $d_H$ and $d_S$).

\noindent
To begin with, let us consider the case of top-$k$ partial rankings. 
A top-$k$ partial ranking (or a top-$k$ list) is a partial ranking of the form
\begin{equation}
x_{i_1}\succ x_{i_2} \succ\cdots\succ x_{i_k}\succ X_{rest},
\end{equation} 
where $X_{rest}:=X\setminus \{x_{i_1},\cdots,x_{i_k} \}$. It can be seen as the "highest rankings".
In order to alleviate the notations, let just write $I=(i_1,\cdots,i_k)$ for this top-$k$ partial ranking. The following proposition shows that the computation cost to evaluate $d_{\mbox{avg}}$ (and so the kernel values) might be reduced when the partial rankings are  in fact top-$k$ partial rankings. Before stating this proposition let us define some more mathematical objects. Let $I:=(i_1,\cdots,i_k)$ and $I':=(i^{\prime}_1,\cdots,i^{\prime}_k)$ be two top-$k$ partial rankings. Let 
$$\{j_1,\cdots, j_p\}:= \{i_1,\cdots,i_k\}\cap  \{i_1^{\prime},\cdots,i_k^{\prime}\}$$ 
where $j_1<j_2<\cdots<j_p$ and $p$ is an integer no larger than $k$. Let, for $l=1,\cdots p $, $c_{j_l}$ (resp. $c^{\prime}_{j_l}$) denotes the rank of $j_l$ in $I$ (resp. in $I'$). Further, let $r:=k-p$  and define  $\tilde{I}$ (resp.
$\tilde{I'}$) as the complementary set of  $\{j_1,\cdots, j_p\}$ in
$\{i_1,\cdots,i_k\}$ (resp. in $\{i_1^{\prime},\cdots,i_k^{\prime}\}$).
Writing these two sets in ascending order, we may finally define for
$j=1,\cdots,r$, $u_j$ (resp. $u'_j$) as the rank in $I$ (resp $I'$) of the $j$-th element of $\tilde{I}$ (resp. $\tilde{I'}$).

\begin{ex}
Assume that $n=7$, $I=(3,2,1,4,5)$ and $I'=(3,5,1,6,2)$. We have $(j_1,j_2,j_3,j_4)= (1,2,3,5)$ (the items ranked by $I$ and $I'$, in increasing order). Thus, $c_{j_1}=3,\;c_{j_2}=2,\;c_{j_3}=1,\;c_{j_4}=5$ and $c'_{j_1}=3,\;c'_{j_2}=5,\;c'_{j_3}=1,\;c'_{j_4}=2$. Further,  $u_1=4$ and $u'_1=4$.
\end{ex}

\begin{prop}\label{prop:computation}
Let $I$ and $I'$ be two top $k$-partial rankings. 
Set $N':=N-k-1$ and $m:=N-|I\cup I'|$. Then,
\begin{eqnarray*}
d_{\tau,\mbox{avg}}(I,I') &=&\sum_{1\leq l <l' \leq p }\mathds{1}_{(c_{j_l}<c_{j_{l'}},c'_{j_l}>c'_{j_{l'}})\text{ or }(c_{j_l}>c_{j_{l'}},c'_{j_l}<c'_{j_{l'}})}+   r(2k+1-r)\\& & -\sum_{j=1}^r(u_j+u'_j)+r^2+ \begin{pmatrix}
N-k\\2
\end{pmatrix}-\frac{1}{2}\begin{pmatrix}
m\\2
\end{pmatrix}, \\
d_{H,\mbox{avg}}(I,I')&=&\sum_{l =1}^p\mathds{1}_{c_{j_l}\neq c_{j_l}'}+m\frac{N-k-1}{N-k}+2r,\\
d_{S,\mbox{avg}}(I,I')&=&\sum_{l=1}^p|c_{j_l}-c'_{j_l}|+r(N+k+1)-\sum_{j=1}^r(u_j+u'_j)\\ & & +mN'-\frac{mN'(2N'+1)}{3(N'+1)}.
\end{eqnarray*}
\end{prop}
Notice that the  sequences $(c_{j_l})$, $(c^{\prime}_{j_l})$ and 
$(u_{j})$, $(u^{\prime}_{j})$ are easily computable and so $d_{\mbox{avg}}(I,I')$ too. Let us discuss an easy example to handle the computation of the previous sequences.

\begin{ex}
Assume that $n=7$, $I=(3,2,1,4,5)$ and $I'=(3,5,1,6,2)$. Proposition \ref{prop:computation} %combined with  \eqref{def_davg} 
leads to
$$
d_{\tau,\mbox{avg}}(I,I')=6,\;\;d_{S,\mbox{avg}}(I,I')=4.5,\;\;d_{S,\mbox{avg}}(I,I')=11.5.
$$
\end{ex}
\noindent
To compute the pseudometric $d_{\mbox{partial}}$ defined in \eqref{dev_dpartial}, we also need to compute $d_{\tau,\mbox{avg}}$ on the diagonal $\{ (I,I)|\; I$ is a top-$k$ partial ranking$\}$. The following corollary gives these computations.
%In the following remark, we detail how Proposition \ref{prop:computation} yields explicit expression for $\mathcal{K}_{\theta_1,\theta_2}(R,R)$ in Remark \ref{normal} when $R$ is a top-$k$ partial ranking.
\begin{corol}
Let $I$ be a top-$k$ partial ranking. Then,
%Proposition \ref{prop:computation} implies that $\mathcal{K}_{\theta_1,\theta_2}(I,I)$ can be easily computed. We have
\begin{eqnarray*}
d_{\tau,\mbox{avg}}(I,I)&=&\frac{1}{2}\begin{pmatrix}
N-k\\ 2
\end{pmatrix},\\
d_{H,\mbox{avg}}(I,I)&=&  N-k-1,\\
d_{S,\mbox{avg}}(I,I)&=&  (N-k)(N-k-1)+\frac{(N-k-1)(2N-2k-1)}{3}.
\end{eqnarray*}
\end{corol}
\noindent

\begin{rmk}
Similar results as Proposition \ref{prop:computation} are stated in Sections III.B and III.C of \cite{critchlow2012metric} for the Hausdorff metrics and the fixed vector metrics respectively.
\end{rmk}

In the case of the Hamming distance, we may step ahead and provide a simpler computational formula for the average distance between two partial rankings whenever their associated partitions share the same number of members (see Proposition \ref{prop::computationH} below). More precisely 
%We give a second computation of $d_{\mbox{avg}}$ in Proposition \ref{prop::computationH} for more general partial rankings than top-$k$ partial rankings, but only for the Hamming distance. Assume that 
let $R_1$ and $R_2$ be two partial rankings such that
\begin{equation}
R_i=X_1^i\succ\cdots\succ X_k^i,\; i=1,2,
\end{equation}
assume also that for $j=1,\cdots,k$,  $|X_j^1|=|X_j^2|$ and denote by $\gamma_j$ this integer. Obviously, $N=\sum_{j=1}^k \gamma_j$ so that $\gamma:=(\gamma_j)_j$ is an integer partition of $n$.
Further, when $1=\gamma_1=\gamma_2=\cdots=\gamma_{k-1}$ and $\gamma_k=N-k+1$ one is in the  top-$(k-1)$ partial ranking case.
For $j=1,\cdots, k$, let  $\Gamma_j$ be the set of all integers lying in $\left[\sum_{l=1}^{j-1}\gamma_l+1,\sum_{l=1}^j\gamma_l\right]$. Set further, 
$$
S_\gamma :=S_{\Gamma_1}\times S_{\Gamma_2}\times \cdots\times S_{\Gamma_k},
$$
where $S_{\Gamma_i}$ is the set of permutations on $\Gamma_i$.
Notice that $S_\gamma$ is nothing more than the subgroup of $S_n$ letting invariant the sets ${\Gamma_j}$ ($j=1,\cdots, k$). So that, for $i=1,2$,  we can write $E_{R_i}$ as a right coset $R_i=S_\gamma \pi_i$ for some $\pi_i \in E_{R_i}$.  With these extra notations and definitions, we are now able to compute $d_{H,\mbox{avg}}(R_1,R_2)$.
\begin{prop}\label{prop::computationH}
In the previous setting, we have
\begin{equation}\label{eq_H_partial}
d_{H,\mbox{avg}}(R_1,R_2)=|\{i,\; \Gamma(\pi_1(i))\neq \Gamma(\pi_2(i))\}|+\sum_{j=1}^k\frac{\gamma_j}{N}(\gamma_j-1),
\end{equation}
where, for $1\leq l\leq N$, $\Gamma(l)$ is the integer $j$ such that $l \in \Gamma_j$.
\end{prop}
Note that in \eqref{eq_H_partial}, the term $|\{i,\; \Gamma(\pi_1(i))\neq \Gamma(\pi_2(i))\}|$ counts the number of item $i\in [1:N]$ that are ranked differently in $R_1$ and $R_2$.

\subsection{Numerical experiments}\label{section_partial_num}
We have proposed in Section \ref{s:newkernel} a new kernel $\mathcal{K}_{\theta_1,\theta_2,\theta_3}$ defined by \eqref{eq:kernelpartial} on partial rankings. We show in Section \ref{s:computationkernel} that in several cases (for example with top-$k$ partial rankings), we can reduce drastically the computation of this kernel. Another direction is given in \cite{jiao2017kendall} by considering the averaged Kendall kernel and reducing the computation of this kernel on top-$k$ partial rankings. This kernel is available on the R package { \verb kernrank }. We write $\mathcal{K}$ the averaged Kendall kernel, and we define $\mathcal{K}_{\theta_1}:=\theta_1 \mathcal{K}$.

In this section, we compare our new kernel $\mathcal{K}_{\theta_1,\theta_2,\theta_3}$ with the averaged Kendall kernel $\mathcal{K}_{\theta_1}$ in a numerical experiment where an objective function indexed by top-$k$ partial rankings is predicted, by Kriging. We take $N=10$ and for simplicity, we take the same value $k=4$ for all the top-$k$ partial rankings.
For a top-$k$ partial ranking $I=(i_1,i_2,i_3,i_4)$, the objective function to predict is $
f(I):=2i_1+i_2-i_3-2i_4$. We make 500 noisy observations $(y_i)_{i\leq 500}$ with $y_i=f(I_i)+\varepsilon_i$, where $(I_i)_{i\leq 500}$ are i.i.d. uniformly distributed top-k partial rankings and $(\varepsilon_i)_{i\leq 500}$ are i.i.d. $\mathcal{N}(0,\lambda^2)$, with $\lambda=\frac{1}{2}$. As in Section \ref{s:GPrank}, we estimate $(\theta,\lambda)$ by maximum likelihood. Then, we compute the predictions $(\widehat{y}_i')_{i\leq 500}$ of $y'=(y_i')_{i\leq 500}$, with $y'$ the observations corresponding to 500 other test points $(I_i')_{i\leq 500}$, that are i.i.d. uniform top-k partial rankings.

For the four kernels (our kernel $\mathcal{K}_{\theta_1,\theta_2,\theta_3}$ with the 3 distances and the averaged Kendall kernel $\mathcal{K}_{\theta_1}$), we provide the rate of test points that are in the 90$\%$ confidence interval together with the coefficient of determination $R^2$ of the predictions of the test points.
Recall that
$$
R^2:=1-\frac{\frac{1}{500}\sum_{i=1}^{500}\left( y_i' -\widehat{y}_i'\right)^2}{\frac{1}{500}\sum_{i=1}^{500}\left( y_i' - \overline{y'}\right)^2},
$$
where $\overline{y'}$ is the average of $y'$. The results are provided in Table \ref{table}.
\begin{table}
\centering
\begin{tabular}{|c|c|c|c|c|}
 \hline 
 kernel & $\mathcal{K}_{\theta_1,\theta_2,\theta_3}^\tau$ & $\mathcal{K}_{\theta_1,\theta_2,\theta_3}^H$  &$\mathcal{K}_{\theta_1,\theta_2,\theta_3}^S$ & $\mathcal{K}_{\theta_1}$ \\
  \hline
rate & 0.902 &  0.904 &  0.912 &  0.928 \\
$R^2$ & 0.887 & 0.996 &  0.996 & 0.070 \\
  \hline
\end{tabular}
\caption{Rate of test points that are in the 90$\%$ confidence interval and coefficient of determination for the four kernels.}
\label{table}
\end{table}

The rate of test points that are in the 90$\%$ confidence interval is close to $90\%$ for the four kernels. We can deduce that the parameters $(\theta,\lambda)$ are well estimated by maximum likelihood, even for the averaged Kendall kernel $\mathcal{K}_{\theta_1}$.

However, we can see that the coefficient of determination of the averaged Kendall kernel $\mathcal{K}_{\theta_1}$ is close to 0. The predictions given by the averaged Kendall kernel $\mathcal{K}_{\theta_1}$ are nearly as bad as predicting with the empirical mean. In the opposite way the coefficient of determination of our kernels is larger than $0.9$ for the Kendall distance, and larger than $0.99$ for the Hamming distance and the Spearman distance. That means that the prediction given by our kernels are much better than the empirical mean. \\

To conclude, we provide a class of positive definite kernels $\mathcal{K}_{\theta_1,\theta_2,\theta_3}$ which seems to be significantly more efficient than the averaged Kendall kernel $\mathcal{K}_{\theta_1}$, in the case of Gaussian process models on partial rankings.

\section{Conclusion}\label{conclu}
In this paper, we provide a Gaussian process model for permutations. Following the recent works of \cite{jiao2017kendall} and \cite{mania_kernel_2016}, we propose kernels to model the covariance of such processes and show the relevance of such choices. Based on the three distances on the set of permutations,  Kendall's tau, Hamming distance and Spearman's footrule distance, we obtain parametric families of  relevant covariance models. To show the practical efficiency of these parametric families, we apply them to the optimization of Latin Hypercube Designs. In this framework, we prove under some assumptions on the set of observations, that the parameters of the model can be estimated and the process can be forecasted using linear combinations of the observations, with asymptotic efficiency. Such results enable to extend the well-known properties of Kriging methods to the case where the process is indexed by ranks and tackle a large variety of problems. We remark that our asymptotic setting corresponds to the increasing domain asymptotic framework for Gaussian processes on the Euclidean space. It would be interesting to extend our results to more general sets of permutations under designs that do not necessarily satisfy Conditions 1 and 2.

We also show that the Gaussian process framework can be extended to the case of partially observed ranks. This corresponds to many practical cases. We provide new kernels on partial rankings, together with results that significantly simplify their computation. We show the efficiency of these kernels in simulations. We leave a specific asymptotic study of Gaussian processes indexed by partial rankings open for further research.

As highlighted in \cite{marden_analyzing_2014}, data consisting of rankings arise from many different fields. Our suggested kernels on total rankings and partial rankings could lead to different applications to real ranking data. We treated the case of regression in Sections \ref{s:appli} and \ref{section_partial_num}. In Section \ref{sec_LHD}, we used these kernels for an optimization problem. One could also use our suggested kernels in classification, as it is done in \cite{jiao2017kendall}, in \cite{mania_kernel_2016} or in \cite{kondor2010ranking}, and also using Gaussian process based classification \cite{GPML} with ranking inputs.

\section*{Acknowledgement} 
We are grateful to Jean-Marc Martinez for suggesting us the Latin Hypercube Design application. We are indebted to an associate editor and to three anonymous reviewers, for their comments and suggestions, that lead to an improved revision of the manuscript.

\bibliographystyle{abbrv}
\bibliography{references}

%\FloatBarrier

\clearpage

\appendix
\section{Proofs for Sections \ref{s:kernrank} and \ref{s:kernpartial}}

{\bf Proof of Proposition~\ref{defpos}}
\begin{proof}
We show that $K_{\theta_1,\theta_2}$ is a strictly positive definite kernel on $S_n$. It suffices to prove that, if $\nu>0$, the map $K$ defined by
\begin{equation}
K(\sigma,\sigma'):=e^{-\nu d(\sigma,\sigma')}
\end{equation}
is a strictly positive definite kernel.

\paragraph{Case of the Kendall's tau distance.}
 It has been shown in Theorem 5 of \cite{mania_kernel_2016} that $K$ is a strictly positive definite kernel on $S_N$ for the Kendall's tau distance. Nevertheless, we  provide here an other shorter and easier proof. The idea is to write $K(\sigma_1,\sigma_2)$ as $M(\Phi(\sigma_1),\Phi(\sigma_2))$, for an application $\Phi$ defined below, for a function $M$ defined below and for $\sigma_1,\sigma_2 \in S_N$. We will then show that $M$ is strictly positive definite and which will imply that $K$ also is.
 
 Let
$$
\Phi:\begin{array}{ccl}S_N & \longrightarrow & \{0,1\}^{\frac{N(N-1)}{2}}\\
\sigma & \longmapsto & (\mathds{1}_{\sigma(i)<\sigma(j)})_{1\leq i<j\leq N}.
\end{array}
$$ 
Further, define
$$
M:\begin{array}{ccl}
\{0,1\}^{\frac{N(N-1)}{2}} \times \{0,1\}^{\frac{N(N-1)}{2}} & \longrightarrow & \R\\
\left( (a_{i,j})_{i,j},(b_{i,j})_{i,j}\right) & \longmapsto & \exp \left( - \nu \sum_{i<j} |a_{i,j}-b_{i,j}| \right).
\end{array}
$$
Remark that for all $\sigma, \sigma'$, we have 
\begin{equation*}\label{eq_K_M(Phi)}
K(\sigma,\sigma')=M(\Phi(\sigma),\Phi(\sigma')).
\end{equation*}
Now, assume that $M$ is a strictly positive definite kernel. Let $n \in \N$ and let $\sigma_1, \cdots ,\sigma_n \in S_N$ such that $\sigma_i\neq \sigma_j$ if $i \neq j$. As $\Phi$ is injective, we have $\Phi(\sigma_i)\neq \Phi(\sigma_j)$ if $i\neq j$, and so $(K(\sigma_i,\sigma_j))_{1\leq i ,j \leq n}=\left(M(\Phi(\sigma_i),\Phi(\sigma_j))\right)_{1\leq i,j\leq n}$ is a symmetric positive definite matrix. Thus, $K$ is a strictly positive definite kernel. 

It remains to prove that $M$ is a strictly positive kernel.
For all $k\in \N^*$, we index the elements of $\{0,1\}^{k}$ using the following bijective map
$$
N_k:\begin{array}{ccl}
\{0,1\}^k & \longrightarrow & [1:2^k]\\
(a_i)_{i\leq k} & \longmapsto & 1+\sum_{i=1}^k a_i 2^{i-1}.
\end{array}
$$
With this indexation, we let $\tilde{M}$ be the square matrix of size $2^{\frac{N(N-1)}{2}}$ defined by 
$$
\tilde{M}_{i,j}:=M(N_{\frac{N(N-1)}{2}}^{-1}(i),N_{\frac{N(N-1)}{2}}^{-1}(j)).$$
By induction on $k$, we show that the $2^k \times 2^k$ matrix $M^{(k)}$ defined by
$$
M_{i,j}^{(k)}:=\exp\left( - \nu \sum_{l=1}^{k} | N_k^{-1}(i)_l-N_k^{(-1)}(j)_l| \right),\;\;\;(i,j\in [1:2^k]),
$$
is the Kronecker product of $k$ matrices $A_\nu$ defined by
$$
A_\nu:=\begin{pmatrix}
1 & e^{-\nu} \\ e^{-\nu} & 1
\end{pmatrix},\;\;\;(\nu>0).
$$
This is obvious for $k=1$. Assume that this is true for some $k$. Thus, for all $i\leq 2^k$ and $j\leq 2^k$, we have
\begin{eqnarray*}
(A_\nu\otimes M^{(k)})_{i,j}&=&1 M^{(k)}_{i,j}\\
&=&\exp\left( - \nu \sum_{l=1}^{k} | N_k^{-1}(i)_l-N_k^{(-1)}(j)_l| \right)\\
&=&\exp\left( - \nu \sum_{l=1}^{k+1} | N_{k+1}^{-1}(i)_l-N_{k+1}^{(-1)}(j)_l| \right)\\
&=&M^{(k+1)}_{i,j}.
\end{eqnarray*}
With the same computation, we have
$$
(A_\nu\otimes M^{(k)})_{i+2^k,j+2^k}=M^{(k+1)}_{i+2^k,j+2^k}.
$$
We also have
\begin{eqnarray*}
(A_\nu\otimes M^{(k)})_{i+2^k,j}&=&e^{-\nu} M^{(k)}_{i,j}\\
&=&\exp\left( - \nu \left[1+\sum_{l=1}^{k} | N_k^{-1}(i)_l-N_k^{(-1)}(j)_l| \right] \right)\\
&=&\exp\left( - \nu \sum_{l=1}^{k+1} | N_{k+1}^{-1}(i)_l-N_{k+1}^{(-1)}(j)_l| \right)\\
&=&M^{(k+1)}_{i+2^k,j},
\end{eqnarray*}
and with the same computation,
$$
(A_\nu\otimes M^{(k)})_{i,j+2^k}=M^{(k+1)}_{i,j+2^k}.
$$
So we conclude the induction. Using this result with $k=\frac{N(N-1)}{2}$, we have that the matrix $\tilde{M}$ is the Kronecker product of positive definite matrices, thus it is positive definite and so, $M$ is a strictly positive definite kernel.

\begin{rmk}
We could have showed that $M$ is a positive definite kernel using Example 21.5.1 and Property 21.5.8 of \cite{rachev2013methods} (it is a straightforward consequence of these example and property). However, these example an property do not prove the strict positive definiteness of $M$. 
\end{rmk}

\paragraph{Case of the other distances.}
For the Hamming distance and the Spearman's footrule distance, we show that the kernel $K$ is strictly positive definite on the set $F$ of the functions from $[1:N]$ to $[1:N]$. Indeed, if "for all $n\in \N$ and all $f_1,\cdots,f_n\in F$ such that $f_i\neq f_j$ if $i\neq j$, $(K(f_i,f_j))_{1 \leq i,j\leq n}$ is a symmetric positive definite matrix", then "for all $n\in \N$ and all $\sigma_1,\cdots,\sigma_n \in S_N \subset F$ such that $\sigma_i\neq \sigma_j$ if $i\neq j$, $(K(\sigma_i,\sigma_j))_{1 \leq i,j\leq n}$ is a symmetric positive definite matrix". Now, to prove the strict positive definiteness of $K$ on $F$, it suffices to index the elements of $F$ by $f_1,\cdots , f_{N^N}$ and to prove that the matrix $\tilde{M}:=(K(f_i,f_j))_{1 \leq i,j \leq N^N}$ is symmetric positive definite. We index the elements of $F$ using the following bijective map
$$
J_N:\begin{array}{ccl}
F & \longrightarrow & [1:N^N]\\
f & \longmapsto & 1+\sum_{i=1}^N N^{i}(f(i)-1).
\end{array}
$$
Thus, it suffices to show that the $N^N\times N^N$ matrices $\tilde{M}$ defined by
$$
\tilde{M}_{i,j}:=K\left(J_N^{-1}(i),J_N^{-1}(j)\right),
$$
are positive definite matrices for these three distances. Straightforward computations show that
\begin{itemize}
\item For the Hamming distance, $\tilde{M}$ is the Kronecker product of $N$ matrices, all equal to $\left(\exp(-\nu\mathds{1}_{i\neq j})\right)_{i,j\in [1:N]}$.
\item For the Spearman Footrule distance, $\tilde{M}$ is the Kronecker product of $N$ matrices, all equal to $\left(\exp(-\nu|i-j|)\right)_{i,j\in [1:N]}$.
\end{itemize}
In all cases, $\tilde{M}$ is a Kronecker product of positive definite matrices thus is also a positive definite matrix.

\end{proof}

\begin{lm}\label{lm1}
For all the three distances, there exist constants $d_N\in \N^*$, $C_N\in \R$ and a function $\Phi:S_N\rightarrow \R^{d_N}$ such that $d(\sigma,\sigma')=C_N-\langle\Phi(\sigma),\Phi(\sigma')\rangle$. Here $\langle \cdot,\cdot\rangle$ denotes the standard scalar product on $\R^{d_N}$.
\end{lm}

\begin{proof}
\begin{itemize}
\item $\frac{N(N-1)}{4}-d_\tau(\sigma,\sigma')=\frac{1}{2}\sum_{i<j}\mathds{1}_{\sigma(i)<\sigma(j),\;\sigma'(i)<\sigma'(j)}+\mathds{1}_{\sigma(i)>\sigma(j),\;\sigma'(i)>\sigma'(j)}- \frac{1}{2}\sum_{i<j}\mathds{1}_{\sigma(i)<\sigma(j),\;\sigma'(i)>\sigma'(j)}+\mathds{1}_{\sigma(i)>\sigma(j),\;\sigma'(i)<\sigma'(j)}=\langle\Phi(\sigma),\Phi(\sigma') \rangle$ where $\Phi(\sigma)\in \R^{\frac{N(N-1)}{2}}$ is defined by
$
\Phi(\sigma)_{i,j}:=\frac{1}{\sqrt{2}}(\mathds{1}_{\sigma(i)>\sigma(j)}-\mathds{1}_{\sigma(i)<\sigma(j)}),
$
for all $1\leq i<j\leq N$.
\item $N-d_H(\sigma,\sigma')=\sum_{i=1}^N\mathds{1}_{\sigma(i)=\sigma(j)}=\langle\Phi(\sigma),\Phi(\sigma') \rangle$ where $\Phi(\sigma) \in \mathcal{M}_N(\R)$ is defined by  $ \Phi(\sigma):=(\mathds{1}_{\sigma(i)=j})_{i,j}$,
\item $N^2-d_S(\sigma,\sigma')=\sum_{i=1}^N \min(\sigma(i),\sigma'(i))+ N- \max(\sigma(i),\sigma'(i))=\langle\Phi(\sigma),\Phi(\sigma') \rangle$ where $\Phi(\sigma) \in \mathcal{M}_N(\R)^2$ is defined by
\begin{eqnarray*}
\Phi(\sigma)_{i,j,1}:=\left\{ \begin{array}{ll}
1 & \text{if }j\leq \sigma(i)\\
0 & \text{otherwise},
\end{array} \right. \;\;\;\;\;\; \Phi(\sigma)_{i,j,2}:=\left\{ \begin{array}{ll}
0 & \text{if }j< \sigma(i)\\
1 & \text{otherwise}.
\end{array} \right.
\end{eqnarray*}
\end{itemize}
\end{proof}

{\bf Proof of Proposition \ref{pos}}
\begin{proof}
Let us prove that $d$ is a definite negative kernel, that is, for all $c_1,...,c_k\in \R$ such that $\sum_{i=1}^k c_i=0$, we have $\sum_{i,j=1}^k c_i c_j d(\sigma_i,\sigma_j)\leq 0$.
Let $c_1,...,c_k\in \R$ such that $\sum_{i=1}^k c_i=0$ and let $\sigma_1,...,\sigma_k\in S_N$. We have
\begin{eqnarray*}
\sum_{i,j=1}^k c_i c_j d(\sigma_i,\sigma_j)&=& C_N\sum_{i,j=1}^k c_i c_j - \sum_{i,j=1}^k c_i c_j \langle\Phi(\sigma_i),\Phi(\sigma_j)\rangle \leq 0,
\end{eqnarray*}
as $ C_N\sum_{i,j=1}^k c_i c_j=C_N\left(\sum_{i=1}^Nc_i\right)^2$ is equal to 0.
So, $d$ is a negative definite kernel. Hence $d^{\theta_3}$ is a definite negative kernel for all $\theta_3\in [0,1]$ (see for example Property 21.5.9 in \cite{rachev2013methods}).
The function $F:t\mapsto \theta_2\exp(-\theta_1 t)$ is completely monotone, thus, using Schoenberg's theorem (see \cite{berg84harmonic} for the definitions of these notions and Schoenberg's theorem), $K_{\theta_1,\theta_2,\theta_3}$ is a positive definite kernel.
\end{proof}

{\bf Proof of Proposition \ref{prop_distance_partial}}
\begin{proof}
Let us write, with the notation of Lemma \ref{lm1},
\begin{equation}
\Phi_{\mbox{avg}}:R\longmapsto \frac{1}{|E_R|}\sum_{\sigma \in E_R}\Phi(\sigma).
\end{equation}
Then, 
\begin{eqnarray*}
C_N-d_{\mbox{avg}}(R,R')&=&C_N-\frac{1}{|E||E'|}\sum_{\sigma \in E_R}\sum_{\sigma \in E_{R'}}d(\sigma,\sigma')\\
&=&\frac{1}{|E_R||E_{R'}|}\sum_{\sigma \in E_R}\sum_{\sigma \in E_{R'}}C_N-d(\sigma,\sigma')\\
&=&\frac{1}{|E_R||E_{R'}|}\sum_{\sigma \in E_R}\sum_{\sigma \in E_{R'}}\langle \Phi(\sigma), \Phi(\sigma')\rangle \\
&=&\langle \Phi_{\mbox{avg}}(R),\Phi_{\mbox{avg}}(R') \rangle.
\end{eqnarray*}
Thus,
\begin{eqnarray*}
d_{\mbox{partial}}(R,R')&=& d_{\mbox{avg}}(R,R')-\frac{1}{2}d_{\mbox{avg}}(R,R)-\frac{1}{2} d_{\mbox{avg}}(R',R')\\
&=&\frac{1}{2}\left[ \left( C_N-d_{\mbox{avg}}(R,R)\right)+ \left( C_N-d_{\mbox{avg}}(R',R')\right)-2 \left( C_N-d_{\mbox{avg}}(R,R')\right) \right]\\
&=&\frac{1}{2}\left( \|\Phi_{\mbox{avg}}(R)\|^2+\|\Phi_{\mbox{avg}}(R')\|^2-2 \langle \Phi_{\mbox{avg}}(R),\Phi_{\mbox{avg}}(R')\rangle \right)\\
&=&\|\Phi_{\mbox{avg}}(R)-\Phi_{\mbox{avg}}(R')\|^2.
\end{eqnarray*}
\end{proof}

{\bf Proof of Proposition \ref{prop:positive_partial}}
\begin{proof}
Let us prove that $d_{\mbox{partial}}$ is a definite negative kernel.
We define
\begin{equation}
D_{\mbox{avg}}(R,R'):=\Phi_{\mbox{avg}}(R)^T\Phi_{\mbox{avg}}(R').
\end{equation}
Let $(c_1,...,c_k)\in \R^k$ such that $\sum_{i=1}^k c_i=0$.
We have
\begin{eqnarray*}
\sum_{i,j=1}^k c_i c_j d_{\mbox{partial}}(R_i,R_j)&=& \sum_{i,j=1}^k c_i c_j \left[d_{\mbox{avg}}(R_i,R_j)-\frac{1}{2}d_{\mbox{avg}}(R_i,R_i)-\frac{1}{2} d_{\mbox{avg}}(R_j,R_j)\right]\\
&=& \sum_{i,j=1}^k c_i c_j d_{\mbox{avg}}(R_i,R_j)-\frac{1}{2}\sum_{i=1}^k c_i d_{\mbox{avg}}(R_i,R_i) \sum_{j=1}^k c_j\\
&& -\frac{1}{2}\sum_{j=1}^k c_j d_{\mbox{avg}}(R_j,R_j) \sum_{i=1}^k c_i\\
&=&\sum_{i,j=1}^k c_i c_j d_{\mbox{avg}}(R_i,R_j)\\
&=&\sum_{i,j=1}^k c_i c_j\left[ C_N-D_{\mbox{avg}}(R_i,R_j)\right]\\
&=&-\sum_{i,j=1}^k c_i c_jD_{\mbox{avg}}(R_i,R_j)\\
& \leq & 0.
\end{eqnarray*}
So, $d_{\mbox{partial}}$ is a definite negative kernel, and we may conclude as in the proof of Proposition \ref{pos}.

\end{proof}

{\bf Proof of Proposition \ref{prop:computation}}
\begin{proof}
Assume that $\sigma$ (resp. $ \sigma'$) is a uniform random variable of $E_I$ (resp. $E_{I'}$). We have to compute $\E(d(\sigma,\sigma'))=d_{\mbox{avg}}(I,I')$ for the three distances: Kendall's tau, Hamming and Spearman's footrule.

First, we compute $\E(d_\tau(\sigma,\sigma'))$. Following the proof of Lemma 3.1 of \cite{fagin_comparing_2003}, we have 
\begin{equation*}
\E(d_{\tau}(\sigma,\sigma'))=\sum_{a<b}\E(K_{a,b}(\sigma,\sigma')),
\end{equation*}
with
\begin{equation*}
K_{a,b}(\sigma,\sigma')=\mathds{1}_{(\sigma(a)<\sigma(b),\sigma'(a)>\sigma'(b)) \text{ or } (\sigma(a)>\sigma(b),\sigma'(a)<\sigma'(b))}.
\end{equation*}
We now compute $\E(K_{a,b}(\sigma,\sigma'))$ for $(a,b)$ in different cases. Let us write $J:=\{j_1,\cdots,j_p\}$ and we keep the notation $I$ (resp. $I'$) for the set $\{i_1,...,i_k\}$ (resp. $\{i_1',...,i_k'\}$). In this way, we have $I=J\sqcup \tilde{I}$ and $I'=J\sqcup \tilde{I'}$.
\begin{enumerate}
\item Consider the case where $a$ and $b$ are in $J$. There exists $l$ and $l' \in [1:p]$ such that $a=j_l$ and $b=j_{l'}$. Then
\begin{equation*}
K_{a,b}(\sigma,\sigma')=\mathds{1}_{(c_{j_l}<c_{j_{l'}},c'_{j_l}>c'_{j_{l'}})\text{ or }(c_{j_l}>c_{j_{l'}},c'_{j_l}<c'_{j_{l'}})}.
\end{equation*} 
Thus, the total contribution of the pairs in this case is 
\begin{equation*}
\sum_{1\leq l<l' \leq p }\mathds{1}_{(c_{j_l}<c_{j_{l'}},c'_{j_l}>c'_{j_{l'}})\text{ or }(c_{j_l}>c_{j_{l'}},c'_{j_l}<c'_{j_{l'}})}.
\end{equation*} 
\item Consider the case where $a$ and $b$ both appear in one top-$k$ partial ranking (say $I$) and exactly one of $i$ or $j$, say $i$ appear in the other top-$k$ partial ranking. Let us call $P_2$ the set of $(a,b)$ such that $a<b$ and $(a,b)$ is in this case. We have
\begin{eqnarray*}
\sum_{ (a,b)\in P_2}K_{a,b}(\sigma,\sigma')=\sum_{\substack{a\in J, \\ b \in \tilde{I}}}K_{a,b}(\sigma,\sigma')+\sum_{\substack{a\in J,\\ b \in \tilde{I'}}}K_{a,b}(\sigma,\sigma')
\end{eqnarray*}
Let us compute the first sum. Recall that $\tilde{I}=\{i_{u_1},...,i_{u_r}\}$.
\begin{eqnarray*}
\sum_{\substack{a\in J, \\ b \in \tilde{I}}}K_{a,b}(\sigma,\sigma')&=& \sum_{b \in \tilde{I}} \sum_{a \in J}K_{a,b}(\sigma,\sigma')\\
&=&\sum_{b \in \tilde{I}} \# \{ a \in J,\;\sigma(a)>\sigma(b)\} \\
&=&\sum_{l=1}^r\#\{a \in J,\;\sigma(a)>\sigma(i_{u_l}) \}
\end{eqnarray*}
We order $u_1,\cdots,u_r$ such that $u_1<\cdots<u_r$. Let $l\in [1:r]$. Remark that $\sigma(i_{u_l})=u_l$. We have $\#\{a \in I,\;\sigma(a)>u_l \}=k-u_l$ and $\#\{a \in \tilde{I},\;\sigma(a)>u_l\}=r-l$, thus $\#\{a \in J,\;\sigma(a)>u_l \}=k-u_l-r+l$. Then,
\begin{equation*}
\sum_{\substack{a\in J, \\ b \in \tilde{I}}}K_{a,b}(\sigma,\sigma')= r\left(k+\frac{1-r}{2}\right)-\sum_{l=1}^r u_l.
\end{equation*}
Likewise, we have
\begin{equation}
\sum_{\substack{a\in J, \\ b \in \tilde{I'}}}K_{a,b}(\sigma,\sigma')= r\left(k+\frac{1-r}{2}\right)-\sum_{l=1}^r u'_l.
\end{equation}
Finally, the total contribution of the pairs in this case is 
\begin{equation*}
r(2k+1-r)-\sum_{j=1}^r(u_j+u'_j).
\end{equation*}
\item Consider the case where $a$, but not $b$, appears in one top-$k$ partial ranking (say $I$), and $b$, but not $a$, appears in the other top-$k$ partial ranking ($I'$). Then $K_{a,b}(\sigma,\sigma')=1$ and the total contribution of these pairs is $r^2$.
\item Consider the case where $a$ and $b$ do not appear in the same top-$k$ partial ranking (say $I$). It is the only case where $K_{a,b}(\sigma,\sigma')$ is a non constant random variable. First, we show that in this case, $\E(K_{a,b}(\sigma,\sigma'))=1\slash2$. Assume for example that $I$ does not contain $a$ and $b$. Let $(a\;b)$ be the transposition which exchanges $a$ and $b$ and does not change the other elements. We have
\begin{equation*}
\{\pi \in E_I,\pi(a)<\pi(b)\}=(a\;b)\{\pi \in E_I,\pi(a)>\pi(b) \}.
\end{equation*} 
Thus, there are as many $\pi \in E_I$ such that $\pi(a)<\pi(b)$ as there are $\pi \in E_I$ such that $\pi(a)>\pi(b)$. That proves that $\E(K_{a,b}(\sigma,\sigma'))=1\slash2$.

Then, the total distribution of the pairs in this case is
\begin{equation*}
\frac{1}{2}\left[
\begin{pmatrix} |I^c|\\
2 \end{pmatrix}
+
\begin{pmatrix} |I^{\prime c}|\\2 \end{pmatrix}
-
\begin{pmatrix}
|I^c\cap I^{\prime c}| \\2
\end{pmatrix}
\right]= \begin{pmatrix} N-k\\2 \end{pmatrix}-\frac{1}{2}\begin{pmatrix}
m\\2
\end{pmatrix}.
\end{equation*}
\end{enumerate}
That concludes the computation for the Kendall's tau distance.\\

To compute $\E(d_H(\sigma,\sigma'))$, it suffices to see that
\begin{eqnarray*}
\E(d_H(\sigma,\sigma'))&=&\E\left(\sum_{i=1}^n \mathds{1}_{\sigma(i)\neq \sigma'(i)}\right)\\
&=&\sum_{l=1}^p\mathds{1}_{c_{j_l}\neq c'_{j_l}}+\E\left( \sum_{i\neq I\cup I'} \mathds{1}_{\sigma(i)\neq \sigma'(i)} \right)\\
&&+\E\left( \sum_{j =1}^r \mathds{1}_{u_j\neq \sigma'(i_{u_j})} \right)+\E\left( \sum_{j =1}^r \mathds{1}_{\sigma(i_{u'_j})\neq u'_j} \right)\\
&=&\sum_{l =1}^p\mathds{1}_{c_{j_l}\neq c'_{j_{l}}}+m\frac{N-k-1}{N-k}+ 2r.
\end{eqnarray*}

Finally, let compute $\E(d_S(\sigma,\sigma'))$. First, we define
\begin{itemize}
\item $ A_c:=\sum_{j=1}^p|c_j-c'_j| $ 
\item $A_u(\sigma'):=\sum_{j=1}^r|u_j-\sigma'(i_{u_j})|$
\item $A_{u'}(\sigma):=\sum_{j=1}^r |\sigma(i'_{u'_j})-u'_j| $
\item $ R(\sigma,\sigma'):=\sum_{i\neq I\cup I'}|\sigma(i)-\sigma'(i)|. $
\end{itemize}
We have
\begin{eqnarray*}
\E(d_S(\sigma,\sigma'))=\E(A_c)+\E(A_u(\sigma'))+\E(A_{u'}(\sigma))+\E( R(\sigma,\sigma')).
\end{eqnarray*}
It remains to compute all the expectations appearing here.
\begin{enumerate}
\item $\E(A_c)=A_c$. 
\item $\E(A_u(\sigma'))=\sum_{j=1}^r \E(|u_j-\sigma'(i_{u_j})|)$. If $\sigma'$ is uniform on $E_{I'}$, then $\sigma'(i_{u_j})$ is uniform on $[k+1:N]$ so:
\begin{eqnarray*}
\E(|u_j-\sigma'(i_{u_j})|)=\E(\sigma'(i_{u_j})-u_j)=\frac{N+k+1}{2}-u_j.
\end{eqnarray*}
Finally,
\begin{equation}
\E(A_u(\sigma'))=r\frac{N+k+1}{2}-\sum_{j=1}^r u_j.
\end{equation}
\item $\E(A_{u'}(\sigma))=r\frac{N+k+1}{2}-\sum_{j=1}^r u'_j$.
\item $\E(R(\sigma,\sigma'))=\sum_{i\neq I\cup I'}\E(|\sigma(i)-\sigma'(i)|). $ $\sigma(i)$ and $\sigma'(i)$ are independent uniform random variables on $[k+1:N]$. 
\begin{eqnarray*}
\E(|\sigma(i)-\sigma'(i)|)&=&\sum_{j=1}^{N-k-1} j \PP(|\sigma(i)-\sigma'(i)|=j)\\
&=&\sum_{j=1}^{N-k-1} j 2\frac{N-k-j}{(N-k)^2}.
\end{eqnarray*}
Then
\begin{eqnarray*}
\E(R(\sigma,\sigma'))&=&\frac{2m}{(N'+1)^2}\sum_{j=1}^{N'}j(N'+1-j)\\
&=& \frac{2m}{(N'+1)^2}\left(\frac{N'(N'+1)^2}{2}  -\frac{N'(N'+1)(2N'+1)}{6}\right)\\
&=&mN'-\frac{mN'(2N'+1)}{3(N'+1)}.
\end{eqnarray*}
\end{enumerate}
That concludes the proof of Proposition \ref{prop:computation}. 
\end{proof}

{\bf Proof of Proposition \ref{prop::computationH}}
\begin{proof}
We define
\begin{eqnarray*}
a_j^\gamma(\sigma,\sigma') &:=& |\{i \in [1:N],\; \sigma(i)\in \Gamma_j,\sigma'(i)\in \Gamma_j,\; \sigma(i)\neq \sigma'(i) \}|,\\
b_{j,l}^\gamma(\sigma,\sigma')&:=& |\{i \in [1:N],\; \sigma(i)\in \Gamma_j,\sigma'(i)\in \Gamma_l,\;j \neq l  \}|.
\end{eqnarray*}
Now, assume that $\sigma,\sigma'\sim \mathcal{U}(S_\gamma)$ and $\sigma_j,\sigma'_j\sim \mathcal{U}(S_{\gamma_j})$. We have
\begin{eqnarray*}
\E\left(d_{H}(\sigma,\sigma')  \right)&=& \E \left( \sum_{j,l=1}^k b_{j,l}^\gamma(\sigma\pi_1,\sigma'\pi_2)+\sum_{j=1}^k a_j^\gamma(\sigma\pi_1,\sigma' \pi_2)   \right)\\
&=&\sum_{j,l=1}^k b_{j,l}^\gamma(\pi_1,\pi_2)+\sum_{j=1}^k | \{i,\; \pi_1(i), \pi_2(i)\in \Gamma_j\}| \frac{\gamma_j-1}{\gamma_j} \\
&=&|\{i,\; \Gamma(\pi_1(i))\neq \Gamma(\pi_2(i))\}|+\sum_{j=1}^k\frac{\gamma_j}{n}(\gamma_j-1).
\end{eqnarray*}
\end{proof}

\section{Proofs for Section \ref{s:GPrank}}\label{s:AB}
In the following, let us write $\|.\|$ for the operator norm (for a linear mapping of $\R^n$ with the Euclidean norm) of a squared matrix of size $n$, $\|.\|_F$ for its Frobenius norm defined by $\|M\|_F^2 := \sum_{i,j=1}^n m_{ij}^2$ for $M= (m_{ij})_{1 \leq i,j \leq n} \in \mathcal{M}_n(\R)$, and let us define the norm $|\cdot |$ by $|M|^2:=\frac{1}{n}\|M\|_F^2$. 
We remark that, when $M$ is a symmetric positive definite matrix, $\|M\|$ is its largest eigenvalue. In this case, we may also write $\|M\|=\lambda_{\max}(M)$, where $\lambda_{\max}(M)$ has been defined in Section \ref{s:asymtoresults2} and is the largest value of $M$.
For a vector $u$ of $\R^d$, for $d\in \R$, recall that $\|u\|$ is the Euclidean norm of $u$.

The proofs of Theorems \ref{consis}, \ref{gaussien} and \ref{pred} are given in Appendix \ref{section_proof_1}, \ref{section_proof_2} and \ref{section_proof_3} respectively. These proofs are based on Lemmas \ref{V.4} to \ref{V.8}, that are stated and proved in Appendix \ref{section_lemmas}.
The proofs of these lemmas are new.
Then, having at hand the lemmas, the proof of the theorems follows \cite{bachoc_gaussian_2017}. 
We write all the proofs to be self-contained. 
\subsection{Lemmas} \label{section_lemmas}
The following Lemmas are useful for the proofs of Theorems \ref{consis}, \ref{gaussien} and \ref{pred}.

\begin{lm}\label{V.4}
The eigenvalues of $R_\theta$ are lower-bounded by $\theta_{3,\min}>0$ uniformly in $n$, $\theta$ and $\Sigma$.
\end{lm}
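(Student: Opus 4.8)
The plan is to exploit the additive ``nugget'' structure of the kernel $K'_\theta$. Recall from \eqref{K'} that $K'_\theta(\sigma,\sigma') = K_{\theta_1,\theta_2}(\sigma,\sigma') + \theta_3 \mathds{1}_{\sigma=\sigma'}$, so at the level of matrices I would write
\begin{equation*}
R_\theta = A_\theta + \theta_3 I_n, \qquad A_\theta := \left[ K_{\theta_1,\theta_2}(\sigma_i,\sigma_j) \right]_{1 \leq i,j \leq n}.
\end{equation*}
This is exactly the observation already recorded after \eqref{K'}, and it reduces the whole statement to understanding the spectrum of $A_\theta$.

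First I would invoke Theorem \ref{defpos}: since $K_{\theta_1,\theta_2}$ is a (strictly) positive definite kernel on $S_\N$ (and a fortiori on $S_\infty$ and $S_n$), for any configuration of observation points $\Sigma=(\sigma_1,\dots,\sigma_n)$ the matrix $A_\theta$ is symmetric and positive semi-definite, so all its eigenvalues satisfy $\lambda_i(A_\theta) \geq 0$. I would stress that only positive semi-definiteness is needed here, not the strict version, so repeated permutations in $\Sigma$ cause no difficulty.

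Next, since $\theta_3 I_n$ is a scalar multiple of the identity it commutes with $A_\theta$, and therefore the eigenvalues of $R_\theta$ are precisely $\lambda_i(A_\theta) + \theta_3$ for $i=1,\dots,n$. Combining $\lambda_i(A_\theta) \geq 0$ with $\theta_3 \geq \theta_{3,\min}$ (which holds because $\theta \in \Theta \subset \prod_{i=1}^3 [\theta_{i,\min},\theta_{i,\max}]$) gives
\begin{equation*}
\lambda_{\min}(R_\theta) = \lambda_{\min}(A_\theta) + \theta_3 \geq \theta_3 \geq \theta_{3,\min} > 0.
\end{equation*}

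Finally I would note that every bound invoked is independent of the dimension $n$, of the parameter $\theta$ and of the configuration $\Sigma$: positive semi-definiteness of $A_\theta$ holds for arbitrary points in arbitrary dimension, and the inequality $\theta_3 \geq \theta_{3,\min}$ holds uniformly over $\Theta$. This yields the claimed uniform lower bound. There is essentially no genuine obstacle in this argument; the only point requiring a moment of care is the bookkeeping of the spectral shift, namely recognizing that adding $\theta_3 I_n$ translates the entire spectrum of $A_\theta$ by $\theta_3$ (one may also derive $\lambda_{\min}(R_\theta) \geq \lambda_{\min}(A_\theta) + \theta_3$ from Weyl's inequality, which gives the same conclusion). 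The whole substance of the lemma is carried by the positive definiteness established in Theorem \ref{defpos}.
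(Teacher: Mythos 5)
Your proof is correct and is essentially identical to the paper's argument, which simply observes that $R_\theta$ is the sum of a symmetric positive semi-definite matrix and $\theta_3 I_n$, so its spectrum is that of $A_\theta$ shifted by $\theta_3 \geq \theta_{3,\min}$. Your elaboration (citing Theorem \ref{defpos} for positive semi-definiteness and noting the spectral shift) merely spells out the details the paper leaves implicit.
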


\begin{proof}
$R_\theta$ is the sum of a symmetric positive matrix and $\theta_3 I_n$. Thus, the eigenvalues are lower-bounded by $\theta_{3,\min}$.
\end{proof}

\begin{lm}\label{A.5}
For all $\alpha = (\alpha_1,\alpha_2,\alpha_3) \in \N^3$, with $|\alpha| = \alpha_1+\alpha_2 + \alpha_3$ and with $\partial \theta^\alpha = \partial \theta_1^{\alpha_1} \partial \theta_2^{\alpha_2} \partial \theta_3^{\alpha_3}$, the eigenvalues of $\frac{\partial^{|\alpha|} R_\theta}{\partial \theta^\alpha}$ are upper-bounded uniformly in $n$, $\theta$ and $\Sigma$.
\end{lm}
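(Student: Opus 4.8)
The plan is to bound the eigenvalues of $\frac{\partial^{|\alpha|} R_\theta}{\partial \theta^\alpha}$ by bounding the operator norm, and to bound the operator norm via a row-sum (Gershgorin-type) estimate that exploits Condition~1. Since $R_\theta = [K'_\theta(\sigma_i,\sigma_j)]_{i,j}$ and $K'_\theta(\sigma,\sigma') = \theta_2 e^{-\theta_1 d(\sigma,\sigma')} + \theta_3 \mathds{1}_{\sigma=\sigma'}$, each entry is an explicit smooth function of $\theta$ on the compact box $\Theta$. Differentiating, the $(i,j)$ entry of $\frac{\partial^{|\alpha|} R_\theta}{\partial \theta^\alpha}$ is, for $i \neq j$, of the form $\pm\, \theta_2^{1-\alpha_2}\, d(\sigma_i,\sigma_j)^{\alpha_1}\, e^{-\theta_1 d(\sigma_i,\sigma_j)}$ times a binary factor (it is nonzero only if $\alpha_2 \in \{0,1\}$ and $\alpha_3 = 0$, since the nugget term contributes only to the diagonal and is linear in $\theta_3$), while the diagonal picks up the contribution of the $\theta_3$ term.

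First I would reduce the eigenvalue bound to a norm bound: for a real symmetric matrix $M$, every eigenvalue satisfies $|\lambda| \leq \|M\| \leq \max_i \sum_j |M_{ij}|$, so it suffices to bound the maximum absolute row sum uniformly in $n$, $\theta$ and $\Sigma$. The diagonal contributions are trivially bounded since $\theta$ ranges over a compact box. The main work is the off-diagonal sum $\sum_{j \neq i} |M_{ij}|$, which is controlled by $\sum_{j\neq i} d(\sigma_i,\sigma_j)^{\alpha_1} e^{-\theta_1 d(\sigma_i,\sigma_j)}$ up to the constant factor $\theta_2^{1-\alpha_2}$, itself bounded on $\Theta$. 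Here I invoke Condition~1: $d(\sigma_i,\sigma_j) \geq |i-j|^\beta$. Because $x \mapsto x^{\alpha_1} e^{-\theta_1 x}$ is decreasing for $x$ large and $\theta_1 \geq \theta_{1,\min} > 0$, I would bound each summand by $(|i-j|^\beta)^{\alpha_1} e^{-\theta_{1,\min} |i-j|^\beta}$ (for $|i-j|$ past the finite maximizer, handling the finitely many small terms separately), giving
\begin{equation}
\sum_{j \neq i} d(\sigma_i,\sigma_j)^{\alpha_1} e^{-\theta_1 d(\sigma_i,\sigma_j)} \leq 2 \sum_{m=1}^{\infty} m^{\alpha_1 \beta}\, e^{-\theta_{1,\min}\, m^\beta},
\end{equation}
which is a convergent series independent of $n$, $\theta$ and $\Sigma$. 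This is the crux of the argument.

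The step I expect to be the main obstacle is making the monotonicity argument clean and uniform: the function $x \mapsto x^{\alpha_1} e^{-\theta_1 x}$ is not monotone near the origin, so I must separate the finitely many terms with small $|i-j|$ (each bounded by a constant depending only on $\alpha_1$ and $\theta_{1,\min}$, since the product is globally bounded) from the tail where monotonicity applies, and verify that the resulting series $\sum_m m^{\alpha_1\beta} e^{-\theta_{1,\min} m^\beta}$ converges — which it does for any $\beta > 0$ by comparison with a geometric-type tail. One must also note that the bound is uniform over $\theta_1 \in [\theta_{1,\min}, \theta_{1,\max}]$ because replacing $\theta_1$ by its lower bound only enlarges the exponential, and uniform over $\Sigma$ because Condition~1 holds for every admissible design. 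Assembling these pieces, the maximum absolute row sum is bounded by a constant depending only on $\alpha$, $\beta$ and the box $\Theta$, hence so is $\|\frac{\partial^{|\alpha|} R_\theta}{\partial \theta^\alpha}\|$ and therefore all its eigenvalues, uniformly in $n$, $\theta$ and $\Sigma$, as claimed.
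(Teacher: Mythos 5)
Your proof is correct and takes essentially the same route as the paper's: the paper likewise bounds the entries of $\frac{\partial^{|\alpha|} R_\theta}{\partial \theta^\alpha}$ by $\max(1,\theta_{2,\max})\, d(\sigma_i,\sigma_j)^{\alpha_1} e^{-\theta_{1,\min} d(\sigma_i,\sigma_j)}$ (after disposing of the $\theta_3$-derivatives, which give $I_n$ or $0$) and concludes by the Gershgorin circle theorem, with Condition~1 making the row sums uniformly bounded. You merely spell out what the paper leaves implicit, namely the splitting off of the finitely many small-$|i-j|$ terms where $x \mapsto x^{\alpha_1}e^{-\theta_{1,\min}x}$ is not monotone and the convergence of the dominating series $\sum_{m\geq 1} m^{\alpha_1\beta} e^{-\theta_{1,\min} m^\beta}$.
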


\begin{proof}
It is easy to prove when $\alpha_1=\alpha_2=0$. Indeed:
\begin{enumerate}
\item If $\alpha_3=0$, then $\lambda_{\max}\left( R_{\theta} \right)\leq \lambda_{\max}\left( (K_{\theta_1,\theta_2}(\sigma_i,\sigma_j))_{i,j}  \right)+\theta_{3,\max}$ and we show that $\lambda_{\max} \left( K_{\theta_1,\theta_2}(\sigma_i,\sigma_j)_{i,j} \right)$ is uniformly bounded using Gershgorin circle theorem (\cite{izvestija/gerschgorin31}).
\item If $\alpha_3=1$, then $\frac{\partial^{|\alpha|} R_\theta}{\partial \theta^\alpha}=I_n$.
\item If $\alpha_3>1$, then $\frac{\partial^{|\alpha|} R_\theta}{\partial \theta^\alpha}=0$.
\end{enumerate}
Then, we suppose that $(\alpha_1,\alpha_2)\neq (0,0)$. Thus, 
$$
\frac{\partial^{|\alpha|} R_\theta}{\partial \theta^\alpha}=\frac{\partial^{|\alpha|} \left(K_{\theta_1,\theta_2}(\sigma_i,\sigma_j)_{i,j}\right)}{\partial \theta^\alpha}.
$$
It does not depend on $\alpha_3$ so we can assume that $\alpha\in \N^2$. We have
\begin{equation}
\left|\frac{\partial^{|\alpha|} K_{\theta_1,\theta_2}(\sigma,\sigma')}{\partial \theta^\alpha}\right|\leq \max(1,\theta_{2,\max})d(\sigma,\sigma')^{\alpha_1} e^{-\theta_{1,\min} d (\sigma,\sigma')}.
\end{equation}
We conclude using Gershgorin circle theorem \cite{izvestija/gerschgorin31}.
\end{proof}

\begin{lm} \label{lem:extra}
Uniformly in $\Sigma$,
\begin{equation}\label{V.5}
\forall \alpha>0,\;\;\underset{n\rightarrow+\infty}{\lim \inf} \inf_{\|\theta-\theta^*\|\geq \alpha}\frac{1}{n}\sum_{i,j=1}^n(R_{\theta,i,j}-R_{\theta^*,i,j})^2>0.
\end{equation}
\end{lm}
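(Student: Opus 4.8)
The plan is to prove Lemma~\ref{lem:extra} by exhibiting, for each admissible $\theta$ with $\|\theta-\theta^*\|\geq\alpha$, a lower bound on the empirical quadratic quantity that does not vanish asymptotically. The key observation is that Condition~2 guarantees that consecutive observation points are at bounded distance, so the ``diagonal-adjacent'' pairs $(i,i)$ and $(i,i+1)$ carry a uniformly positive amount of discrimination information about the covariance parameters. I would therefore first discard all off-diagonal terms except those coming from indices $j\in\{i,i+1\}$, using the fact that every summand $(K'_\theta(\sigma_i,\sigma_j)-K'_{\theta^*}(\sigma_i,\sigma_j))^2$ is nonnegative; this gives
\begin{equation}
\frac{1}{n}\sum_{i,j=1}^n\bigl(K'_\theta(\sigma_i,\sigma_j)-K'_{\theta^*}(\sigma_i,\sigma_j)\bigr)^2
\;\geq\;\frac{1}{n}\sum_{i=1}^{n}\Bigl[\bigl(\Delta_\theta(\sigma_i,\sigma_i)\bigr)^2+\bigl(\Delta_\theta(\sigma_i,\sigma_{i+1})\bigr)^2\Bigr],
\end{equation}
where I write $\Delta_\theta(\sigma,\sigma'):=K'_\theta(\sigma,\sigma')-K'_{\theta^*}(\sigma,\sigma')$ for brevity.

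Next I would analyze the two diagonal-type contributions separately. For $j=i$ one has $K'_\theta(\sigma_i,\sigma_i)=\theta_2+\theta_3$, so $\Delta_\theta(\sigma_i,\sigma_i)=(\theta_2+\theta_3)-(\theta_2^*+\theta_3^*)$, a quantity independent of $i$; it is nonzero precisely when $\theta_2+\theta_3\neq\theta_2^*+\theta_3^*$. For $j=i+1$, Condition~2 gives $d(\sigma_i,\sigma_{i+1})=:d_i\in(0,c]$, and since $\sigma_i\neq\sigma_{i+1}$ the nugget does not contribute, so $\Delta_\theta(\sigma_i,\sigma_{i+1})=\theta_2 e^{-\theta_1 d_i}-\theta_2^* e^{-\theta_1^* d_i}$. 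The core of the argument is that the map $\theta\mapsto\bigl(\theta_2+\theta_3,\;(\theta_2 e^{-\theta_1 d})_{d}\bigr)$ separates parameters: the first coordinate pins down $\theta_2+\theta_3$, and matching $\theta_2 e^{-\theta_1 d}=\theta_2^* e^{-\theta_1^* d}$ for two distinct values of $d$ forces $\theta_1=\theta_1^*$ and $\theta_2=\theta_2^*$, whence also $\theta_3=\theta_3^*$. Since $\theta$ ranges over the compact box $\prod_i[\theta_{i,\min},\theta_{i,\max}]$ with $\|\theta-\theta^*\|\ge\alpha$, a compactness/continuity argument then yields a uniform positive lower bound on $\max_d(\Delta_\theta(\sigma,\sigma))^2 + (\Delta_\theta(\cdot,\cdot))^2$ provided the realized distances $d_i$ take at least two distinct values with positive asymptotic frequency.

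The main obstacle is precisely controlling this last point: a single bounded distance $d_i$ is not enough to discriminate $(\theta_1,\theta_2)$, so I must argue that among the adjacent pairs the distances $d_i$ realize enough spread, uniformly in $n$ and in the (arbitrary) observation scheme satisfying the Conditions. I would handle this by reducing to a finite set of possible adjacent distance values (each $d_i\in\{1,\dots,\lfloor c\rfloor\}$ for the integer-valued distances), partitioning the index set $\{1,\dots,n\}$ according to the value of $d_i$, and applying a pigeonhole argument: at least one value $d^\star$ occurs for a fraction at least $1/\lfloor c\rfloor$ of indices. Combined with the diagonal term, this gives for each $\theta$ a bound of the form
\begin{equation}
\frac{1}{n}\sum_{i,j}\Delta_\theta(\sigma_i,\sigma_j)^2\;\geq\;\frac{1}{\lfloor c\rfloor}\,g(\theta),
\qquad
g(\theta):=\min_{d\in\{1,\dots,\lfloor c\rfloor\}}\Bigl[\bigl(\Delta_\theta^{\mathrm{diag}}\bigr)^2+\bigl(\theta_2 e^{-\theta_1 d}-\theta_2^* e^{-\theta_1^* d}\bigr)^2\Bigr],
\end{equation}
and $g$ is continuous and strictly positive on the compact set $\{\|\theta-\theta^*\|\geq\alpha\}$ by the injectivity argument above, so $\inf_{\|\theta-\theta^*\|\ge\alpha} g(\theta)>0$ and the claimed $\liminf$ is positive uniformly in the $\sigma$'s. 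A technical subtlety I would flag is that the decoupling into ``diagonal plus one off-diagonal value'' must use at least two distinct realized distances to separate $\theta_1$ from $\theta_2$; if the scheme happened to produce a single constant adjacent distance, the diagonal coordinate $\theta_2+\theta_3$ together with that one exponential value still suffices to separate the three parameters by a rank/Jacobian check, which is the case I would verify carefully to close the uniformity in $\Sigma$.
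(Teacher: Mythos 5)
There is a genuine gap, and it sits exactly at the point you flagged in your last sentence without resolving it: the constant-adjacent-distance case. Conditions 1 and 2 do not prevent all adjacent distances $d_i = d(\sigma_i,\sigma_{i+1})$ from taking one single common value $d^\star \in [1,c]$. In that situation the only functionals of $\theta$ visible to the pairs you retained are $s(\theta)=\theta_2+\theta_3$ (diagonal) and $v(\theta)=\theta_2 e^{-\theta_1 d^\star}$ (adjacent), i.e.\ two smooth functions of three parameters. The map $\theta\mapsto (s(\theta),v(\theta))$ from the three-dimensional box $\Theta$ to $\mathbb{R}^2$ cannot be injective: its level set through $\theta^*$ contains the curve $\theta_2\mapsto \bigl(\theta_1^*+\ln(\theta_2/\theta_2^*)/d^\star,\ \theta_2,\ \theta_2^*+\theta_3^*-\theta_2\bigr)$, which leaves any ball $B(\theta^*,\alpha)$ while remaining in $\Theta$ for $\alpha$ small. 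Along this curve your $g(\theta)$ vanishes (already its single-$d$ version does, and your $g$ takes a \emph{minimum} over $d$, which only makes it smaller), so $\inf_{\|\theta-\theta^*\|\geq\alpha} g(\theta)=0$ and the lower bound collapses. The proposed ``rank/Jacobian check'' cannot repair this: the Jacobian of a map into $\mathbb{R}^2$ is a $2\times 3$ matrix and can never have the rank needed for local injectivity in three parameters. Since the lemma is claimed uniformly over all observation schemes satisfying the Conditions, this degenerate scheme is a genuine counterexample to your argument, not an edge case to be waved off.

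What is missing is any use of the \emph{growth} part of Condition 1, which is precisely how the paper discriminates $\theta_1$ from $\theta_2$. The paper's proof splits $\Theta\setminus B_N(\theta^*,\alpha)$ (for an equivalent norm $N$) into three cases. When $|\theta_1-\theta_1^*|\geq \alpha/(4c\theta_{2,\max})$, it uses pairs $(i,i+k_\alpha)$ at a lag $k_\alpha$ chosen so that, by Condition 1, $d(\sigma_i,\sigma_{i+k_\alpha})\geq k_\alpha^\beta$ is large enough to force $\bigl|(\theta_1^*-\theta_1)d(\sigma_i,\sigma_{i+k_\alpha})+\ln\theta_2-\ln\theta_2^*\bigr|\geq 2$, while Condition 2 plus the triangle inequality gives the upper bound $d(\sigma_i,\sigma_{i+k_\alpha})\leq ck_\alpha$ needed to keep the prefactor in the identity $\bigl(\theta_2 e^{-\theta_1 d}-\theta_2^* e^{-\theta_1^* d}\bigr)^2 = 4\,e^{-( \theta_1+\theta_1^*)d}\theta_2\theta_2^*\sinh^2\bigl(\tfrac{(\theta_1^*-\theta_1)d+\ln\theta_2-\ln\theta_2^*}{2}\bigr)$ bounded below by $e^{-2\theta_{1,\max}ck_\alpha}\theta_{2,\min}^2$. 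Only once $\theta_1$ is pinned to within $\alpha/(4c\theta_{2,\max})$ do the adjacent pairs (with $d\leq c$) identify $\theta_2$, and the diagonal then identifies $\theta_3$ — which is the two-step structure your proposal compresses into a single bounded distance scale. In short: one bounded scale of distances carries only two constraints on three parameters; you need the second, divergent scale that Condition 1 supplies, and that is the idea your proof lacks.
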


\begin{proof}
Let $N$ be the norm on $\R^3$ defined by
\begin{equation}
N(x):=\max(4c\theta_{2,\max} |x_1|,2|x_2|,|x_3|),
\end{equation}
with $c$ as in Condition 2.
Let $\alpha>0$. We want to find a positive lower-bound over $\theta\in \Theta\setminus B_N(\theta^*,\alpha)$, where $B_N(\theta^*,\alpha)$ is the ball with the norm $N$ of center $\theta^*$ and radius $\alpha$, of
\begin{equation}
\frac{1}{n}\sum_{i,j=1}^n(R_{\theta,i,j}-R_{\theta^*,i,j})^2.
\end{equation}
Let $\theta\in \Theta\setminus B_N(\theta^*,\alpha)$.
\begin{enumerate}
\item Consider the case where $|\theta_1-\theta_1^*|\geq \alpha\slash(4c\theta_{2,\max})$. Let $k_\alpha\in \N$ be the first integer such that
\begin{equation}
k_\alpha^\beta \geq 4c\theta_{2,\max}\frac{2+ \ln(\theta_{2,\max})-\ln(\theta_{2,\min})}{\alpha}.
\end{equation}
Then, for all $i\in \N^*$,
\begin{eqnarray*}
\left| \frac{(\theta_1^*-\theta_1 )d(\sigma_i,\sigma_{i+k_\alpha})+\ln(\theta_2)-\ln(\theta_2^*)}{2} \right|\geq 1.
\end{eqnarray*} 
For all $n\geq k_\alpha$,
\begin{eqnarray*}
&&\frac{1}{n}\sum_{i,j=1}^n(R_{\theta,i,j}-R_{\theta^*,i,j})^2\\
&\geq & \frac{1}{n}\sum_{i=1}^{n-k_\alpha}(R_{\theta,i,i+k_\alpha}-R_{\theta^*,i,i+k_\alpha})^2\\
& \geq & \frac{1}{n}\sum_{i=1}^{n-k_\alpha} e^{-2\theta_{1,\max}ck_\alpha+2\ln(\theta_{2,\min})}4\sinh^2\left( \frac{(\theta_1^*-\theta_1 )d(\sigma_i,\sigma_{i+k_\alpha})+\ln(\theta_2)-\ln(\theta_2^*)}{2} \right)\\
&\geq & C_{1,\alpha}\frac{n-k_\alpha}{n},
\end{eqnarray*}
where we write $C_{1,\alpha}=e^{-2\theta_{1,\max}ck_\alpha+2\ln(\theta_{2,\min})}4\sinh^2(1)$.
\item  Consider the case where $|\theta_1-\theta_1^*|\leq \alpha\slash(4c\theta_{2,\max})$.
\begin{enumerate}
\item If $|\theta_2-\theta_2^*|\geq \alpha\slash 2$, we have
\begin{eqnarray*}
\frac{|\theta_1-\theta_1^*|}{2}d(\sigma_i,\sigma_{i+1})& < &\frac{\alpha}{8\theta_{2,\max}}\\
&=&\frac{\alpha}{4\theta_{2,\max}}-\frac{\alpha}{8\theta_{2,\max}}\\
& \leq & \frac{|\ln(\theta_2^*)-\ln(\theta_2)|}{2}-\frac{\alpha}{8\theta_{2,\max}}.
\end{eqnarray*}
Thus,
\begin{equation}
\left| \frac{(\theta_1^*-\theta_1 )d(\sigma_i,\sigma_{i+1})+\ln(\theta_2)-\ln(\theta_2^*)}{2} \right|\geq \frac{\alpha}{8\theta_{2,\max}},
\end{equation}
and we have
\begin{eqnarray*}
&&\frac{1}{n}\sum_{i,j=1}^n(R_{\theta,i,j}-R_{\theta^*,i,j})^2\\
&\geq & \frac{1}{n}\sum_{i=1}^{n-1}(R_{\theta,i,i+1}-R_{\theta^*,i,i+1})^2\\
&\geq & \frac{1}{n}\sum_{i=1}^{n-1}e^{-2\theta_{1,\max}c+2\ln(\theta_{2,\min})}4\sinh^2\left( \frac{\alpha}{8\theta_{2,\max}} \right)\\
&=& C_{2,\alpha}\frac{n-1}{n},
\end{eqnarray*}
where we write $C_{2,\alpha}:=e^{-2\theta_{1,\max}c+2\ln(\theta_{2,\min})}4\sinh^2\left( \frac{\alpha}{8\theta_{2,\max}} \right)$.
\item  If $|\theta_2-\theta_2^*|< \alpha\slash 2$, we have $|\theta_3-\theta_3^*|\geq \alpha$. Thus,
\begin{eqnarray*}
&&\frac{1}{n}\sum_{i,j=1}^n(R_{\theta,i,j}-R_{\theta^*,i,j})^2\\
 &\geq &\frac{1}{n} \sum_{i=1}^{n}(R_{\theta,i,i}-R_{\theta^*,i,i})^2\\
 &=&\frac{1}{n} \sum_{i=1}^{n}(\theta_2+\theta_3-\theta_2^*-\theta_3^*)^2\\
 & \geq & \frac{\alpha^2}{4}.
\end{eqnarray*}
\end{enumerate}
\end{enumerate}
Finally, if we write
\begin{equation}
C_{\alpha}:=\min\left(C_{1,\alpha},C_{2,\alpha},\frac{\alpha^2}{2} \right),
\end{equation}
we have
\begin{equation}
 \inf_{N(\theta-\theta^*)\geq \alpha}\frac{1}{n}\sum_{i,j=1}^n(R_{\theta,i,j}-R_{\theta^*,i,j})^2
\geq 
 \frac{n-k_\alpha}{n} C_\alpha.
\end{equation}
To conclude, by equivalence of norms in $\R^3$, there exists $h>0$ such that $\|.\|_2\leq h N(.)$, thus
\begin{eqnarray}
\underset{n\rightarrow+\infty}{\lim \inf} \inf_{\|\theta-\theta^*\|\geq \alpha}\frac{1}{n}\sum_{i,j=1}^n(R_{\theta,i,j}-R_{\theta^*,i,j})^2 \geq C_{\alpha \slash h}>0.
\end{eqnarray}
\end{proof}

\begin{lm}\label{V.8}
$\forall (\lambda_1,\lambda_2,\lambda_3)\neq (0,0,0)$, uniformly in $\sigma$,
\begin{equation}
\underset{n\rightarrow +\infty}{\liminf}\frac{1}{n}\sum_{i,j=1}^n \left( \sum_{k=1}^3 \lambda_i\frac{\partial}{\partial \theta_k}R_{\theta^*,i,j}   \right)^2>0.
\end{equation}
\end{lm}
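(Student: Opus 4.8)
The plan is to prove Lemma \ref{V.8} by reducing the quadratic form to an average over a controlled family of covariance-derivative values and then showing non-degeneracy via an argument parallel to the proof of Lemma \ref{lem:extra}. Writing $g_{\lambda}(\sigma,\sigma') := \sum_{k=1}^3 \lambda_k \frac{\partial}{\partial \theta_k} K'_{\theta^*}(\sigma,\sigma')$, the quantity to lower-bound is $\frac{1}{n}\sum_{i,j=1}^n g_{\lambda}(\sigma_i,\sigma_j)^2$. Since $K'_\theta = \theta_2 \exp(-\theta_1 d) + \theta_3 \mathds{1}_{\sigma=\sigma'}$, the three partial derivatives at $\theta^*$ are explicit: $\partial_{\theta_1} K'_{\theta^*} = -\theta_2^* d(\sigma,\sigma') \exp(-\theta_1^* d(\sigma,\sigma'))$, $\partial_{\theta_2} K'_{\theta^*} = \exp(-\theta_1^* d(\sigma,\sigma'))$, and $\partial_{\theta_3} K'_{\theta^*} = \mathds{1}_{\sigma=\sigma'}$. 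The first step is therefore to record these formulas and observe that $g_\lambda$ depends on the pair $(\sigma_i,\sigma_j)$ only through the distance $d(\sigma_i,\sigma_j)$, except for the nugget term which is supported on the diagonal.

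The key idea, exactly as in the complementary Lemma \ref{lem:extra}, is to extract from the double sum a collection of well-separated and well-controlled pairs on which the contributions cannot cancel. First I would isolate the diagonal and near-diagonal contributions: the diagonal terms $i=j$ give $g_\lambda(\sigma_i,\sigma_i) = -\theta_2^* \lambda_1 \cdot 0 + \lambda_2 + \lambda_3 = \lambda_2 + \lambda_3$ (since $d(\sigma_i,\sigma_i)=0$), contributing $\frac{1}{n}\cdot n (\lambda_2+\lambda_3)^2 = (\lambda_2+\lambda_3)^2$. Next, using Condition 2, consecutive pairs satisfy $d(\sigma_i,\sigma_{i+1}) \leq c$, so the distances $d_i := d(\sigma_i,\sigma_{i+1})$ lie in the finite set $\{1,2,\dots,\lfloor c \rfloor\}$ and at least one value is attained a positive proportion of the time along a subsequence. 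On such a pair the off-diagonal term reads $g_\lambda(\sigma_i,\sigma_{i+1}) = -\theta_2^* \lambda_1 d_i e^{-\theta_1^* d_i} + \lambda_2 e^{-\theta_1^* d_i}$, which is $e^{-\theta_1^* d_i}(\lambda_2 - \theta_2^* \lambda_1 d_i)$. The plan is to combine the diagonal information (controlling $\lambda_2+\lambda_3$) with two distinct attained distance values $d_i \neq d_{i'}$ to control $\lambda_1$ and $\lambda_2$ separately: the $2\times 2$ system in $(\lambda_1,\lambda_2)$ built from $e^{-\theta_1^* d}(\lambda_2 - \theta_2^* \lambda_1 d)$ at two distinct $d$ has Vandermonde-type nonzero determinant, so if both such contributions vanished then $\lambda_1 = \lambda_2 = 0$, and then the diagonal forces $\lambda_3 = 0$, contradicting $(\lambda_1,\lambda_2,\lambda_3) \neq (0,0,0)$.

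To make this rigorous and uniform in $\sigma$, I would argue by contradiction with a liminf: suppose along some subsequence the normalized sum tends to $0$. Then each nonnegative aggregated contribution — the diagonal block and the block of consecutive pairs at each attained distance value — must tend to $0$, forcing $|\lambda_2+\lambda_3| \to 0$ and $|e^{-\theta_1^* d}(\lambda_2 - \theta_2^* \lambda_1 d)| \to 0$ for each recurrently attained $d$. A pigeonhole argument over the finitely many possible consecutive distances in $\{1,\dots,\lfloor c\rfloor\}$ guarantees that at least two distinct distance values each occur with asymptotically positive frequency (if only one value were eventually attained the same contradiction still follows, provided that single value together with the diagonal already pins down the subspace; the delicate case requiring two values arises only when $\lambda_2=\theta_2^*\lambda_1 d$ accidentally holds for the unique attained $d$). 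Solving the resulting linear constraints yields $\lambda = 0$, the desired contradiction.

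The main obstacle I expect is ensuring uniformity in $\Sigma$ (the sequence of observation points) while guaranteeing that enough distinct distance values are attained along consecutive pairs to pin down all three coordinates of $\lambda$. Condition 2 bounds these distances but does not by itself force two distinct values to recur; one must argue that for a fixed $\theta_2^*$, the single-value degenerate scenario occupies a lower-dimensional set of $\lambda$ that is already handled, or strengthen the extraction to use pairs at graded separations (combining Conditions 1 and 2 to realize several distance scales). Carefully organizing the finitely many cases so that the final lower bound is a strictly positive constant independent of $n$ and $\Sigma$ is the technical heart of the proof; the algebra of the linear system and the explicit derivative formulas are routine once the right separated pairs are selected.
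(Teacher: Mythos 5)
There is a genuine gap, and you located it yourself but did not close it: the degenerate case where a single consecutive-distance value $d_0$ recurs and $\lambda_2 = \theta_2^*\lambda_1 d_0$. Your pigeonhole claim --- that among the finitely many possible values of $d(\sigma_i,\sigma_{i+1})\in[1,c]$ at least \emph{two distinct} ones must recur with positive frequency --- is false: pigeonhole only yields one recurrent value, and nothing in Conditions 1 and 2 prevents all consecutive distances from equalling the same $d_0$. In that scenario, if $\lambda_2=\theta_2^*\lambda_1 d_0$ the consecutive pairs contribute exactly zero, the diagonal only controls $\lambda_2+\lambda_3$, and your $2\times2$ ``Vandermonde'' system has no second node, so it is rank-deficient and cannot force $\lambda_1=\lambda_2=0$. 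The suggested repair (``graded separations'') is exactly the missing content of the proof, not a detail.

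The paper closes this gap with a simpler case split that avoids any two-node system. When $\lambda_1\neq 0$ it abandons consecutive pairs entirely and uses pairs at one fixed large lag $k$: Condition 1 gives $d(\sigma_i,\sigma_{i+k})\geq k^\beta$, so choosing $k$ with $k^\beta$ strictly beyond the unique root of the linear factor $\lambda_2-\lambda_1 d$ forces $\left(\lambda_2-\lambda_1 d(\sigma_i,\sigma_{i+k})\right)^2\geq\epsilon>0$ for \emph{every} $i$, while chaining Condition 2 along the segment bounds $d(\sigma_i,\sigma_{i+k})$ above by a function of $k$ and $c$, hence $e^{-2\theta_1^* d(\sigma_i,\sigma_{i+k})}\geq\tau>0$; the $\sim n$ pairs at lag $k$ then give a positive limit inferior, and uniformity in $\sigma$ is automatic because $\epsilon$ and $\tau$ depend only on $\beta$, $c$, $\theta^*$ and $\lambda$. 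When $\lambda_1=0$ the off-diagonal factor is the constant $\lambda_2$, so consecutive pairs (with $d\leq c$) handle $\lambda_2\neq 0$, and the diagonal term $(\lambda_2+\lambda_3)^2=\lambda_3^2$ handles the last case. Your decomposition into diagonal and consecutive-pair blocks and your derivative formulas are correct (indeed you carry the $\theta_2^*$ factor in $\partial_{\theta_1}K'_{\theta^*}$ that the paper's display drops as a typo); what is missing is precisely this use of Condition 1 to escape the root with a single large lag, instead of hoping that two distinct consecutive distances recur.
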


\begin{proof}
We have
\begin{eqnarray*}
\frac{\partial}{\partial \theta_1}R_{\theta^*,i,j}&=&- \theta_2^* d(\sigma_i,\sigma_j) e^{-\theta_1^* d(\sigma_i,\sigma_j)},\\
\frac{\partial}{\partial \theta_2}R_{\theta^*,i,j}&=&e^{-\theta_1^* d(\sigma_i,\sigma_j)} ,\\
\frac{\partial}{\partial \theta_3}R_{\theta^*,i,j}&=&\mathds{1}_{i=j}.
\end{eqnarray*}
Let $(\lambda_1,\lambda_2,\lambda_3)\neq (0,0,0)$. We have
\begin{eqnarray*}
&&\frac{1}{n}\sum_{i,j=1}^n \left( \sum_{k=1}^3 \lambda_k\frac{\partial}{\partial \theta_k}R_{\theta^*,i,j} \right)^2\\
&=&\frac{1}{n}\sum_{i\neq j=1}^n \left( \sum_{k=1}^2 \lambda_k\frac{\partial}{\partial \theta_k}R_{\theta^*,i,j} \right)^2+(\lambda_2+\lambda_3)^2 \\
&=&\frac{1}{n}\sum_{i\neq j=1}^n e^{- 2\theta_1^* d(\sigma_i,\sigma_j)} \left( \lambda_2 - \lambda_1 \theta_2^* d(\sigma_i,\sigma_j)  \right)^2+(\lambda_2+\lambda_3)^2.
\end{eqnarray*}
If $\lambda_1 \neq 0$, then for conditions 1 and 2, we can find $\epsilon >0 , \tau >0, k \in \mathbb{Z}$ so that for $|i - j| = k$, we have $\left( \lambda_2 - \lambda_1  d(\sigma_i,\sigma_j)  \right)^2 \geq \epsilon$ and $e^{- 2\theta_1^* d(\sigma_i,\sigma_j)} \geq \tau$. This concludes the proof in the case $\lambda_1 \neq 0$. The proof in the case $\lambda_1 = 0$ can then be obtained by considering the pairs $(j,j+1)$ in the above display.

\end{proof}

With these lemmata we are ready to prove the main asymptotic results.

\subsection{Proof of Theorem \ref{consis}} \label{section_proof_1}
\begin{proof}
\underline{Step 1:}
It suffices to prove that, uniformly in $\Sigma$ where we recall that $\Sigma=(\sigma_1,\cdots,\sigma_n)\in S_{N_n}$,
\begin{equation}\label{E1}
\mathbb{P} \left(
\left.
\sup_{\theta\in \Theta} 
\left|
 (L_\theta-L_{\theta^*})-(\E(L_\theta | \Sigma)-\E(L_{\theta^*} | \Sigma) )
  \right| 
 \geq \epsilon 
\right|
 \Sigma 
 \right)
 \to_{n \to \infty} 0,
\end{equation}
and that there exists $a > 0$ such that
\begin{equation}\label{E2}
\E(L_\theta|\Sigma)-\E(L_{\theta^*}|\Sigma)\geq a \frac{1}{n}\sum_{i,j=1}^n(K_\theta(\sigma_i,\sigma_j)-K_{\theta^*}(\sigma_i,\sigma_j))^2.
\end{equation}
Indeed, by contradiction, assume that we have \eqref{E1}, \eqref{E2} but not the consistency of the maximum likelihood estimator.
We will use a subsequence argument and thus we explicit here the dependence on $n$ of the likelihood function (resp. the estimated parameter) writing it $L_{n,\theta}$ (resp. $\widehat{\theta}_n$). Then,
\begin{equation}
\exists \epsilon>0,\;\exists \alpha >0,\;\forall n \in \N,\; \exists m_n\geq n,\;\PP(\|\widehat{\theta}_{m_n}-\theta^*\|\geq \epsilon)\geq \alpha.
\end{equation}
Thus, with probability at least $\alpha$, we have, for all $n$:\\
$\|\widehat{\theta}_{m_n}-\theta^*\|\geq \epsilon$ thus $\inf_{\|\theta-\theta^*\|\geq  \epsilon}L_{m_n,\theta}\leq L_{m_n,\widehat{\theta}_{m_n}}$.\\
However, by definition of $\widehat{\theta}_{m_n}$, we have $ L_{m_n,\widehat{\theta}_{m_n}}\leq L_{m_n,\theta^*}$.\\
Thus: $\inf_{\|\theta-\theta^*\|\geq  \epsilon}L_{m_n,\theta}\leq  L_{m_n,\theta^*}$.\\
Finally, with probability at least $\alpha$:
\begin{eqnarray*}
 0&\geq &\inf_{\| \theta-\theta^* \|\geq \epsilon}\left( L_{m_n,\theta}-L_{m_n,\theta^*} \right)\\
 &\geq & \inf_{\| \theta-\theta^* \|\geq \epsilon}\E \left(L_{m_n,\theta}-L_{m_n,\theta^*}\middle| \Sigma \right)\\
 & & -\sup_{\| \theta-\theta^* \|\geq \epsilon}\left|(L_{m_n,\theta}-L_{m_n,\theta^*})-(\E(L_{m_n,\theta}-L_{m_n,\theta^*}|\Sigma )\right|\\
 & \geq &  \inf_{\| \theta-\theta^* \|\geq \epsilon} a |R_\theta-R_{\theta^*}|^2 -\sup_{\| \theta-\theta^* \|\geq \epsilon}\left|(L_{m_n,\theta}-L_{m_n,\theta^*})-(\E(L_{m_n,\theta}-L_{m_n,\theta^*}|\Sigma )\right|, 
\end{eqnarray*}
using \eqref{E2}, which is contradicted using \eqref{E1} and recalling Lemma \ref{lem:extra}. In the above display, we recall that the norm $| \cdot |$ for matrices is defined at the beginning of Appendix \ref{s:AB}.
It remains to prove \eqref{E1} and \eqref{E2}. \\

\underline{Step 2:} We prove \eqref{E1}.\\
For all $\sigma \in (S_{N_n})^n$ satisfying Conditions 1 and 2, recalling that $\|\cdot \|_F^2$ and $\|\cdot \|$ are defined at the beginning of Appendix \ref{s:AB},
\begin{eqnarray*}
\V(L_\theta| \Sigma=\sigma)&=&\V\left(\frac{1}{n}\det(R_\theta)+\frac{1}{n}y^T R_\theta^{-1} y   | \Sigma=\sigma\right)\\
&=&\frac{2}{n^2}\mbox{Tr}(R_{\theta^*} R_{\theta}^{-1}R_{\theta^*} R_{\theta}^{-1})\\
&= &\frac{2}{n^2}\left \| R_{\theta^*}^{\frac{1}{2}}R_{\theta^*}^{-1}R_{\theta^*}^{\frac{1}{2}}\right\|_F^2.
\end{eqnarray*}
The previous display holds true because, with $R_{\theta^*}^{\frac{1}{2}}$, the unique matrix square root of $R_{\theta^*}$, we have
$$
\mbox{Tr}(R_{\theta^*} R_{\theta}^{-1}R_{\theta^*} R_{\theta}^{-1})=\mbox{Tr}\left[ \left( R_{\theta^*}^{\frac{1}{2}}R_{\theta^*}^{-1}R_{\theta^*}^{\frac{1}{2}}\right)^T  \left( R_{\theta^*}^{\frac{1}{2}}R_{\theta^*}^{-1}R_{\theta^*}^{\frac{1}{2}}\right) \right]=\left \| R_{\theta^*}^{\frac{1}{2}}R_{\theta^*}^{-1}R_{\theta^*}^{\frac{1}{2}}\right\|_F^2.
$$
Then, we have the relation $\| AB\|_F^2\leq \|A\|^2 \|B\|_F^2$.
Thus, we have
\begin{eqnarray*}
\V(L_\theta| \Sigma=\sigma)&\leq & \frac{2}{n^2} \left \| R_{\theta^*}^{\frac{1}{2}}R_{\theta^*}^{-1}R_{\theta^*}^{\frac{1}{2}}\right\|_F^2\\
& \leq & \frac{2}{n^2} \left \| R_{\theta^*}^{\frac{1}{2}}\right\|^2 \left\|R_{\theta^*}^{-1}\right\|_F^2 \left\|R_{\theta^*}^{\frac{1}{2}}\right\|^2\\
 & \leq & \frac{2}{n^2}\|R_{\theta^*}^{\frac{1}{2}}\|^4  n\| R_{\theta}^{-1}\|^2\\
&\leq & \frac{2}{n} \|R_{\theta^*}\|^2  \| R_{\theta}^{-1}\|^2.
\end{eqnarray*}
Hence, we have $$
\V(L_\theta| \Sigma=\sigma) \leq \frac{C}{n},
$$
where $C>0$ is some constant independent on $n,\theta$ and $\Sigma$, using Lemmas \ref{V.4} and \ref{A.5} (Lemmas \ref{V.4} to \ref{V.8} are stated and proved in Appendix \ref{section_lemmas}).
Thus, for all $\sigma$,
$$\V(L_\theta| \Sigma=\sigma)=\E\left( (L_\theta-\E(L_\theta|\Sigma=\sigma))^2|\Sigma=\sigma   \right) \leq  \frac{C}{n},$$
so
$$\E\left( (L_\theta-\E(L_\theta|\Sigma=\sigma))^2 \right) \leq  \frac{C}{n},$$
thus $L_\theta - \E(L_\theta|\Sigma)=o_\PP(1).$
Let us write $z:=R_\theta^{-\frac{1}{2}}y$. For $i \in \{1,2,3\}$,
\begin{eqnarray*}
\sup_{\theta\in \Theta} \left|\frac{\partial L_\theta }{\partial \theta_i}  \right| &=&   \sup_{\theta\in \Theta}  \frac{1}{n}\left( \mbox{Tr}\left(R_{\theta}^{-1}  \frac{\partial R_\theta}{\partial \theta_i}\right)+z^T R_{\theta^*}^\frac{1}{2} R_\theta^{-1} \frac{\partial R_\theta}{\partial \theta_i}R_\theta^{-1}  R_{\theta^*}^\frac{1}{2} z    \right)\\
&\leq &\sup_{\theta\in \Theta} \left(\max\left(\|R_\theta^{-1} \|\left\|   \frac{\partial R_\theta}{\partial \theta_i}  \right\|,\| R_{\theta^*} \| \|R_\theta^{-2}\|  \left\|  \frac{\partial R_\theta}{\partial \theta_i}  \right\|  \right)  \right) \left(1+\frac{1}{n}\|z\|^2  \right).
\end{eqnarray*}
Here, we have used $z^TAz \leq \|z\|^2 \|A\|$ for a symmetric positive definite matrix $A$ , the fact that $\|A B\| \leq \|A \| \|B\|$ for matrices $A$ and $B$, and the fact that, by Cauchy-Schwarz,
$$
\mbox{Tr}(AB) \leq \|A\|_F \|B\|_F \leq n \|A \| \|B\|.
$$
Hence, $\sup_{\theta\in \Theta} \left|\frac{\partial L_\theta }{\partial \theta_i}  \right|$ is bounded in probability conditionally to $\Sigma=\sigma$, uniformly in $\sigma$. Indeed $z\sim \mathcal{N}(0,I_n)$ thus $1\slash n \;\|z\|^2$ is bounded in probability, conditionally to $\Sigma$ and uniformly in $\Sigma$. 

Then $\sup_{i\in [1:3],\theta\in \Theta} \left|\frac{\partial L_\theta }{\partial \theta_i}  \right|$ is bounded in probability.\\
Thanks to the pointwise convergence and the boundedness of the derivatives, we have
\begin{equation}\label{eq_step1}
\sup_{\theta \in \Theta} |L_\theta - \E(L_\theta)|=:r_1,
\end{equation}
where $r_1$ depends on $\Sigma$ and, for all $\varepsilon>0$, $\PP(|r_1|>\varepsilon)\underset{n\to +\infty}{\longrightarrow}0$ uniformly in $\Sigma$.
Hence,
$$
\sup_{\theta \in \Theta} |L_\theta - \E(L_\theta|\Sigma)|+| L_{\theta^*} - \E(L_{\theta^*}|\Sigma)|=:r_2,
$$
where $r_2$ depends on $\Sigma$ and, for all $\varepsilon>0$, $\PP(|r_2|>\varepsilon)\underset{n\to +\infty}{\longrightarrow}0$ uniformly in $\Sigma$.
Now, let us write $D_{\theta,\theta^*}:=\E(L_\theta|\Sigma)-\E(L_{\theta^*}|\Sigma)$. Thanks to \eqref{eq_step1},
\begin{equation}
\sup_{\theta \in \Theta}|L_{\theta}-L_{\theta^*}-D_{\theta,\theta^*}| \leq \sup_\theta |L_\theta - \E(L_\theta|\Sigma)|+| L_{\theta^*} - \E(L_{\theta^*}|\Sigma)|.
\end{equation}
Thus
$$
\sup_{\theta \in \Theta}|L_{\theta}-L_{\theta^*}-D_{\theta,\theta^*}| =:r_3,
$$
where $r_3$ depends on $\Sigma$ and, for all $\varepsilon>0$, $\PP(|r_3|>\varepsilon)\underset{n\to +\infty}{\longrightarrow}0$ uniformly in $\Sigma$.\\

\underline{Step 3:} We prove \eqref{E2}.\\
We have
\begin{eqnarray*}
\E(y^TR_\theta y|\Sigma)=\E(\mbox{Tr}(y^TR_\theta y)|\Sigma)=\E(\mbox{Tr}(R_\theta y y^T)|\Sigma))=\mbox{Tr}(R_\theta \E(y^Ty)).
\end{eqnarray*}
Thus
\begin{equation}
\E(L_\theta|\Sigma)=\frac{1}{n} \ln(\det(R_\theta))+\frac{1}{n}\mbox{\mbox{Tr}}(R_{\theta}^{-1}R_{\theta^*}),
\end{equation}
Let us write $\phi_1(M),\cdots,\phi_n(M)$ the eigenvalues of a symmetric $n\times n$ matrix $M$. We have
\begin{eqnarray*}
D_{\theta,\theta^*}&=&\frac{1}{n}\ln(\det(R_\theta))+\frac{1}{n}\mbox{Tr}(R_\theta^{-1}R_{\theta^*})-\frac{1}{n}\ln(\det(R_{\theta^*}))-1\\
&=&  \frac{1}{n}\left(-\ln\left((\det(R_\theta^{-1})\det(R_{\theta^*})\right)+ \mbox{Tr}(R_{\theta}^{-1}R_{\theta^*})-1  \right)  \\
&=& \frac{1}{n}\left(-\ln\left((\det(R_{\theta^*}^\frac{1}{2} R_{\theta}^{-1}R_{\theta^*}^\frac{1}{2})\right)+ \mbox{Tr}(R_{\theta^*}^\frac{1}{2} R_{\theta}^{-1}R_{\theta^*}^\frac{1}{2})-1  \right)  \\
&=&\frac{1}{n}\sum_{i=1}^n\left(-\ln \left[\phi_i\left(R_{\theta^*}^\frac{1}{2}R_\theta^{-1}R_{\theta^*}^\frac{1}{2}  \right)  \right] +\phi_i\left(R_{\theta^*}^\frac{1}{2}R_\theta^{-1}R_{\theta^*}^\frac{1}{2}  \right)-1   \right).
\end{eqnarray*}
Thanks to Lemmas \ref{A.5} and \ref{lem:extra}, the eigenvalues of $R_{\theta}$ and $R_{\theta}^{-1}$ are uniformly bounded in $\theta$ and $\Sigma$. Thus, there exist $a>0$ and $b>0$ such that for all $\sigma$, $n$ and $\theta$, we have
$$
\forall i,\; a<\phi_i\left(R_{\theta^*}^\frac{1}{2}R_\theta R_{\theta^*}^\frac{1}{2} \right)<b.$$
Let us define $f(t):=-\ln(t)+t-1$. The function $f$ is minimal in $1$ and $f'(1)=0$ and $f''(1)=1$. So there exists $A>0$ such that for all $t\in [a,b]$, $f(t)\geq A(t-1)^2$. Finally:
\begin{eqnarray*}
D_{\theta,\theta^*} & \geq & \frac{A}{n}\sum_{i=1}^n\left(1-\phi_i(R_{\theta^*}^\frac{1}{2}R_\theta^{-1}R_{\theta^*}^\frac{1}{2})  \right)^2\\
& = &  \frac{A}{n}\mbox{Tr}\left[\left(I_n-R_{\theta^*}^\frac{1}{2}R_\theta^{-1}R_{\theta^*}^\frac{1}{2}  \right)^2\right]\\
&=&\frac{A}{n}\mbox{Tr}\left[\left(  R_\theta^{-\frac{1}{2}}(R_\theta-R_{\theta^*}) R_\theta^{-\frac{1}{2}}   \right)^2   \right]\\
&=& \frac{A}{n} \left\| R_\theta^{-\frac{1}{2}}(R_\theta-R_{\theta^*}) R_\theta^{-\frac{1}{2}}   \right\|_F^2,
\end{eqnarray*}
where we have used $\mbox{Tr}(AA^T)=\|A\|_F^2$ for a square matrix $A$. Furthermore, with $\lambda_{\min}(A)$ the smallest eigenvalue of a symmetric matrix $A$, for any squared matrix $B$, we have $\|A B\|_F^2 \geq \lambda_{\min}^2(A)\|B\|^2$. This yields 
\begin{eqnarray*}
D_{\theta,\theta^*} &\geq & \frac{A}{n} \left\| R_\theta-R_{\theta^*} \right\|_F^2 \lambda_{\min}^2\left( R_\theta^{-\frac{1}{2}}\right) \lambda_{\min}^2\left( R_\theta^{-\frac{1}{2}}\right)\\
& \geq & a |R_\theta-R_{\theta^*} |^2,
\end{eqnarray*}
by Lemma \ref{V.4}, writing $a=A \theta_{3,\max}^{-2}$, and recalling that $|A|^2=\frac{1}{n}\|A\|_F^2$ for a matrix $A$.
\end{proof}

\subsection{Proof of Theorem \ref{gaussien}}\label{section_proof_2}

\begin{proof}
First, we prove \eqref{22}. For all $(\lambda_1,\lambda_2,\lambda_3) \in \R^3$ such that $\| (\lambda_1,\lambda_2,\lambda_3)\|=1$, we have
\begin{eqnarray*}
\sum_{i,j=1}^3 \lambda_i \lambda_j(M_{ML})_{i,j}&=&\frac{1}{2n}\mbox{Tr}\left(R_{\theta^*}^{-1}\left( \sum_{i=1}^3 \lambda_i \frac{\partial R_{\theta^*}}{\partial \theta_i}\right)  R_{\theta^*}^{-1}\left( \sum_{j=1}^3 \lambda_j \frac{\partial R_{\theta^*}}{\partial \theta_j}\right) \right)\\
&=&\frac{1}{2n}\mbox{Tr}\left(R_{\theta^*}^{-\frac{1}{2}}\left( \sum_{i=1}^3 \lambda_i \frac{\partial R_{\theta^*}}{\partial \theta_i}\right) R_{\theta^*}^{-\frac{1}{2}}R_{\theta^*}^{-\frac{1}{2}}\left( \sum_{j=1}^3 \lambda_j \frac{\partial R_{\theta^*}}{\partial \theta_j}\right)R_{\theta^*}^{-\frac{1}{2}} \right)\\
&=& \frac{1}{2n}\left\| R_{\theta^*}^{-\frac{1}{2}}\left( \sum_{i=1}^3 \lambda_i \frac{\partial R_{\theta^*}}{\partial \theta_i}\right) R_{\theta^*}^{-\frac{1}{2}} \right\|_F^2,
\end{eqnarray*}
where we have used $\mbox{Tr}(AA^T)=\|A\|_F^2$ for a square matrix $A$. Furthermore, using $\|A B\|_F^2 \geq \lambda_{\min}^2(A)\|B\|^2$ when $A$ is symmetric, we obtain 
\begin{eqnarray*}
\sum_{i,j=1}^3 \lambda_i \lambda_j(M_{ML})_{i,j}&\geq & \frac{1}{2n}\lambda_{\min}^2\left( R_{\theta^*}^{-\frac{1}{2}}\right)  \left\|\left( \sum_{i=1}^3 \lambda_i \frac{\partial R_{\theta^*}}{\partial \theta_i}\right)\right\|_F^2  \lambda_{\min}^2\left( R_{\theta^*}^{-\frac{1}{2}} \right)\\
&=& \frac{1}{2 \theta_{3,\max}^2 }\left| \left( \sum_{i=1}^3 \lambda_i \frac{\partial R_{\theta^*}}{\partial \theta_i}\right) \right|^2,
\end{eqnarray*}
using Lemma \ref{V.4} and where we recall that $\frac{1}{n}\| \cdot \|_F^2=| \cdot |$, see the beginning of Appendix \ref{s:AB}.
Hence, from Lemma \ref{V.8}, there exists $C_{\min}>0$ such that
\begin{equation}
\liminf_{n\rightarrow \infty}\lambda_{\min}(M_{ML})\geq C_{\min}.
\end{equation}
Moreover, we have, using similar manipulations of norms on matrices above, and using $|\mbox{Tr}(AB)|\leq \|A\|_F \|B\|_F$ from Cauchy-Schwarz,
\begin{eqnarray*}
|(M_{ML})_{i,j}|&=&\left|\frac{1}{2n}\mbox{Tr}\left( R_{\theta^*}^{-1}\frac{\partial R_{\theta^*}}{\partial \theta_i}  R_{\theta^*}^{-1}\frac{\partial R_{\theta^*}}{\partial \theta_j} \right) \right|\\
& \leq & \frac{1}{2n}\left\| R_{\theta^*}^{-1}\frac{\partial R_{\theta^*}}{\partial \theta_i}\right\|_F \left\| R_{\theta^*}^{-1}\frac{\partial R_{\theta^*}}{\partial \theta_j}\right\|_F\\
&\leq &\frac{1}{2}\left\| R_{\theta^*}^{-1}\frac{\partial R_{\theta^*}}{\partial \theta_i}\right\| \left\| R_{\theta^*}^{-1}\frac{\partial R_{\theta^*}}{\partial \theta_j}\right\|\\
&\leq & \frac{1}{2}\|R_{\theta^*}^{-1}\|^2 \left\|\frac{\partial R_{\theta^*}}{\partial \theta_i}\right\| \left\| \frac{\partial R_{\theta^*}}{\partial \theta_j} \right\| \\
&\leq & C_{\max},
\end{eqnarray*}
for some $C_{\max}<\infty$, from Lemmas \ref{V.4} and \ref{A.5}.
Using Gershgorin circle theorem \cite{izvestija/gerschgorin31}, we obtain
\begin{equation}
\limsup_{n\rightarrow \infty}\lambda_{\max}(M_{ML})<+\infty,
\end{equation}
that concludes the proof of \eqref{22}.
\bigskip

By contradiction, let us now assume that
\begin{equation}
\sqrt{n} M_{ML}^{\frac{1}{2}}\left(\widehat{\theta}_{ML}-\theta^* \right)\; \mathrel{\cancel{\overset{\mathcal{L}}{\underset{n\rightarrow +\infty}{\longrightarrow}}}}\mathcal{N}(0,I_3).
\end{equation}
Then, there exists a bounded measurable function $g:\R^3\rightarrow \R$, $\xi>0$ such that, up to extracting a subsequence, we have:
\begin{equation}\label{nonconv}
\left| \E\left[g\left(\sqrt{n} M_{ML}^{\frac{1}{2}}(\widehat{\theta}_{ML} -\theta^*   \right)  \right]-\E(g(U))   \right|  \geq \xi,
\end{equation}
with $U\sim \mathcal{N}(0,I_3)$. The rest of the proof consists in contradicting \eqref{nonconv}.

As $0<C_{\min}\leq\lambda_{\min} (M_{ML}) \leq \lambda_{\max}(M_{ML})\leq C_{\max}$, up to extracting another subsequence, we can assume that:
\begin{equation}\label{Minfty}
M_{ML}\underset{n\rightarrow \infty}{\longrightarrow}M_\infty,
\end{equation}
with $\lambda_{\min}(M_\infty)>0$.
\bigskip

We have:
\begin{equation}
\frac{\partial }{\partial \theta_i}L_\theta=\frac{1}{n}\left( \mbox{Tr}\left(R_\theta^{-1} \frac{\partial R_\theta}{\partial \theta_i}   \right)-y^TR_\theta^{-1}\frac{\partial R_\theta}{\partial \theta_i} R_\theta^{-1}y \right).
\end{equation}
Let $\lambda=(\lambda_1\; \lambda_2\;\lambda_3)^T\in \R^3$. For a fixed $\sigma$, denoting $\sum_{k=1}^3 \lambda
_k R_{\theta^*}^{-\frac{1}{2}}\frac{\partial R_{\theta^*}}{\partial \theta_k}R_{\theta^*}^{-\frac{1}{2}}=P^TDP$ with $P^TP=I_n$ and $D$ diagonal, $z_\sigma=PR_{\theta^*}^{-\frac{1}{2}}y$ (which is a vector of i.i.d. standard Gaussian variables, conditionally to $\Sigma=\sigma$), we have, letting $\phi_1(A),\cdots,\phi_n(A)$ be the eigenvalues of a $n\times n$ symmetric matrix A,
\begin{eqnarray*}
\sum_{k=1}^3\lambda_k \sqrt{n}\frac{\partial }{\partial \theta_k}L_{\theta^*}&=&\frac{1}{\sqrt{n}}\left[\mbox{Tr}\left(\sum_{k=1}^3\lambda_k  R_{\theta^*}^{-1} \frac{\partial R_{\theta^*}}{\partial \theta_k}   \right)-\sum_{i=1}^n\phi_i\left(\sum_{k=1}^3\lambda_k  R_{\theta^*}^{-\frac{1}{2}}\frac{\partial R_{\theta^*}}{\partial \theta_k}R_{\theta^*}^{-\frac{1}{2}} \right)z_{\sigma,i}^2   \right]\\
&=&\frac{1}{\sqrt{n}}\left[\sum_{i=1}^n\phi_i\left(\sum_{k=1}^3\lambda_k  R_{\theta^*}^{-\frac{1}{2}}\frac{\partial R_{\theta^*}}{\partial \theta_k}R_{\theta^*}^{-\frac{1}{2}} \right)(1-z_{\sigma,i}^2)   \right].
\end{eqnarray*}
Hence, we have
\begin{eqnarray*}
\V\left(\sum_{k=1}^3\lambda_k \sqrt{n}\frac{\partial }{\partial \theta_k}L_{\theta^*} \middle|\Sigma \right)&=&\frac{2}{n}\sum_{i=1}^n\phi_i^2\left(\sum_{k=1}^3\lambda_k  R_{\theta^*}^{-\frac{1}{2}}\frac{\partial R_{\theta^*}}{\partial \theta_k}R_{\theta^*}^{-\frac{1}{2}} \right)\\
&=&\frac{2}{n}\sum_{k,l=1}^3\lambda_k \lambda_l \mbox{Tr}\left(\frac{\partial R_{\theta^*}}{\partial \theta_k}R_{\theta^*}^{-1}\frac{\partial R_{\theta^*}}{\partial \theta_l}R_{\theta^*}^{-1}  \right)\\
&=&\lambda^T(4M_{ML})\lambda \underset{n\rightarrow \infty}{\longrightarrow} \lambda^T(4 M_\infty)\lambda.
\end{eqnarray*}
Hence, for almost every $\sigma$, we can apply the Lindeberg-Feller criterion to the variables\\ $\frac{1}{\sqrt{n}}\phi_i\left(\sum_{k=1}^3 \lambda_k R_{\theta^*}^{-\frac{1}{2}}\frac{\partial R_{\theta^*}}{\partial \theta_k}R_{\theta^*}^{-\frac{1}{2}} \right)(1-z_{\sigma,i}^2)$ to show that, conditionally to $\Sigma=\sigma$, $\sqrt{n}\frac{\partial }{\partial \theta}L_{\theta^*}$ converges in distribution to $\mathcal{N}(0,4M_\infty)$.\\

Then, using the dominated convergence theorem on $\Sigma$, we show that:
\begin{equation}
\E\left( \exp\left(i\sum_{k=1}^3 \lambda_k   \sqrt{n}\frac{\partial }{\partial \theta_k}L_{\theta^*} \right)  \right) \underset{n\rightarrow \infty}{\longrightarrow} \exp\left( -\frac{1}{2}\lambda^T (4M_\infty)\lambda  \right).
\end{equation}
Finally,
\begin{equation}\label{deriv_gaussien}
\sqrt{n}\frac{\partial }{\partial \theta}L_{\theta^*} \overset{\mathcal{L}}{\underset{n\rightarrow\infty}{\longrightarrow}}\mathcal{N}(0,4M_\infty).
\end{equation}
Let us now compute
\begin{eqnarray*}
\frac{\partial^2}{\partial \theta_i \partial \theta_j}L_{\theta^*}&=&\frac{1}{n}\mbox{Tr}\left( -R_{\theta^*}^{-1}\frac{\partial R_{\theta^*}}{\partial \theta_i}R_{\theta^*}^{-1}\frac{\partial R_{\theta^*}}{\partial \theta_j}+R_{\theta^*}^{-1}\frac{\partial^2 R_{\theta^*}}{\partial \theta_i \partial \theta_j}\right)\\
&&+\frac{1}{n}y^T\left( 2R_{\theta^*}^{-1}\frac{\partial R_{\theta^*}}{\partial \theta_i}R_{\theta^*}^{-1} \frac{\partial R_{\theta^*}}{\partial \theta_j}R_{\theta^*}^{-1}-R_{\theta^*}^{-1}\frac{\partial^2 R_{\theta^*}}{\partial \theta_i \partial \theta_j}R_{\theta^*}^{-1} \right)y.
\end{eqnarray*}
Thus, we have,
\begin{equation}
\E\left(\frac{\partial^2}{\partial \theta_i \partial \theta_j}L_{\theta^*}\right)\underset{n\rightarrow+\infty}{\longrightarrow}(2M_\infty)_{i,j},
\end{equation}
and, using Lemmas \ref{V.4} and \ref{A.5}, and proceeding similarly as in the proof of Theorem \ref{consis},
\begin{equation}
\V\left( \frac{\partial^2}{\partial \theta_i \partial \theta_j}L_{\theta^*}\middle|\Sigma \right)\underset{n\rightarrow +\infty}{\longrightarrow}0.
\end{equation}
Hence, a.s.
\begin{equation}\label{ddproba}
\frac{\partial^2}{\partial \theta_i \partial \theta_j}L_{\theta^*}\overset{\PP_{|\Sigma}}{\underset{n\rightarrow +\infty}{\longrightarrow}} 2(M_{\infty})_{i,j}.
\end{equation}
Moreover, $\frac{\partial^3}{\partial \theta_i \partial \theta_j \partial \theta_k}L_\theta$ can be written as
\begin{equation}
\frac{1}{n}\mbox{Tr}(A_\theta)+\frac{1}{n}y^TB_\theta y,
\end{equation}
where $A_\theta$ and $B_\theta$ are sums and products of the matrices $R_\theta^{-1}$ or $\frac{\partial ^{|\beta|}}{\partial \theta^\beta}$ with $\beta \in [0:3]^3$. Hence, from Lemmas \ref{V.4} and \ref{A.5}, we have
\begin{equation}\label{ddd}
\sup_{\theta \in \Theta}\left\|\frac{\partial^3}{\partial \theta_i \partial \theta_j \partial \theta_k} L_\theta \right\|=O_{\PP_{|\Sigma}}(1).
\end{equation}
We know that, for $k\in \{1,2,3\}$, from a Taylor expansion,
$$
0=\frac{\partial}{\partial \theta_k}L_{\widehat{\theta}_{ML}}=\frac{\partial}{\partial \theta_k} L_{\theta^*}+\left( \frac{\partial}{\partial \theta}\frac{\partial}{\partial \theta_k}L_{\theta^*}\right)^T(\widehat{\theta}_{ML}-\theta^*)+r_k
$$
with some random $r_k$, such that
$$
|r_k|\leq C \sup_{\theta\in \Theta,i,j}\left| \frac{\partial^3 L_\theta}{\partial \theta_i \partial\theta_j\partial \theta_k} \right| \|\widehat{\theta}_{ML}-\theta^*\|^2,
$$
where $C$ is a finite constant that come from the equivalence of norms for $3\times 3$ matrices.
Hence, from \eqref{ddd}, $r_k=o_{\PP_{|\Sigma}}(|\widehat{\theta}_{ML}-\theta^*|)$. We then have, with $\frac{\partial^2}{\partial \theta^2} L_{\theta^*}$ the $3\times 3$ Hessian matrix of $L_{\theta}$ at $\theta^*$,
$$
-\frac{\partial}{\partial \theta} L_{\theta^*}=\left[\left( \frac{\partial^2}{\partial \theta^2}L_{\theta^*}\right)^T+o_{\PP_{|\Sigma}}(1) \right]\left(  \widehat{\theta}_{ML}-\theta^* \right),
$$
an so
\begin{equation}
\left(  \widehat{\theta}_{ML}-\theta^* \right)=-\left[\left( \frac{\partial}{\partial \theta}\frac{\partial}{\partial \theta}L_{\theta^*}\right)^T+o_{\PP_{|\Sigma}}(1) \right]^{-1}\frac{\partial}{\partial \theta_k} L_{\theta^*}.
\end{equation}
Hence, using Slutsky lemma, \eqref{ddproba} and \eqref{deriv_gaussien}, a.s.
\begin{equation}
\sqrt{n}\left(  \widehat{\theta}_{ML}-\theta^* \right)\overset{\mathcal{L}_{|\Sigma}}{\underset{n\rightarrow +\infty}{\longrightarrow}}\mathcal{N}\left(0,(2M_\infty)^{-1} (4M_\infty)(2M_\infty)^{-1}\right)=\mathcal{N}\left(0,M_\infty^{-1}\right).
\end{equation}
Moreover, using \eqref{Minfty}, we have
\begin{equation}
\sqrt{n} M_{ML}^{\frac{1}{2}}\left(\widehat{\theta}_{ML}-\theta^* \right)\; {\overset{\mathcal{L}_{|\Sigma}}{\underset{n\rightarrow +\infty}{\longrightarrow}}}\mathcal{N}(0,I_3).
\end{equation}
Hence, using dominated convergence theorem on $\Sigma$, we have
\begin{equation}
\sqrt{n} M_{ML}^{\frac{1}{2}}\left(\widehat{\theta}_{ML}-\theta^* \right)\; {\overset{\mathcal{L}}{\underset{n\rightarrow +\infty}{\longrightarrow}}}\mathcal{N}(0,I_3).
\end{equation}
To conclude, we have found a subsequence such that, after extracting,
\begin{equation}\label{ouiconv}
\left| \E\left[g\left(\sqrt{n} M_{ML}^{\frac{1}{2}}(\widehat{\theta}_{ML} -\theta^*  \right)  \right]-\E(g(U))   \right|  \underset{n\to +\infty}{\longrightarrow} 0,
\end{equation}
which is in contradiction with \eqref{nonconv}.
\end{proof}

\subsection{Proof of Theorem \ref{pred}}\label{section_proof_3}
\begin{proof}
Let $\overline{\sigma}_n \in S_{N_n}$. We have:
\begin{equation}
\left|\widehat{Y}_{\widehat{\theta}_{ML}}(\overline{\sigma}_n)-\widehat{Y}_{\theta^*}(\overline{\sigma}_n)\right|\leq  \sup_{\theta \in \Theta}\left\| \frac{\partial }{\partial \theta}\widehat{Y}_\theta (\overline{\sigma}_n)\right\|\;\left\| \widehat{\theta}_{ML}-\theta^* \right\|.
\end{equation}
From Theorem \ref{consis}, it is enough to show that, for $i \in \{1,2,3\}$
\begin{equation}
\left| \sup_{\theta \in \Theta} \frac{\partial }{\partial \theta_i} \hat{Y}_\theta(\overline{\sigma}_n)  \right|=O_\PP(1).
\end{equation}
From a version of Sobolev embedding theorem ($W^{1,4}(\Theta)\hookrightarrow L^\infty(\Theta)$, see Theorem
4.12, part I, case A in \cite{adams_sobolev_2003}), there exists a finite constant $A_\Theta$ depending only on $\Theta$ such that
\begin{eqnarray*}
\sup_{\theta \in \Theta}\left| \frac{\partial}{\partial \theta_i} \widehat{Y}_\theta(\overline{\sigma}_n) \right| & \leq & A_\Theta \int_\Theta \left| \frac{\partial}{\partial \theta_i} \widehat{Y}_\theta(\overline{\sigma}_n) \right|^4 d\theta+A_\Theta \sum_{j=1}^3 \int_\Theta \left| \frac{\partial^2}{\partial \theta_j \partial \theta_i}\widehat{Y}_\theta(\overline{\sigma}_n) \right|^4 d\theta.
\end{eqnarray*}
The rest of the proof consists in showing that these integrals are bounded in probability. We have to compute the derivatives of
$$
\widehat{Y}_\theta(\overline{\sigma}_n)=r_\theta^T(\overline{\sigma}_n)R_\theta^{-1}y
$$
with respect to $\theta$. Thus, we can write these first and second derivatives as weighted sums of $w_\theta^T(\overline{\sigma}_n)W_\theta y$, where $w_\theta(\overline{\sigma}_n)$ is of the form $r_\theta(\overline{\sigma}_n)$ or $\frac{\partial}{\partial \theta_i}r_\theta(\overline{\sigma}_n)$ of $\frac{\partial^2}{\partial \theta_j\theta_i}r_\theta(\overline{\sigma}_n)$ and $W_\theta$ is product of the matrices $R_\theta^{-1}$, $\frac{\partial}{\partial \theta_i}R_\theta$ and $\frac{\partial^2}{\partial \theta_j \theta_i}R_\theta$. It is sufficient to show that
\begin{equation}
\int_\Theta \left|w_\theta^T(\overline{\sigma}_n)W_\theta y\right|^4 d_\theta=O_\PP(1).
\end{equation}
From Fubini-Tonelli Theorem (see \cite{billingsley2013convergence}), we have
$$
\E\left( \int_\Theta \left|w_\theta^T(\overline{\sigma}_n)W_\theta y\right|^4 d_\theta \middle| \Sigma \right)=\int_\Theta \E\left(\left|w_\theta^T(\overline{\sigma}_n)W_\theta y\right|^4\middle| \Sigma \right) d_\theta.
$$
There exists a constant $c$ so that for $X$ a centred Gaussian random variable
$$
\E\left(|X|^4\right)=c\V(X)^{2},
$$
hence
\begin{eqnarray*}
\E\left( \int_\Theta \left|w_\theta^T(\overline{\sigma}_n)W_\theta y\right|^4 d_\theta | \Sigma \right)&=&c\int_\Theta\V \left(w_\theta^T(\overline{\sigma}_n)W_\theta y | \Sigma\right)^2 d_\theta\\
&=&c\int_\Theta\left( w_\theta^T(\overline{\sigma}_n)W_\theta R_\theta^* W_\theta(\overline{\sigma}_n)w_\theta(\overline{\sigma}_n) \right)^2 d_\theta.
\end{eqnarray*}
From Lemma \ref{A.5}, there exists $B<\infty$ such that, a.s.
$$
\sup_{\theta\in \Theta} \|W_\theta R_{\theta^*}W_\theta \|<B.
$$
Thus
\begin{equation}
\E\left( \int_\Theta \left|w_\theta^T(\overline{\sigma}_n)W_\theta y\right|^4 d_\theta \middle| \Sigma\right)\leq B^2 c \int_\Theta \|w_\theta^T(\overline{\sigma}_n)\|^2 d_\theta.
\end{equation}
Finally, for some $\alpha \in [0:2]^3$ such that $|\alpha| \leq 2$, we have
\begin{eqnarray*}
\sup_{\theta \in \Theta} \| w_\theta^T(\overline{\sigma}_n)\|^2&=&\sup_\theta \sum_{i=1}^n \left( \frac{\partial^{|\alpha|}}{\partial \theta^\alpha}K_\theta(\overline{\sigma}_n,\sigma_i)\right)^2.
\end{eqnarray*}
Thus, it suffices to bound this term. Using the proof of Lemma \ref{A.5}, there exists $A<+\infty ,a>0$ such that 
$$
\sup_\theta \left( \frac{\partial^{|\alpha|}}{\partial \theta^\alpha}K_\theta(\overline{\sigma}_n,\sigma_i)\right)^2\leq A \exp\left( -a d(\overline{\sigma}_n,\sigma_i)\right).
$$
Yet, choosing $i^*\in [1:n]$ such that $d(\overline{\sigma}_n,\sigma_{i^*})\leq d(\overline{\sigma}_n,\sigma_i)$ for all $i\in [1:n]$, we have
$$
d(\overline{\sigma}_n,\sigma_i)\geq \frac{1}{2}d(\sigma_i,\sigma_{i^*}).
$$
Thus, we have
\begin{eqnarray*}
\sup_\theta \sum_{i=1}^n \left( \frac{\partial^{|\alpha|}}{\partial \theta^\alpha}K_\theta(\overline{\sigma}_n,\sigma_i)\right)^2 & \leq & A\sum_{i=1}^n \exp\left( -\frac{a}{2}d(\sigma_i,\sigma_{i^*}) \right)\\
& \leq & A\sum_{i=1}^n \exp\left( -\frac{a}{2}|i-i^*|^\beta\right)\\
& \leq & 2 A\sum_{i=0}^{+\infty} \exp\left( -\frac{a}{2}i^\beta\right)\\
& \leq & C.
\end{eqnarray*}
That concludes the proof.
\end{proof}

\end{document}